\def\eqref#1{equation~\ref{#1}}
\def\1{\bm{1}}
\def\vb{{\bm{b}}}
\def\vx{{\bm{x}}}
\def\mW{{\bm{W}}}
\DeclareMathAlphabet{\mathsfit}{\encodingdefault}{\sfdefault}{m}{sl}
\SetMathAlphabet{\mathsfit}{bold}{\encodingdefault}{\sfdefault}{bx}{n}
\def\gK{{\mathcal{K}}}
\def\sN{{\mathbb{N}}}
\def\sR{{\mathbb{R}}}
\def\bA{{\mathbf{A}}}
\def\bB{{\mathbf{B}}}
\def\bC{{\mathbf{C}}}
\def\bM{{\mathbf{M}}}
\def\bP{{\mathbf{P}}}
\def\bR{{\mathbf{R}}}
\def\bS{{\mathbf{S}}}
\def\bU{{\mathbf{U}}}
\def\bV{{\mathbf{V}}}
\def\bX{{\mathbf{X}}}
\def\bY{{\mathbf{Y}}}
\def\bx{{\mathbf{x}}}
\def\bh{{\mathbf{h}}}
\def\be{{\mathbf{e}}}
\def\bg{{\mathbf{g}}}
\def\bv{{\mathbf{v}}}
\def\RR{{\mathbb R}}
\def\SO{{\mathcal{SO}}}
\newcommand{\legendbox}[1]{\textcolor{#1}{\rule{0.8em}{0.8em}}}
\setlist{nolistsep}
\setlist[itemize]{noitemsep, topsep=0pt, leftmargin=2em}
\setlist[enumerate]{noitemsep, topsep=0pt, leftmargin=2em}
\definecolor{alizarin}{RGB}{227,38,54}
\definecolor{ultramarine}{RGB}{18,10,143}
\definecolor{Amaranth}{rgb}{0.9, 0.17, 0.31}
\definecolor{tumgreen}{RGB}{162,173,0}
\definecolor{tumblue}{RGB}{0, 101, 189}
\newcommand\norm[1]{\|#1\|}
\newcommand{\abs}[1]{\left|#1\right|}
\newtheorem{theorem}{Theorem}%[section]
\newtheorem{definition}[theorem]{Definition}
\newtheorem{proposition}[theorem]{Proposition}
\newtheorem{lemma}[theorem]{Lemma}
\def\moverlay{\mathpalette\mov@rlay}
\def\mov@rlay#1#2{\leavevmode\vtop{%
   \baselineskip\z@skip \lineskiplimit-\maxdimen
   \ialign{\hfil$\m@th#1##$\hfil\cr#2\crcr}}}
\newcommand{\charfusion}[3][\mathord]{
    #1{\ifx#1\mathop\vphantom{#2}\fi
        \mathpalette\mov@rlay{#2\cr#3}
      }
    \ifx#1\mathop\expandafter\displaylimits\fi}
\newcommand{\cupdot}{\charfusion[\mathbin]{\cup}{\cdot}}
\title{Adaptive Canonicalization with Application \\  to Invariant Anisotropic Geometric Networks}
\author{Ya-Wei Eileen Lin$^{1, 2}$ and Ron Levie$^{3}$
\\
$^{1}$ Technical University of Munich, School of Computation, Information and Technology \\
$^{2}$ Munich Center for Machine Learning\\
$^{3}$ Technion - Israel Institute of Technology, Faculty of Mathematics
}
\begin{document}
\setlength{\abovedisplayskip}{3pt}
\setlength{\belowdisplayskip}{3pt}
\frenchspacing

\maketitle

\begin{abstract}

Canonicalization is a widely used strategy in equivariant machine learning, enforcing symmetry in neural networks by mapping each input to a standard form. 
Yet, it often introduces discontinuities that can affect stability during training, limit generalization, and complicate universal approximation theorems. 
In this paper, we address this by introducing \emph{adaptive canonicalization}, a general framework in which the canonicalization depends both on the input and the network.
Specifically, we present the adaptive canonicalization based on prior maximization, where the standard form of the input is chosen to maximize the predictive confidence of the network.
We prove that this construction yields continuous and symmetry-respecting models that admit universal approximation properties. 

We propose two applications of our setting: (i) resolving eigenbasis ambiguities in spectral graph neural networks, and (ii) handling rotational symmetries in point clouds. 
We empirically validate our methods on molecular and protein classification, as well as point cloud classification tasks. Our adaptive canonicalization outperforms the three other common solutions to equivariant machine learning: data augmentation, standard canonicalization, and equivariant architectures.

\end{abstract}

\section{Introduction}\label{sec:intro}

Equivariant machine learning \citep{gerken2023geometric, villar2021scalars, han2022geometrically, keriven2019universal} has been accentuated in geometric representation learning \citep{bronstein2017geometric}, motivated by the need to build models that respect symmetry inherent in data.
For example, permutation equivariance in graphs \citep{gilmer2017neural, zaheer2017deep, xu2018powerful}, translation equivariance in images \citep{lecun1998convolutional, cohen2016group}, and SO(3) or SE(3) equivariance for 3D objects and molecules \citep{thomas2018tensor, fuchs2020se, batzner20223, satorras2021n}.
The symmetry is built into the method so that transforming the input induces a predictable transformation of the output. 
This inductive bias reduces sample complexity, curbs overfitting to arbitrary poses, and often improves robustness on distribution shifts where the same object appears in a different orientation or ordering \citep{kondor2018generalization, wang2022approximately, park2022learning, bronstein2021geometric, bietti2019group, kaba2023symmetry}.

There are three principal approaches to handling symmetry in machine learning. 
The first involves designing equivariant architectures \citep{cohen2016steerable, weiler20183d, weiler2019general, geiger2022e3nn, maron2018invariant, lippmann2024beyond}: neural network layers are constructed to commute with the symmetry. 
The second approach is data augmentation, where each datapoint is presented to the model at an arbitrary pose \citep{chen2020group, brandstetter2022lie}.   
The third strategy is canonicalization \citep{kaba2023equivariance, ma2023laplacian, ma2024canonicalization, lim2022sign, lim2023expressive, mondal2023equivariant, lawrence2025improving, sareen2025symmetry, luo2022equivariant}: each input is mapped to a standard form and then processed by a non-equivariant network. 
Another common approach to equivariant machine learning is frame averaging \citep{puny2021frame}, which averages the network’s output over a set of input transformations.

A well-known problem in canonicalization is that in many cases it unavoidably leads to an end-to-end architecture which is discontinuous with respect to the input \citep{dym2024equivariant, zhang2019fspool, lim2022sign}.
This inevitably leads to problems in stability during training and in generalization, as very similar inputs can lead to very different outputs \citep{dym2024equivariant, tahmasebi2025generalization, tahmasebi2025regularity}. 
Moreover, the discontinuity of the network makes universal approximation properties less natural, as one approximates continuous symmetry preserving functions with discontinuous networks \citep{dym2024equivariant, kaba2023equivariance, wagstaff2022universal}.

\vspace{-2mm}
\paragraph{Our Contribution.}  
In this paper, we show that the continuity problem in canonicalization can be solved if, instead of canonicalizing only as a function of the input, one defines a canonicalization that depends both on the input and the network. We propose such a general setting, which we call \emph{adaptive canonicalization}, and show that it leads to continuous end-to-end models that respect the symmetries of the data and have universal approximation properties. 
Our theory does not only lead to superior theoretical properties w.r.t. standard canonicalization, but often also to superior empirical performance, specifically, in molecular,  protein, and point cloud classification.

We focus on a specific class of adaptive canonicalizations that we call \emph{prior maximizers}. To explain these methods, we offer the following illustrative example. Suppose that we would like to train a classifier of images into \emph{cats}, \emph{dogs} and \emph{horses}. Suppose as well that each image $x$ can appear in the dataset in any orientation, i.e., as $\pi(\alpha)x$ for any $\alpha\in[0,2\pi]$ where $\pi(\alpha)$ is rotation by $\alpha$. One standard approach for respecting this symmetry is to design an \emph{equivariant architecture} $\Theta$,  which gives the same class probabilities to all rotations of the same image, i.e., $\Theta(\pi(\alpha)x)=\Theta(\pi(\alpha')x)$ for any to angles $\alpha,\alpha'$. Another simple approach for improving the classifier is to train a symmetryless network $\Psi$, and \emph{augment} the training set with random rotations $\pi(\alpha)x$ for each input $x$. Yet another standard approach is to \emph{canonicalize} the input, namely, to rotate each input image $x$ by an angle $\beta_x$ that depends on $x$ in such a way that all rotated versions of the same image would have the exact same standard form, i.e., $\pi(\beta_{\pi(\alpha)x})\pi(\alpha)x=\pi(\beta_{\pi(\alpha')x})\pi(\alpha')x$ for any two angles $\alpha,\alpha'$. Then, the canonicalized image $\pi(\beta_{x})x$ is plugged into a standard symmetryless neural network $\Psi$, and the end-to-end architecture $\Psi(\pi(\beta_{x})x)$ is guaranteed to be invariant to rotations. We propose a fourth approach, where the canonicalized rotation $\pi(\beta_{x,\Psi})$ depends both on the image $x$ and on the (symmetryless) neural network $\Psi$.

\begin{figure*}[t]
    \centering
     \includegraphics[width=0.9\textwidth]{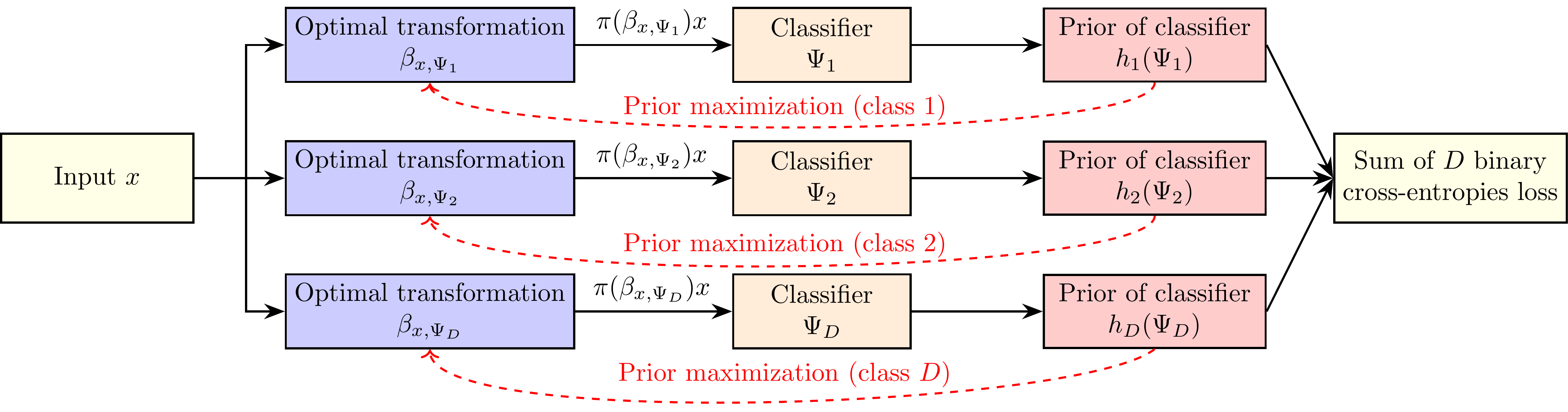}
    \caption{
    Illustration of prior maximization adaptive canonicalization in classification. 
    The adaptive canonicalization optimizes the transformations $\beta_{x,\Psi_j}$ of the inputs $x$ to the classifiers $\Psi_j$, while, during training,  $\Psi_j$ are simultaneously trained w.r.t. the adaptively canonicalized inputs $\pi(\beta_{x,\Psi_j})x$. 
    }
\label{fig:AC_flowchart_general}
    \vspace{-5 mm}
\end{figure*}

To motivate this approach, consider a  neural network $\Psi$ which, by virtue of being symmetriless, may perform better on some orientations of $x$ than others. For illustration,  it is easier for humans to detect an image as a horse if its limbs  point downwards. Suppose that $\Psi(x)=(\Psi_j(x))_{j=1}^3=(\Psi_{dog}(x),\Psi_{cat}(x),\Psi_{horse}(x))$ is a sequence of binary classifiers with values in $[0,1]$ each. The output  of $\Psi$ is defined to be the class with  highest probability.  Suppose moreover that for each $j$, the network $\Psi_j$ is granted the ability to rotate $x$ freely, and probe the output $\Psi_j(\pi(\alpha)x)$ for each $\alpha$. The network then chooses the orientation $\alpha_*$ such that $\Psi_j(\pi(\alpha_*)x)$ is maximized. As an analogy, one can imagine an image on a piece of paper being handed at a random orientation to a person with a visual system $\Psi$. To detect if there is a horse in the image, the person would rotate the paper, searching for an orientation which looks like a horse. Namely, if there is an orientation $\alpha_*$ where $\Psi_{horse}(\pi(\alpha_*)x)$ is high then there is a horse in the image, and otherwise there is none. This process would be repeated for all other classes, and eventually the image would be classified as the $j_*$ such that $\max_{\alpha} \Psi_{j_*}(\pi(\alpha)x)$ is greater than $\max_{\alpha} \Psi_{j}(\pi(\alpha)x)$ for all other $j\neq j_*$. This is the process that we call \emph{prior maximization adaptive canonicalization}. 
This process is inspired by ideas from cognitive psychology, where the human visual system is believed to learn canonical mental models of objects and to discard redundant variation due to symmetries by mentally ``rotating'' perceived stimuli into alignment with these canonical views \citep{shepard1971mental, cooper1973chronometric, tarr1989mental}.
Our example of a person rotating a sheet of paper to recognize whether it contains a horse is directly inspired by this line of work: prior maximization adaptive canonicalization can be viewed as a neural analogue of mental rotation, where the network searches over transformations to align inputs with its learned canonical views.
We note that some previous models in machine learning were also inspired by this process \citep{palmer1981cannonical, harris2001object, graf2006coordinate, gomez2008memory, konkle2011canonical, risko2016cognitive, tacchetti2018invariant, schmidt2024tilt}.

We note that the inputs are adaptively canonicalized also during training, so $\Psi$ needs not learn to respect any symmetry on its own. In fact, $\Psi$ can benefit from being symmetryless. For example, it may search for ``horse head'' patterns only diagonally above  where it detects ``horse limbs'' patterns, and rely on the prior maximization to orient horses accordingly.  

We show that adaptive canonicalization leads to a continuous symmetry preserving end-to-end classifier that can approximate any symmetry preserving continuous function when $\Psi$ are non-equivariant  neural networks. 
As an application, we propose adaptive canonicalization methods for 1) spectral graph neural networks, where the symmetry is in the choice of the eigenbasis of the graph shift operator, and 2) point clouds, with rotation symmetries. We show that adaptive canonicalization  in these cases outperforms both standard canonicalization and equivariant networks, as well as augmentation methods. See Figure~\ref{fig:AC_flowchart_general} for an illustration of prior maximization.

\section{Related Work}

Canonicalization has been studied in several forms. 
For example, in computer vision and geometric deep learning methods, inputs are often first transformed into a standardized pose or reference frame before classification \citep{lowe2004distinctive, jaderberg2015spatial}.
More recent work formalizes this as an explicit canonicalization map feeding a downstream network \citep{lim2022sign, ma2024canonicalization} or as energy-based canonicalization \citep{kaba2023equivariance} in which one learns an energy over group elements and takes the minimizer as the canonical transformation.
The latter has been further developed  on symmetries defined by general Lie group actions \citep{shumaylov2024lie}.
Canonicalization has also been used for data alignment \citep{mondal2023equivariant, schmidt2025robust} and for test-time optimization over transformations, where one searches over group actions to select a canonical representation before downstream inference \citep{singhal2025test, schmidt2024tilt}.
A related line of work is frame averaging \citep{puny2020global}, which averages a network's output over a set of group transformations, and its extension to weighted frame averaging \citep{dym2024equivariant}, where each datapoint is equipped with a probability distribution over the group and averaging is performed with respect to this measure, yielding continuity guarantees. 
In our work, we instead study canonicalization as a function of both the input and the network, and we establish continuity guarantees for symmetry-preserving continuous functions realized by our construction. Moreover, our approach is not restricted to symmetries defined via group actions, and allows working with more general augmentations for transforming datapoints.
We refer to Appendix~\ref{app:related_work} for further discussion and comparison with extended related work.

\section{Adaptive Canonicalization}
\vspace{-1 mm}
In this section, we develop the general theory of adaptive canonicalization, and prove that it leads to continuous symmetry preserving networks with universal approximation properties.

\subsection{Basic Definitions and Background}
\vspace{-1 mm}

The function that maps each input $x$ to the output $f(x)$ is denoted by $x\mapsto f(x)$. The free variable of a univariate function is denoted by $(\cdot)$, and by $(\cdot,\cdot\cdot)$ for a function of two variables. For example, the function $(x,y)\mapsto \sin(x)\exp(y)$ is also denoted by $\sin(\cdot)\exp(\cdot\cdot)$. We denote the infinity norm of $x=(x_d)_{d=1}^D\in\mathbb{R}^D$ by $\abs{x}:=\max_{1\leq d\leq D}\abs{x_d}$. We define the infinity norm of a continuous function $f:\mathcal{K}\rightarrow\mathbb{R}^D$ over a topological space $\mathcal{K}$ by $\norm{f}_{\infty}=\sup_{x\in\mathcal{K}}\abs{f(x)}$. If $\norm{f-y}_{\infty}<\epsilon$ we say that $y$ approximates $f$ uniformly up to error $\epsilon$. The set of all subsets of a set $\mathcal{K}$, i.e., the \emph{power set}, is denoted by $2^{\mathcal{K}}$. When defining general metric spaces, we allow the distance between points to be $\infty$. This does not affect most of the common properties of metric spaces (see \cite{metric_book}).

\paragraph{Function Spaces.}
We develop the definitions of adaptive canonicalization in general locally compact Hausdorff spaces. Two important examples of such a space is a compact metric space or $\mathbb{R}^J$.
\begin{definition}
    Let $\mathcal{K}$ be a locally compact Hausdorff space, and $D\in\mathbb{N}$. 
    \begin{itemize}
        \item A function $f:\mathcal{K}\rightarrow\mathbb{R}^D$ is said to vanish at infinity if for every $\epsilon>0$ there exists a compact set $K\subset\mathcal{K}$ such that $\abs{f(x)}<\epsilon$ for every $x\in\mathcal{K}\setminus K$.
        \item 
        The space of all continuous functions $f:\mathcal{K}\rightarrow\mathbb{R}^D$ that vanish at infinity, with the supremum norm $\norm{f}_{\infty}=\max_{x\in\mathcal{K}}\abs{f(x)}$ is denoted by $C_0(\mathcal{K},\mathbb{R}^D)$.
    \end{itemize}
\end{definition}

In adaptive canonicalization, we consider families of continuous functions where the $\epsilon-\delta$ formulation of continuity is uniform over the whole family, as defined next. 
\begin{definition}
    Let $\mathcal{X}$ and $\mathcal{Y}$ be two metric spaces with metrics $d_{\mathcal{X}}$ and $d_{\mathcal{Y}}$ respectively. A family $\mathcal{F}$ of function $f:\mathcal{X}\rightarrow \mathcal{Y}$ is called \emph{equicontinuous} if for every $x\in \mathcal{X}$ and every $\epsilon>0$, there exists $\delta>0$ such that every $z\in \mathcal{X}$ which satisfies $d_{\mathcal{X}}(x,z)<\delta$ also satisfies
    \[\forall f\in \mathcal{F}: \quad d_{\mathcal{Y}}(f(x),f(z))<\epsilon.\]
\end{definition}

\paragraph{Universal Approximation.}
\emph{Universal approximation theorems (UAT)} state that any continuous function over some topological space can be approximated by a neural network. In such a case, the neural networks are said to be \emph{universal approximators}, as defined next. 

\begin{definition}
\label{def:Uapprox}
    Let $\mathcal{K}$ be a locally compact Hausdorff space and $D\in\mathbb{N}$.  A set of continuous functions $\mathcal{N}(\mathcal{K},\mathbb{R}^D)\subset C_0(\mathcal{K},\mathbb{R}^D)$ is said to be a \emph{universal approximator} of $C_0(\mathcal{K},\mathbb{R}^D)$ if for every $f\in C_0(\mathcal{K},\mathbb{R}^D)$ and $\epsilon>0$ there is a function $\theta\in\mathcal{N}(\mathcal{K},\mathbb{R}^D)$ such that
    \[\forall x\in\mathcal{K}: \quad \abs{f(x)-\theta(x)}<\epsilon.\]
\end{definition}
In the above definition, we interpret $\mathcal{N}(\mathcal{K},\mathbb{R}^D)$ as a space of neural networks. 
A UAT is hence any theorem which shows that some set of neural networks  is a universal approximator. Two examples of UATs are: 1) multilayer perceptrons (MLP) are universal approximators of $C_0(\mathcal{K},\RR^D)$ for compact subset $\mathcal{K}$ of the Euclidean space $\RR^D$ \citep{hornik1989multilayer, cybenko1989approximation}, and 2) DeepSets \citep{zaheer2017deep} are universal approximators of continuous functions from multi-sets to $\RR^d$. See Appendix~\ref{Ap:Universal Approximation Theorems} for more details.

\subsection{Adaptive Canonicalization}
\vspace{-1 mm}

In the general setting of adaptive canonicalization, we have a domain of inputs $\mathcal{G}$ which need not have any structure apart for being a set, e.g., the set of graphs. We consider continuous functions $f:\mathcal{K}\rightarrow \mathbb{R}^L$ over a ``nice'' domain $\mathcal{K}$, e.g., $\mathcal{K}=\RR^J$. Such functions can be approximated by neural networks. We then pull-back $f$ to be a function from $\mathcal{G}$ to $\mathbb{R}^L$ using a mapping $\rho_f:\mathcal{G}\mapsto\mathcal{K}$ that depends on (is adapted to) $f$. Namely, we consider $f(\rho_f(\cdot)):\mathcal{G}\rightarrow \mathbb{R}^D$. The following definitions assure that such a setting leads to functions with nice properties, as we show in subsequent sections.

\begin{definition}
\label{def:adaptive_can}
 Let $\mathcal{K}$ be a locally compact Hausdorff space, $\mathcal{G}$ be a set, and $D\in\mathbb{N}$. 
    A mapping $\rho=\rho_{(\cdot)}(\cdot\cdot):C_0(\mathcal{K},\mathbb{R}^D)\times\mathcal{G}\rightarrow\mathcal{K}$, $(f,g)\mapsto \rho_f(g)$, is called an \emph{adaptive canonicalization}  if the set of functions
    \[\{f\mapsto f\circ\rho_f(g)\ |\ g\in \mathcal{G}\}\]
    is equicontinuous (as functions $C_0(\mathcal{K},\mathbb{R})\rightarrow \mathbb{R}^D$). Here, $f\circ\rho_f(g):=f(\rho_f(g))$.
      
\end{definition}

Next, we define the function space that we would like to approximate using adaptive canonicalization.

\begin{definition}
Let $\mathcal{K}$ be a locally compact Hausdorff space, $\mathcal{G}$ be a set, and $D\in\mathbb{N}$. Let $\rho$ be an adaptive canonicalization, and let  $f\in C_0(\mathcal{K},\mathbb{R}^D)$. The function 
\[ f\circ \rho_f:\mathcal{G}\rightarrow\RR^D, \quad g\mapsto f\big(\rho_f(g)\big)\]
    is called an \emph{adaptive canonicalized continuous function}, or a \emph{canonicalized function} in short.
\end{definition}

In Section~\ref{Symmetry preserving prior minimization}, we show that for an important class of adaptive canonicalizations the set of adaptive canonicalized continuous functions is exactly the set of all symmetry preserving continuous functions.

It is now direct to prove the following universal approximation theorem.

\begin{theorem}[Universal approximation of adaptive canonicalized functions]
\label{thm:adaptiveUAT}
 Let $\mathcal{N}(\mathcal{K},\mathbb{R}^D)$ be a universal approximator of $C_0(\mathcal{K},\mathbb{R}^D)$, and  $f\circ \rho_f$  an adaptive canonicalized continuous function. Then, for every $\epsilon>0$, there exists a network $\theta\in \mathcal{N}(\mathcal{K},\mathbb{R}^D)$ such that for every $g\in \mathcal{G}$
\[\abs{f\circ \rho_f(g)-\theta\circ\rho_{\theta}(g)} < \epsilon.\]
\end{theorem}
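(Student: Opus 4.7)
The plan is to chain two standard facts: the equicontinuity of the canonicalization family supplied by Definition~\ref{def:adaptive_can}, and the sup-norm density of $\mathcal{N}(\mathcal{K},\mathbb{R}^D)$ in $C_0(\mathcal{K},\mathbb{R}^D)$. The whole argument is essentially an $\epsilon/\delta$ chase, because all the nontrivial content has already been packed into the definition of adaptive canonicalization.

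First I would rewrite the hypothesis in the cleanest form. For each $g\in\mathcal{G}$ let $\Phi_g:C_0(\mathcal{K},\mathbb{R}^D)\to\mathbb{R}^D$ be the evaluation-after-canonicalization map $\Phi_g(h):=h(\rho_h(g))$. The assumption that $\rho$ is an adaptive canonicalization is, by Definition~\ref{def:adaptive_can}, exactly the statement that the family $\{\Phi_g\}_{g\in\mathcal{G}}$ is equicontinuous, with $C_0(\mathcal{K},\mathbb{R}^D)$ carrying the sup norm and $\mathbb{R}^D$ carrying $|\cdot|$. Applying equicontinuity at the point $f$ with tolerance $\epsilon$ produces some $\delta>0$ such that for every $h\in C_0(\mathcal{K},\mathbb{R}^D)$ with $\|h-f\|_\infty<\delta$, one has $|\Phi_g(h)-\Phi_g(f)|<\epsilon$ uniformly over all $g\in\mathcal{G}$.

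Next I would invoke universal approximation. Since $\mathcal{N}(\mathcal{K},\mathbb{R}^D)$ is a universal approximator of $C_0(\mathcal{K},\mathbb{R}^D)$, and $f\in C_0(\mathcal{K},\mathbb{R}^D)$, I can pick $\theta\in\mathcal{N}(\mathcal{K},\mathbb{R}^D)$ with $\|\theta-f\|_\infty<\delta$ for the $\delta$ just produced. Substituting $h=\theta$ into the equicontinuity bound yields
$$\bigl|\theta\bigl(\rho_\theta(g)\bigr)-f\bigl(\rho_f(g)\bigr)\bigr|=|\Phi_g(\theta)-\Phi_g(f)|<\epsilon\quad\text{for every }g\in\mathcal{G},$$
which is exactly the desired conclusion.

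No step here is hard; the only thing to verify is that the two hypotheses interface cleanly, and they do because both are formulated with the same sup norm on $C_0(\mathcal{K},\mathbb{R}^D)$. Conceptually, one may also phrase the argument as: adaptive canonicalization is precisely the assertion that the pullback map $\theta\mapsto\theta\circ\rho_\theta$ is continuous from $(C_0(\mathcal{K},\mathbb{R}^D),\|\cdot\|_\infty)$ into the bounded functions $\mathcal{G}\to\mathbb{R}^D$ with the sup norm (equicontinuity over $g\in\mathcal{G}$ upgrades pointwise continuity to uniform continuity in $g$), and the universal approximation hypothesis lets us reach $f$ by elements of $\mathcal{N}(\mathcal{K},\mathbb{R}^D)$ in the domain norm; composition with a continuous map preserves approximation, giving the claim.
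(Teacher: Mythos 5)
Your proof is correct and follows essentially the same argument as the paper: invoke equicontinuity of $\{f\mapsto f\circ\rho_f(g)\}_{g\in\mathcal{G}}$ at $f$ to obtain a uniform-in-$g$ modulus $\delta$, then use universal approximation to find $\theta$ within $\delta$ of $f$ in sup norm. The $\Phi_g$ notation is a cosmetic repackaging of the paper's \eqref{proof:EpsDelta1}; there is no substantive difference.
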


\begin{proof}

Let $\epsilon>0$.
By Definition~\ref{def:adaptive_can}, there exists $\delta>0$ such that 
\begin{equation}
    \label{proof:EpsDelta1}
    \forall  y\in C_0(\mathcal{K},\mathbb{R}^D): \ \norm{f-y}_{\infty}<\delta \Rightarrow \Big(\forall g\in \mathcal{G}: \ \abs{f\circ \rho_f(g) - y\circ \rho_y(g)} < \epsilon\Big).
\end{equation}
    By the universal approximation property, 
    there exists a network $\theta$ such that $\norm{f-\theta}_{\infty}<\delta$. Hence, by (\ref{proof:EpsDelta1}),
    $\quad \forall g\in \mathcal{G}: \ \abs{f\circ \rho_f(g) - \theta\circ \rho_{\theta}(g)} < \epsilon$.
\end{proof}

\subsection{Prior Maximization Adaptive Canonicalization}
\vspace{-1 mm}
Prior maximization is a special case of adaptive canonicalization, where $\rho_f$ is chosen to maximize some prior on the output of $f$. The maximization is done over a space of transformations $\kappa_u:\mathcal{G}\rightarrow\mathcal{K}$ parameteried by $u$, i.e., maximizing the prior of $f(\kappa_u(g))$ w.r.t. $u$. 

Let $\mathcal{U}$ be metric space, and for every $u\in\mathcal{U}$, let 
\[\kappa_{(\cdot)}(\cdot\cdot): \mathcal{U}\times \mathcal{G}\rightarrow \mathcal{K}, \quad  (u,g)\mapsto \kappa_u(g)\in\mathcal{K}.\]
Suppose that $\kappa_u(g)$ is continuous in $u$ for every $g\in \mathcal{G}$. 
We call $\kappa$ a \emph{transformation family}, transforming objects in $\mathcal{G}$ into points in $\mathcal{K}$, where different $u\in\mathcal{U}$ define different transformations.
Let $H=(h_1,\ldots,h_D)$, where $h_d:\mathbb{R}\rightarrow\mathbb{R}$ for each $d$, be a sequence of continuous monotonic functions, that we call the ensemble of \emph{priors}. We call each  $h_d$ a \emph{prior}. We denote $H\circ f:=(h_d\circ f_d)_{d=1}^D$. 

For every $f=(f_1,\ldots,f_D)\in C_0(\mathcal{K},\mathbb{R}^D)$, $g\in \mathcal{G}$ and $d$, assume  that $h_d\circ f_d(\kappa_{(\cdot)}(g))$ attains a maximum in $\mathcal{U}$. This is the case for example when $\mathcal{U}$ is compact. Define 
\begin{equation}
\label{eq:prior_max}
   \rho_f(g) =\big(\rho^d_{f_d}(g)\big)_{d=1}^D :=\Big(\big\{\kappa_{u_*}(g)\ \big|\ h_d\circ f_d\big(\kappa_{u_*}(g)\big)=\max_{u\in\mathcal{U}}  h_d\circ f_d\big(\kappa_u(g)\big)\big\}\Big)_{d=1}^D \in \big(2^{\mathcal{K}}\big)^D. 
\end{equation}
Note that $\rho_f:\mathcal{G}\rightarrow (2^{\mathcal{K}})^D$. By abuse of notation, we also denote by $\rho_f$ the mapping that returns some arbitrary sequence of points $(x_d\in \rho^d_{f_d}(g))_{d=1}^D\in\mathcal{K}^D$ for each $g\in\mathcal{G}$. The choice of the specific point in $\rho^d_{f_d}(g)$ does not affect the analysis. We interpret $\rho_f$ as a function that takes an input $g$ and canonicalize it separately with respect to each output channel $f_d$, adaptively to $f_d$.

When used for classification, we interpret each output channel $f_d\circ\rho_{f_d}^d(g)\in[0,1]$ as a binary classifier, i.e., representing the probability of $g$ being in class $d$ vs. not being in class $d$.  
 This multiclass classification setting is called \emph{one vs. rest}, where a  standard loss is a sum of $D$ binary cross-entropies   \citep{rifkin2004defense, galar2011overview, allwein2000reducing}. 

\begin{definition}
\label{def:prior_max}
    Consider the above setting. The mapping $\rho$ defined by (\ref{eq:prior_max})  is called \emph{prior maximization}. If in addition $\mathcal{G}$ has a metric such that for every $f\in C_0(\mathcal{K},\RR^D)$ the family  $\{g\mapsto f(\kappa_u(g))\}_{u\in \mathcal{U}}$ is equicontinuous, 
    $\rho$ is called \emph{continuous prior maximization}.
\end{definition}
In Theorem~\ref{thm:prio_max_is AC} we show that prior maximization is indeed adaptive canonicalization.

For example, let $\mathcal{U}=\mathcal{SO}(3)$ be the space of 3D rotations, and $\mathcal{G}=\mathcal{K}=\mathcal{B}^N$ the set of sequences of $N$ points in the 3D unit ball $\mathcal{B}$, i.e., the space of point clouds. We consider the rotation $g\mapsto \kappa_u(g)$ of the point cloud $g$ by $u\in\mathcal{U}$. Since $\mathcal{G}$ and $\mathcal{U}$ are compact metric spaces,  and $(g,u)\mapsto \kappa_u(g)$  is continuous, $\kappa$ and $f$ must  be uniformly continuous. Hence, $\{g\mapsto f(\kappa_u(g))\}_{u\in \mathcal{U}}$ is equicontinuous.
 In fact, whenever $\mathcal{G}$ is compact and $\kappa$ continuous w.r.t. $(u,g)$, it is automatically also uniformly continuous, so $\rho$ is a continuous prior maximization. 
 See Appendix~\ref{Additional Examples of Continuous Prior Maximization} for additional examples of continuous prior maximization.

\paragraph{Properties of Prior Maximization.}

\begin{theorem}
\label{thm:prio_max_is AC}
    In prior maximization, each $\rho^d: C_0(\mathcal{K},\RR)\times\mathcal{G}\rightarrow\mathcal{K}$ is adaptive canonicalization.
\end{theorem}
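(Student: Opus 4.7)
The plan is to verify Definition~\ref{def:adaptive_can} directly for each $\rho^d$, and in fact to establish the strictly stronger bound that, for every $g \in \mathcal{G}$ and every pair $f, y \in C_0(\mathcal{K},\mathbb{R})$,
\[\bigl|f(\rho^d_f(g)) - y(\rho^d_y(g))\bigr| \leq \|f - y\|_\infty.\]
This says the family $\{f \mapsto f(\rho^d_f(g))\}_{g \in \mathcal{G}}$ is uniformly $1$-Lipschitz on $C_0(\mathcal{K},\mathbb{R})$, which trivially entails the equicontinuity required by the definition (take $\delta = \varepsilon$), uniform not only in $g$ but also in the base point $f$.

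The key step is a ``swap and compare'' telescoping powered by the monotonicity of the prior $h_d$. Fix $g$ and pick $u^*_f, u^*_y \in \mathcal{U}$ so that $\rho^d_f(g) = \kappa_{u^*_f}(g)$ and $\rho^d_y(g) = \kappa_{u^*_y}(g)$. Because $h_d : \mathbb{R} \to \mathbb{R}$ is monotonic, maximizing $u \mapsto h_d(f(\kappa_u(g)))$ is equivalent to extremizing $u \mapsto f(\kappa_u(g))$ itself -- maximizing if $h_d$ is increasing, minimizing if decreasing -- and analogously for $y$. Treating the increasing case, I would write
\begin{align*}
f(\kappa_{u^*_f}(g)) - y(\kappa_{u^*_y}(g)) = \bigl[f(\kappa_{u^*_f}(g)) - y(\kappa_{u^*_f}(g))\bigr] + \bigl[y(\kappa_{u^*_f}(g)) - y(\kappa_{u^*_y}(g))\bigr],
\end{align*}
observe that the first bracket is at most $\|f-y\|_\infty$ and the second is $\leq 0$ because $u^*_y$ maximizes $y \circ \kappa_{(\cdot)}(g)$, and conclude one side of the absolute value bound. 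Exchanging the roles of $f$ and $y$ yields the other side. The decreasing case is handled identically after flipping the inequalities at the ``maximality'' step.

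The main subtlety is not analytic but bookkeeping: $\rho^d_f(g)$ is genuinely set-valued, and the paper chooses an arbitrary representative. Strict monotonicity of $h_d$ is what guarantees that every element of the argmax set yields the same value of $f$ (and of $y$), so $f(\rho^d_f(g))$ and $y(\rho^d_y(g))$ are independent of the selection, and the argument above applies to any choice of $u^*_f, u^*_y$. Once this is made explicit, the proof uses nothing about $\mathcal{G}$ or $\kappa$ beyond the mere existence of the maximizers posited in the setup around~\eqref{eq:prior_max}; the continuity hypotheses on $\kappa$ and the compactness of $\mathcal{U}$ are needed only for the existence of those maximizers, and the richer ``continuous prior maximization'' conditions of Definition~\ref{def:prior_max} are required only for the subsequent symmetry-preservation results, not for this theorem.
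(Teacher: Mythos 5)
Your argument is correct and runs along essentially the same lines as the paper's: both establish the $1$-Lipschitz estimate $\abs{f\circ\rho_f(g) - y\circ\rho_y(g)} \le \norm{f-y}_\infty$ by evaluating one function at the other's maximizer; your single telescoping decomposition and the paper's two one-sided bounds (eq.~(\ref{eq:close_max})) are cosmetically different presentations of the identical idea.

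One thing you add that the paper leaves implicit is worth keeping. The paper only states that the $h_d$ are ``continuous monotonic,'' then reduces WLOG to $H(x)=x$ and asserts that the choice of point in the argmax set ``does not affect the analysis.'' You correctly observe that both of these steps require \emph{strict} monotonicity: if $h_d$ is constant on an interval straddling $\max_u f(\kappa_u(g))$, the argmax set of $h_d\circ f\circ\kappa_{(\cdot)}(g)$ strictly contains the argmax of $f\circ\kappa_{(\cdot)}(g)$, so different selections yield different values of $f$, and the $1$-Lipschitz estimate (and hence the theorem) fails. Flagging this as a standing hypothesis on the priors is a genuine tightening of the paper's statement, not just bookkeeping. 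Your final remark that compactness of $\mathcal{U}$ and continuity of $\kappa$ are used only to guarantee that the argmax is nonempty is also accurate and clarifies which hypotheses are doing which work.
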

\vspace{-5 mm}
\begin{proof}
    Consider without loss of generality the case where the output dimension is $D=1$. Since the specific values of $H=h_1$ do not matter, only if it is ascending or descending, without loss of generality suppose $H(x)=x$, in which case prior maximization maximizes directly the output of $f\circ \kappa_u(g)$ with respect to $u$.
    Consider an arbitrary maximizer $\rho_f(g)\in {\rm arg}\max_{u\in\mathcal{U}}f(\kappa_u(g))$ for each $f\in C_0(\mathcal{K},\RR)$. The choice of the maximizer does not affect the analysis.
Now, if $f,y\in C_0(\mathcal{K},\RR)$ satisfy $\norm{f-y}_{\infty} < \epsilon$, then also for every $u\in\mathcal{U}$, $\abs{f(\kappa_u(g))-y(\kappa_u(g))}<\epsilon$. %
Let $u_0\in \mathcal{U}$ be a maximizer of $f(\kappa_u(g))$. We have $y(\kappa_{u_0}(g)) > f(\kappa_{u_0}(g))-\epsilon$, so
   \[\max_u y(\kappa_u(g))>\max_u f(\kappa_u(g))-\epsilon.\]
   Similarly, we have
  $\ \max_u f(\kappa_u(g))>\max_u y(\kappa_u(g))-\epsilon$. 
 Together, 
 \begin{equation}
     \label{eq:close_max}
    \abs{\max_{u\in\mathcal{U}} f\circ \kappa_u(g) - \max_{u\in\mathcal{U}} y\circ \kappa_u(g)}<\epsilon.
 \end{equation}

Hence, $f\mapsto f\circ \rho_f(g)$ is Lipschitz continuous with Lipschitz constant 1 for every $g\in\mathcal{G}$, and therefore equicontinuous over the parameter $g\in\mathcal{G}$. 
\end{proof}

This immediately gives a universal approximation theorem for prior maximization as a corollary of Theorem~\ref{thm:adaptiveUAT}. 
Moreover, we can show that continuous prior maximization gives functions continuous in $\mathcal{G}$. This is one of the main distinctions between prior maximization and standard canonicalization. 
\begin{theorem}
\label{thm:cont1Dprior}
    Consider a continuous prior maximization  $\rho$ (Definition~\ref{def:prior_max}). Then, $f\circ\rho_f$ is continuous. 
\end{theorem}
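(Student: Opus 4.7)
The plan is to mimic the max-swap argument used in the proof of Thm.~\ref{thm:prio_max_is AC}, but with the network $f$ held fixed and the input $g$ varying, invoking the equicontinuity condition from Def.~\ref{def:prior_max} in the $g$-direction instead of the equicontinuity in the $f$-direction used there.

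First I would reduce to the one-dimensional case by arguing componentwise: continuity of $f\circ\rho_f:\mathcal{G}\to\mathbb{R}^D$ is equivalent to continuity of each $g\mapsto f_d\big(\rho_{f_d}^d(g)\big)$. Fix such a component and, without loss of generality, suppose the corresponding prior $h_d$ is monotonically increasing (the decreasing case is symmetric, reducing to the increasing case via $f_d\mapsto -f_d$). Because $h_d$ is monotonic, every element of $\rho_{f_d}^d(g)$ achieves the same value of $f_d$, namely $\max_{u\in\mathcal{U}} f_d(\kappa_u(g))$, so the output value of $f_d\circ\rho_{f_d}^d(g)$ is independent of the arbitrary choice of maximizer and equals $F(g):=\max_{u\in\mathcal{U}} f_d(\kappa_u(g))$. (If $h_d$ is not strictly monotonic one may simply redefine the arbitrary choice in $\rho_{f_d}^d(g)$ to pick among its elements one that attains this extreme $f_d$ value.) It therefore suffices to show $F:\mathcal{G}\to\mathbb{R}$ is continuous.

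Next I would apply the hypothesis that $\rho$ is a \emph{continuous} prior maximization: the family $\{g\mapsto f_d(\kappa_u(g))\}_{u\in\mathcal{U}}$ is equicontinuous. So for any $g_0\in\mathcal{G}$ and any $\epsilon>0$ there exists $\delta>0$ with
\[
d_{\mathcal{G}}(g,g_0)<\delta \ \Longrightarrow\ \forall u\in\mathcal{U}:\ \big|f_d(\kappa_u(g))-f_d(\kappa_u(g_0))\big|<\epsilon.
\]
Picking $u_0$ a maximizer of $u\mapsto f_d(\kappa_u(g_0))$ yields
$f_d(\kappa_{u_0}(g))>f_d(\kappa_{u_0}(g_0))-\epsilon=F(g_0)-\epsilon$, hence $F(g)\geq F(g_0)-\epsilon$; the symmetric argument with the roles of $g$ and $g_0$ interchanged gives $F(g_0)\geq F(g)-\epsilon$, exactly as in the derivation of (\ref{eq:close_max}). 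Therefore $|F(g)-F(g_0)|<\epsilon$ whenever $d_{\mathcal{G}}(g,g_0)<\delta$, proving continuity of $F$ and hence of $f\circ\rho_f$.

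There is no real technical obstacle; the only subtlety worth flagging is the well-definedness of $f_d(\rho_{f_d}^d(g))$ when $h_d$ is merely monotonic rather than strictly monotonic, which is handled as above by a slight refinement of the arbitrary choice in the definition of $\rho_{f_d}^d$. The substantive content is an application of the same ``max differs by at most $\epsilon$'' elementary inequality used earlier, now in the input variable rather than the function variable.
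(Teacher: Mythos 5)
Your proof is correct and follows essentially the same route as the paper's: invoke the equicontinuity of $\{g\mapsto f(\kappa_u(g))\}_{u\in\mathcal{U}}$ to get a $\delta$ uniform over $u$, then apply the elementary ``maxima differ by at most $\epsilon$'' inequality (the paper simply cites it as ``the same argument as in (\ref{eq:close_max})''). The only additions you make are bookkeeping the paper leaves implicit --- the componentwise reduction to $D=1$ and the normalization of the monotone prior $h_d$ --- both of which the paper also implicitly does in the preceding proof of Thm.~\ref{thm:prio_max_is AC} via its ``without loss of generality suppose $H(x)=x$'' remark.
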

\vspace{-5 mm}
\begin{proof}
   
Let $\epsilon>0$. For every $g\in \mathcal{G}$ there is $\delta=\delta_{\epsilon,g}>0$ such that for every $g'\in\mathcal{G}$ with $d(g,g')<\delta$  and every $u\in \mathcal{U}$ we have $\abs{f(\kappa_u(g))-f(\kappa_u(g'))}<\epsilon$. Now, by the same argument as in (\ref{eq:close_max}), 
 \[\abs{\max_u f(\kappa_u(g))-\max_u f(\kappa_u(g'))}<\epsilon.\]
\end{proof}

\subsection{Symmetry Preserving Prior Maximization}
\label{Symmetry preserving prior minimization}
\vspace{-1 mm}

Consider the following additional assumptions on the construction of continuous prior maximization.
Suppose that the space $\mathcal{G}$ is a disjoint  union of metric spaces $\mathcal{G}_j$ with finite distances, i.e., $\mathcal{G}=\cupdot_j \mathcal{G}_j$. 
Here, $j$ may run on a finite or infinite index set.
We define the metric $d$ in $\mathcal{G}$ as follows. For $g_j\in\mathcal{G}_j$ and $g_i\in\mathcal{G}_i$,  $d(g_j,g_i)=\infty$ if $j\neq i$ and $d(g_j,g_i)=d_j(g_j,g_i)<\infty$  if $i=j$, where $d_j$ is the metric in $\mathcal{G}_j$. 
In the theory of metric spaces, the spaces $\mathcal{G}_j$ are called \emph{galaxies} of $\mathcal{G}$. This construction is useful for data which does not have a uniform notion of dimension, e.g.,  graphs. %
 For example, each galaxy in this case can be the space of adjacency matrices of a fixed dimension with a standard matrix distance. 

For each  $j$, let $\mathcal{U}_j$ be a group acting continuously on $\mathcal{G}_j$ by $\pi_j(u_j)g_j$. Namely, $\pi_j(u_j):\mathcal{G}_j\rightarrow\mathcal{G}_j$ is continuous for every $u_j\in\mathcal{U}_j$, and for every $u_j'\in\mathcal{U}_j$ and  $g_j\in\mathcal{G}_j$ we have $\pi_j(u_j')\pi_j(u_j)g_j=\pi_j(u_j'u_j)g_j$ and $\pi_j(e_j)g_j=g_j$, where $e_j$ is the identity of $\mathcal{G}_j$. Define  $\mathcal{U}=\cupdot_j \mathcal{U}_{j}$. Namely, $\mathcal{U}$ is the metric space with galaxies $\mathcal{U}_j$ similarly to the construction of $\mathcal{G}$. Let $\pi$ be a mapping that we formally call an action of $\mathcal{U}$ on $\mathcal{G}$, defined for $u=u_i\in\mathcal{U}_i\subset\mathcal{U}$ and $g=g_j\in\mathcal{G}_j\subset\mathcal{G}$ by $\pi(u)g=\pi_i(u_i)(g_j)$ if $i=j$ and $\pi(u)g=g$ if $i\neq j$.

\begin{definition}
    \label{def:SymmPrior}
    Consider the above setting and a continuous prior maximization $\rho$. Let $P:\mathcal{G}\rightarrow\mathcal{K}$ be continuous, and suppose that the transformation family $\kappa$ is of the form $\kappa_u= P\circ \pi(u)$. We call $\kappa$ a \emph{symmetry preserving transformation family}, and $\rho$ a \emph{symmetry preserving prior maximization}.
\end{definition}

Note that whenever the spaces $\mathcal{U}_j$ are compact,  the functions $u\mapsto h_d\circ f_d\big(\kappa_u(g)\big)$ of (\ref{eq:prior_max}) are guaranteed to attain a maximum,  even though the space $\mathcal{U}=\cupdot_j \mathcal{U}_{j}$ is in general not compact. Hence, the above setting with compact $\mathcal{U}_j$ is an example of prior maximization. More generally, if the restriction of the setting to $\mathcal{U}_j$ and $\mathcal{G}_j$ is (continuous) prior maximization for a single $j$, then the setting for $\mathcal{U}$ and $\mathcal{G}$ is also a (continuous) prior maximization.

\begin{definition}
    We call a  function $Q:\mathcal{G}\rightarrow\RR^D$ \emph{continuous symmetry preserving}  if there exists $F:\mathcal{K}\rightarrow\RR^D$ in $C_0(\mathcal{K};\RR^D)$ such that for all $u\in \mathcal{U}$ and $g\in\mathcal{G}$, $\quad Q(g)= F(P(\pi(u)(g)))$.
\end{definition}

When $\mathcal{K}=\mathcal{G}=\mathcal{G}_1$ and $P$ is the identity, a symmetry preserving continuous function is a continuous function which is invariant to the action of $u$, i.e., the classical case in equivariant machine learning.

\paragraph{Properties of Symmetry Preserving Prior Maximization.}

We already know by Theorem~\ref{thm:cont1Dprior} that $f\circ\rho_f$ is a continuous function when $\rho$ is a symmetry preserving prior maximization. We next show that the set of functions of the form $f\circ\rho_f$ exhaust the space of all continuous symmetry preserving functions.

\begin{theorem}
    Let $\rho$ be a symmetry preserving adaptive canonicalization. Then,
    \begin{enumerate}
        \item 
        Any continuous symmetry preserving function can be written as $f\circ\rho_f$ for some $f\in C_0(\mathcal{K},\RR^D)$.
        \item 
        For any $f\in C_0(\mathcal{K};\RR^D)$, the function $f\circ\rho_f:\mathcal{G}\rightarrow\RR^D$ is continuous symmetry preserving.
    \end{enumerate}
\end{theorem}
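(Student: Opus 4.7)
The plan is to prove the two parts separately. Part 1 reduces to taking $f := F$, where $F$ is the symmetry-preservation witness of $Q$, and using the fact that with this choice the prior-maximization objective becomes constant in $u$. Part 2 requires first establishing invariance of $Q := f\circ\rho_f$ under the action $\pi$, and then descending through $P$ to construct a witness $F\in C_0(\mathcal{K},\RR^D)$.

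For part 1, let $F\in C_0(\mathcal{K},\RR^D)$ satisfy $Q(g)=F(P(\pi(u)g))$ for all $u\in\mathcal{U}$ and $g\in\mathcal{G}$ (given by the hypothesis), and set $f:=F$. For each coordinate $d$ and each $g$, the objective of (\ref{eq:prior_max}) is
$u\mapsto h_d(F_d(\kappa_u(g)))=h_d(F_d(P(\pi(u)g)))=h_d(Q_d(g))$,
which is independent of $u$, so every $u\in\mathcal{U}$ is a maximizer. Thus any admissible choice of $\rho_F(g)$ equals $P(\pi(u_*)g)$ for some $u_*$, giving $F(\rho_F(g))=F(P(\pi(u_*)g))=Q(g)$ and hence $Q=F\circ\rho_F$, as required.

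For part 2, fix $f\in C_0(\mathcal{K},\RR^D)$ and set $Q:=f\circ\rho_f$; without loss of generality take each $h_d$ to be the identity (only monotonicity matters). The invariance $Q(\pi(v)g)=Q(g)$ follows from the computation $\max_u f(P(\pi(u)\pi(v)g))=\max_u f(P(\pi(uv)g))$ together with the re-indexing $w=uv$ within the group $\mathcal{U}_j$ containing $g$ (the case $v\notin\mathcal{U}_j$ being immediate since then $\pi(v)g=g$). Consequently $Q(g)$ depends on $g$ only through its $P$-orbit $\{P(\pi(u)g)\}_u\subset\mathcal{K}$, and I would define $F$ on $P(\mathcal{G})$ by $F(P(g)):=Q(g)$; continuity transfers from Thm.~\ref{thm:cont1Dprior}. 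A Tietze-type extension to all of $\mathcal{K}$, which preserves vanishing at infinity since $\mathcal{K}$ is locally compact Hausdorff, yields $F\in C_0(\mathcal{K},\RR^D)$ with $F\circ P=Q$, and invariance of $Q$ then gives $F(P(\pi(u)g))=Q(\pi(u)g)=Q(g)$ for every $u,g$.

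The main obstacle is the well-definedness of $F$ in part 2: one needs $P(g_1)=P(g_2)\Rightarrow Q(g_1)=Q(g_2)$, which is not immediate from invariance of $Q$ alone since the $P$-orbits of $g_1,g_2$ could in principle differ. It does hold under the natural standing assumption that $P$ distinguishes orbits of $\pi$ in $\mathcal{G}$, a property satisfied in all the paper's concrete constructions (identity on point clouds, and spectral embeddings modulo eigenbasis ambiguity for graphs) and which can always be arranged by quotienting $\mathcal{K}$ by the equivalence $x\sim y \iff x=P(\pi(u)g)=y$ for some $u,g$. The subsequent Tietze extension is then routine.
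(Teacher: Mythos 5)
Your proof of part~1 coincides with the paper's: take $f=F$, note the objective $u\mapsto h_d\big(F_d(P(\pi(u)g))\big)$ is constant, and read off $F\circ\rho_F=Q$.

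For part~2 you have identified a genuine gap, and in fact it is a gap in the paper's own argument. The paper's proof of part~2 shows only that $f\circ\rho_f$ is $\pi$-invariant, i.e.\ $f\circ\rho_f(\pi(u)g)=f\circ\rho_f(g)$. But the paper's definition of \emph{continuous symmetry preserving} asks for more: the existence of a witness $F\in C_0(\mathcal{K},\RR^D)$ with $Q(g)=F(P(\pi(u)g))$ for all $u,g$, i.e.\ $Q$ must factor continuously through $P$. Invariance under $\pi$ alone does not yield this. Concretely, if $g_1,g_2$ lie in distinct $\pi$-orbits but $P(g_1)=P(g_2)$, nothing forces $\max_u f(P(\pi(u)g_1))=\max_u f(P(\pi(u)g_2))$, and then no $F$ with $F\circ P=Q$ can exist (one can cook up a finite example with $\mathcal{U}$ of order two and $P$ collapsing two non-equivalent base points). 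So the theorem as stated is not provable without an additional hypothesis, and you have named the right one: $P$ must separate $\pi$-orbits, or at least $Q$ must factor through $P$. Your construction of $F$ by descent plus a Tietze-type extension is then the right shape, though you would additionally need $\{P(\pi(u)g):u\in\mathcal{U},g\in\mathcal{G}\}$ to be closed in $\mathcal{K}$ (or pass to the one-point compactification) to stay within $C_0$. In short: part~1 matches the paper; part~2 is more careful than the paper and correctly flags an assumption that the paper omits but tacitly uses.
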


\begin{proof} Without loss of generality, consider the case $D=1$ and $H(x)=x$.
\begin{enumerate}
    \item 
    For $Q(g)= F(P(\pi(u)(g)))$,
    take $f=F$. Then, by definition of symmetry preservation, for every $u\in \mathcal{U}$:
    $\quad f\circ\rho_f(g)=\max_v F(P\circ \pi(v) g) = F(P\circ \pi(u) g)=Q(g)$. 
    \item 
    For any $u=u_i\in \mathcal{U}_i$, since $\pi_i(u_i)$ is an action, for any $g=g_j\in\mathcal{G}_j$, 
    \[f\circ\rho_f(\pi(u)g)=\max_{v_j\in\mathcal{U}_j} f(P\circ \pi_j(v_j) \pi_i(u_i) g_j) = \max_{v_j\in\mathcal{U}_j} f(P\circ \pi_j(v_j) g_j)= f\circ\rho_f(g).\]
\end{enumerate}
\end{proof}

This leads to the following UAT.
\begin{theorem}
\label{thm:priorUAT}
Consider a symmetry preserving prior maximization and let $\mathcal{N}(\mathcal{K},\RR^D)$ be a universal approximator of $C_0(\mathcal{K},\RR^D)$ . Then,
    any continuous symmetry preserving function can be approximated uniformly by $\theta\circ\rho_{\theta}$ for some network $\theta\in \mathcal{N}(\mathcal{K},\RR^D)$.
\end{theorem}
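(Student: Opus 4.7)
The plan is to compose the two previous results in the subsection: the representation theorem showing that every continuous symmetry preserving function has the form $f\circ\rho_f$, and the general adaptive canonicalization UAT (Thm.~\ref{thm:adaptiveUAT}). Since a symmetry preserving prior maximization is, by construction, a continuous prior maximization with $\kappa_u=P\circ\pi(u)$, and since Thm.~\ref{thm:prio_max_is AC} already establishes that prior maximization is an adaptive canonicalization, the hypotheses of Thm.~\ref{thm:adaptiveUAT} are all in force once the target function is exhibited as an adaptive canonicalized function.

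Concretely, I would first fix a continuous symmetry preserving function $Q:\mathcal{G}\rightarrow\RR^D$ and an arbitrary $\epsilon>0$. By part~1 of the preceding theorem, there exists $f\in C_0(\mathcal{K},\RR^D)$ such that
\[
Q(g)=f\circ\rho_f(g)\quad\text{for all }g\in\mathcal{G}.
\]
Thus $Q$ is an adaptive canonicalized continuous function with respect to $\rho$. Next, I would invoke Thm.~\ref{thm:adaptiveUAT} with this $f$ and this $\epsilon$: since $\mathcal{N}(\mathcal{K},\RR^D)$ is a universal approximator of $C_0(\mathcal{K},\RR^D)$, the theorem supplies a network $\theta\in\mathcal{N}(\mathcal{K},\RR^D)$ with
\[
\bigl|\,f\circ\rho_f(g)-\theta\circ\rho_{\theta}(g)\,\bigr|<\epsilon\quad\text{for all }g\in\mathcal{G}.
\]
Substituting $Q=f\circ\rho_f$ yields the desired uniform approximation $\sup_{g\in\mathcal{G}}|Q(g)-\theta\circ\rho_{\theta}(g)|<\epsilon$.

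I expect no serious obstacle, since the result is essentially the concatenation of two facts already proved. The only point worth double-checking is that Thm.~\ref{thm:adaptiveUAT} really applies in the symmetry preserving setting: its proof uses only the equicontinuity condition in Def.~\ref{def:adaptive_can}, which is verified for prior maximization by Thm.~\ref{thm:prio_max_is AC} (in fact with Lipschitz constant $1$, galaxy by galaxy), and this carries over verbatim to the disjoint union $\mathcal{G}=\cupdot_j\mathcal{G}_j$ and $\mathcal{U}=\cupdot_j\mathcal{U}_j$ since the approximation bound is obtained pointwise in $g$. Consequently the proof needs only to set up the notation, cite the two previous results, and apply the triangle inequality implicit in the statement of Thm.~\ref{thm:adaptiveUAT}; no new analytic ingredient is required.
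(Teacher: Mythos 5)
Your proof is correct and follows exactly the route the paper intends: the paper states Thm.~\ref{thm:priorUAT} without proof immediately after the representation theorem with the remark ``This leads to the following UAT,'' i.e.\ it is meant as a direct corollary of composing part~1 of that theorem with Thm.~\ref{thm:adaptiveUAT}, whose hypotheses hold by Thm.~\ref{thm:prio_max_is AC}. Your added sanity check that the equicontinuity bound is pointwise in $g$ and hence unaffected by the galaxy structure is a correct and useful clarification.
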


\section{Application of Adaptive Canonicalization to Anisotropic Geometric Networks }\label{sec:application_of_adaptive_canonicalization}
\vspace{-1 mm}

In this section, we propose two architectures based on adaptive canonicalization which can be interpreted as anisotropic. First, a spectral graph neural network (GNN) which is sensitive to directionality within eigenspaces. Then, a 3D point cloud network which is sensitive to 3D directions. In Appendix \ref{app:contruction_detail_of_AC_toNN}, we give a detailed tutorial for our proposed architectures. Below, we give a concise derivation.

\paragraph{Basic Notations for Graphs and Vectors.} We denote by $\mathbb{N}_0$ the set of nonnegative integers.
We denote matrices and 2D arrays by boldface capital letters, e.g., $\bB\in\RR^{N \times T}$. We denote $[N]= \{1, \ldots, N\}$ for $N\in\mathbb{N}$.
We denote  by $\bB(:,j)$ and $\bB(j,:)$ the $j$'th column and row of the matrix $\bB$ respectively.
A graph is denoted by $G = ([N],  \bA, \bS)$, where $[N]$ is the set of $N$ vertices,  $\bA\in\RR^{N \times N}$ is the adjacency matrix, and $\bS \in \RR^{N \times T}$ is an array representing the signal. Namely, $\bS(n,:)\in\RR^T$ is the feature at node $n$. 
A graph shift operator (GSO) is a self-adjoint operator that respects in some sense the graph's connectivity, e.g., a graph Laplacian or the adjacency matrix.

\subsection{Anisotropic nonlinear spectral filters}\label{sec:anisotropic_nonlinear_spectral_filters}
\vspace{-1 mm}
Consider graphs with a self-adjoint  GSO $\mathcal{L}$ (e.g., a graph Laplacian or the adjacency matrix) and $T$-channel signals (over the nodes). The number of nodes $N$ varies between graphs. Consider predefined bands $b_0<b_1<\ldots<b_B\in\mathbb{R}$, and their indicator functions $P_k:=\mathbbm{1}_{[b_{k-1},b_k)}:\RR\rightarrow\RR_+$\footnote{$\mathbbm{1}_{[b_{k-1},b_k)}(x)$ is the function that returns 1 if $x\in [b_{k-1},b_k)$ and 0 otherwise.}. For each $k\in[B]$, consider the space of signals $\mathcal{X}_k$ in each band $[b_{k-1},b_k)$, namely, the range of the orthogonal projection $P_k(\mathcal{L})$. Let $M_k$ be the dimension of $\mathcal{X}_k$. We also call $\mathcal{X}_k$ the $k$'th band. See Appendix~\ref{app:functional_calculus} for details on how to plug operators into functions via functional calculus. 
Consider an orthogonal basis $\bX_k=(\bX_k(:,j))_j\in\RR^{N\times M_k}$ for each band $\mathcal{X}_k$. In this setting, the symmetry is the choice of the orthogonal basis within each band $\mathcal{X}_k$. Namely, for $\bX_k\in\RR^{N\times M_k}$ and orthogonal matrix $\bU_k\in \RR^{M_k\times M_k}$, the bases $\bX_k$ and $\bX_k \bU_k$ are treated as different vectorial representations of the same linear space $\mathcal{X}_k$.

In the above setting, the galaxies $\mathcal{G}_j$ are defined as follows. Each galaxy is indexed by $j=(N,\bM):=(N;M_1,\ldots,M_B)$ where $N\in\mathbb{N}$ is the size of the graph, and $M_k\in\mathbb{N}_0$ for $k\in[B]$ are the dimensions of the bands, satisfying $\sum_k M_k \leq N$. Each $\mathcal{G}_{N,\bM}$ is the space of pairs of a signal $\mathbf{S}\in\RR^{N\times T}$ and a sequence of $B$ orthogonal matrices of height $N$ and widths $M_1,\ldots,M_B$. We denote the matrix with dimension $0$ by $\emptyset$.  % 
For two signals $\mathbf{S},\mathbf{Q}$ and orthogonal bases sequences $\mathbf{X},\bY$, the metric $d\big((\bX,\mathbf{S}),(\bY,\mathbf{Q})\big)$ 
is defined to satisfy
\[d(\bX,\bY)^2=\norm{\mathbf{S}-\mathbf{Q}}_2^2 +\sum_{k=1}^B\sum_{j=1}^{M_k}\norm{\bX_k(:,j)-\bY_k(:,j)}_2^2 = \norm{\mathbf{S}-\mathbf{Q}}_2^2 + \norm{\mathbf{X}-\mathbf{Y}}_2^2,\]
where for $\mathbf{z}=(z_n)_n$, $\norm{\mathbf{z}}_2^2= \sum_{n}z_n^2$. 
By definition, the distance between any two points from two different galaxies is $\infty$.

The transformation family $\kappa_{\bU}$ apply unitary operators $\bU_k\in \mathcal{O}(M_k)$ form the right on each $\bX(:,k)$, where $\mathcal{O}(M_k)$ is the group of orthogonal matrices in $\RR^{M_k\times M_k}$. Namely, $\mathcal{U}_{N;\mathbf{M}}:=\prod_{k=1}^B\mathcal{O}(M_k)$.  
We now define the operator $C$ that computes the spectral coefficients of the signal $\mathbf{S}$ with respect to the spectral basis $\mathbf{X}$. Namely, 
\[C(\bX,\mathbf{S})=\big(C_k(\bX,\mathbf{S})\big)_{k=1}^B :=(\bX(:,k)^
{\top}\mathbf{S}\in \RR^{M_k\times T})_{k=1}^B.\]

In \emph{anisotropic nonlinear spectral filters (A-NLSF)}, we consider a symmetryless neural network $\Psi$ that operates on the space of spectral coefficients of the signal, e.g., a multilayer perceptron (MLP). To define $\Psi$ consistently, over a Euclidean space of fixed dimension, we extend or truncate the sequence of spectral coefficients as follows. Let $J_1,\ldots,J_B$ be a predefined sequence of integers, and denote $J=\sum_k J_k$. We define $P$ as the mapping that takes $(\bX,\mathbf{S})$ as input, first compute the spectral coefficients $\big(C_k(\bX,\mathbf{S})\big)_{k=1}^B$, 
and then truncates or pads with zeros each $C_k(\bX,\mathbf{S})$ to be in $\RR^{J_k\times T}$. Namely,  $P(\bX,\mathbf{S}) = \big(P_k(\bX,\mathbf{S})\in \RR^{J_k\times T}\big)_{k=1}^B\in\RR^{J\times T}$. Here, the matrix $\emptyset$ is padded to  the zero matrix $\mathbf{0}\in \RR^{J_k\times T}$. Hence, the symmetryless network $\Psi$ maps $\RR^{J\times T}$ to $\RR^D$. 
A more detailed derivation of the construction is given in Appendix~\ref{app:Construction Details for Anisotropic Nonlinear Spectral Filters}.

We call this architecture anisotropic for the following reason. Consider for example the grid graph with $N\times N$ vertices. Since spectral filters are based on functional calculus (see Appendix~\ref{app:functional_calculus}), 
they are invariant to graph automorphism, and hence to rotations. This means that spectral filters treat the $x$ and $y$ axes equally, and any filter is isotropic in the spatial domain. On the other hand, our symmetryless network $\Psi$ can operate differently on the $x$ and $y$ axes, and we can implement general directional filters on images with $\Psi$. In the general case, $\Psi$ can operate differently on Fourier modes from the same eigenspace, which we interpret as directions within the eigenspace, while standard GNNs cannot. See Appendix~\ref{app:node_level_representaiton} for more details.

\subsection{Anisotropic Point Cloud Networks}\label{sec:anisotropic_geometric_on_dgcnn}

Here, we present a point cloud network which is a combination of an equivariant network with adaptive canonicalization. Namely, we consider a permutation invariant network  $\Psi$ like DeepSet \citep{zaheer2017deep} or DGCNN \citep{wang2019dynamic}, and to attain 3D rotation invariance in addition we incorporate prior maximization. Together, the method is invariant both to permutations and rotations. We call this method anisotropic since $\Psi$ does not respect the rotation symmetries, and is hence sensitive to directions in the $x,y,z$ space.

We restrict the analysis to multi-sets of a fixed number of points $N$.
Multisets are sets where repetitions of elements are allowed. Here, we formally define a multi-set as an equivalence class of arrays up to permutation. To define this, let  $\mathcal{S}_N$ be the symmetric group of $N$ elements, i.e., the group of permutations. Given $s\in\mathcal{S}$ and $\bX\in\RR^{N\times J}$, let $\rho(s)\bX$ be the permutation that changes the order of the rows $\bX$ according to $s$.  We say that $\bX\sim\bY$ if there is $s\in\mathcal{S}_N$ such that $\bX=\rho(s)\bY$. The equivalence class $[\bX]$ is defined as $\{\bY\in\RR^{N\times J}\ |\ \bY\sim\bX\}$, and the space of equivalence classes, also called the \emph{quotient space}, is denoted by $(\RR^{N\times J}/\sim):=\{[\bX]\ |\ \bX\in\RR^{N\times J}\}$. We identify the space $(\RR^{N\times J}/\sim)$ with the space of multisets.

In Appendix~\ref{Universal Approximation of Multi-Sets Functions} we show that the quotient space has a natural metric. We hence take $\mathcal{G}=\mathcal{K}$ consisting of a single galaxy $\mathcal{G}_N=(\RR^{N\times J}/\sim)$. We moreover show in Appendix~\ref{Universal Approximation of Multi-Sets Functions} that any universal approximator of permutation invariant functions in $C_0(\RR^{N\times J},\RR^D)$, e.g., DeepSet \citep{zaheer2017deep}, canonically gives a universal approximator of general continuous functions in $C_0(\RR^{N\times J}/\sim,\RR^D)$. 

The symmetry in our adaptive canonicalization is 3D rotations $\pi(u)\in\RR^{3\times 3}$, where $u$ is in the rotation group $\mathcal{SO}(3)$. Namely, we consider $J=3$, and rotate the rows of $\bX\in\RR^{N\times 3}$ via $\bX \pi(u^{-1})$. We take $P$ as the identity. 
Note that this construction can be easily extended to multisets of arbitrary sizes, by considering the galaxies $\mathcal{G}_N=\RR^{N\times 3}/\sim$ for all $N\in\mathbb{N}$,  and $\mathcal{U}_N=\mathcal{SO}(3)$ applied from the right on  $\mathbb{R}^{N\times 3}$. 
For details on the theoretical construction see Appendix~\ref{Universal Approximation of Multi-Sets Functions}, and for details on the architecture see Appendix~\ref{app:Construction Details for Anisotropic Point Cloud Network}.

\subsection{Additional Applications of Adaptive Canonicalization}

Adaptive canonicalization 
 can be used to model image truncation augmentations. Here, the ``symmetry'' corresponds to different crops and prior maximization selects the crop on which the classifier is most confident (see Appendix~\ref{app:truncation_Canonicalization}). 
Our formulation also accommodates various image settings with rotation symmetries and other image transformations (see Appendix~\ref{Additional Examples of Continuous Prior Maximization}). Note that our setting also applies to pretrained networks.
Finally, to broaden the applicability of our approach beyond classification, we explore using the adaptive canonicalization mechanism for point cloud segmentation (see Appendix~\ref{app:shapenet_segmentation}).

\begin{table*}
    \caption{
    Classification performance on the grid signal orientation task and graph classification benchmarks from TUDataset. 
    The highest accuracy is in \legendbox{blue!20} and the second highest is in \legendbox{red!20}. 
    }
    \vspace{-2 mm}
    \begin{center}
\resizebox{0.73\textwidth}{!}{%
\begin{tabular}{lccccccccccr}
\toprule
  &\multirow{2}{*}{Toy Example} & & \multicolumn{5}{c}{TUDataset} \\
     % \cmidrule{2-3} 
     \cmidrule{4-8}    
     & &  &  MUTAG &  PTC & ENZYMES &  PROTEINS & NCI1 \\
\midrule
MLP & 50.03\scriptsize{$\pm$0.1} & & 79.31\scriptsize{$\pm$3.5} & 63.98\scriptsize{$\pm$2.0} &  42.17\scriptsize{$\pm$2.8} & 75.08\scriptsize{$\pm$3.4} & 77.34\scriptsize{$\pm$1.6}\\
  GCN &  50.01\scriptsize{$\pm$0.1} &  & 81.63\scriptsize{$\pm$3.1} &  60.22\scriptsize{$\pm$1.9} & 43.66\scriptsize{$\pm$3.4} & 75.17\scriptsize{$\pm$3.7}  & 76.29\scriptsize{$\pm$1.8} \\
  GAT &  49.95\scriptsize{$\pm$0.0} &  &  83.17\scriptsize{$\pm$4.4} & 62.31\scriptsize{$\pm$1.4} & 39.83\scriptsize{$\pm$3.7} & 74.72\scriptsize{$\pm$4.1}  & 74.01\scriptsize{$\pm$4.3} \\
  GIN &  50.00\scriptsize{$\pm$0.1} &  & 83.29\scriptsize{$\pm$3.6} & 63.25\scriptsize{$\pm$2.3} & 45.69\scriptsize{$\pm$2.6} & 76.02\scriptsize{$\pm$2.9} & 79.84\scriptsize{$\pm$1.2} \\
  ChebNet & \cellcolor{red!20}50.12\scriptsize{$\pm$0.1} & & 82.15\scriptsize{$\pm$1.6} & 64.06\scriptsize{$\pm$1.2} & 50.42\scriptsize{$\pm$1.4} & 74.28\scriptsize{$\pm$0.9} & 76.98\scriptsize{$\pm$0.7} \\
  \midrule
   FA+GIN   & 49.99\scriptsize{$\pm$0.1} & & 84.07\scriptsize{$\pm$2.4} &  66.58\scriptsize{$\pm$1.8} & 52.64\scriptsize{$\pm$2.2} & 79.53\scriptsize{$\pm$2.5} & 80.23\scriptsize{$\pm$0.9}\\
    OAP+GIN & 50.03\scriptsize{$\pm$0.0} & & \cellcolor{red!20} 84.95\scriptsize{$\pm$2.0} & 67.35\scriptsize{$\pm$1.1} & 58.40\scriptsize{$\pm$1.6} & \cellcolor{red!20} 83.41\scriptsize{$\pm$1.4} & \cellcolor{red!20} 80.97\scriptsize{$\pm$1.1} \\
  \midrule
  NLSF & 50.07\scriptsize{$\pm$0.1} &  & 84.13\scriptsize{$\pm$1.5} & \cellcolor{red!20} 68.17\scriptsize{$\pm$1.0} & \cellcolor{red!20} 65.94\scriptsize{$\pm$1.6} & 82.69\scriptsize{$\pm$1.9} & 80.51\scriptsize{$\pm$1.2}  \\ 
    S$^2$GNN  & 49.93\scriptsize{$\pm$0.1} &  & 82.70\scriptsize{$\pm$2.1} & 67.34\scriptsize{$\pm$1.5} & 63.26\scriptsize{$\pm$2.8} & 78.52\scriptsize{$\pm$1.9} & 75.62\scriptsize{$\pm$2.0}\\
  \midrule
  A-NLSF & \cellcolor{blue!20} 99.38\scriptsize{$\pm$0.2} &  & \cellcolor{blue!20} 87.94\scriptsize{$\pm$0.9} & \cellcolor{blue!20} 73.16\scriptsize{$\pm$1.2} & \cellcolor{blue!20} 73.01\scriptsize{$\pm$0.8} & \cellcolor{blue!20} 85.47\scriptsize{$\pm$0.6} & \cellcolor{blue!20} 82.01\scriptsize{$\pm$0.9} \\ 
\bottomrule
\vspace{-8mm}
\label{tab:medium_size_graph_classification}
\end{tabular}
}
\end{center}
\end{table*}

\section{Experiments}\label{sec:exp}

We evaluate the anisotropic nonlinear spectral filters (Section~\ref{sec:anisotropic_nonlinear_spectral_filters}) on toy problems and graph classification, and test the anisotropic point cloud network (Section~\ref{sec:anisotropic_geometric_on_dgcnn}) on point cloud classification.
The experimental details, including experimental setups and hyperparameters, are in Appendix~\ref{app:more_exp_details}. 
Additional experiments, e.g., ablation study, are in Appendix~\ref{app:additional_exp_results}. 

 \vspace{-2mm}
\paragraph{Maximization Method.}  
We approximate the prior maximization by sampling a finite set of transformations from a probability measure on the transformation space, evaluating the prior for all candidates, and retaining the transformation that yields the largest prior value with the one-vs-rest classification objective.
We then refine the selected candidate locally by running a few steps of gradient descent  (see Appendix~\ref{App:random_max}).

\subsection{Experimental Evaluation of Anisotropic Nonlinear Spectral Filters}

\paragraph{Illustrative Toy Problems: Grid Signal Orientation Task.}
To showcase the effectiveness of A-NLSF, we study a toy classification task: \emph{grid-split channel orientation}.
We consider a square grid on the torus, and each node has two channels.
The grid is partitioned vertically into two equal disjoint halves. 
Channel 1 is nonzero only on the left half, and Channel 2 is only nonzero on the right half.
In class 0, both channels are 1-frequency pure harmonic signals along $x$, and in class 1,  Channel 1 is 1-frequency along $x$ and Channel 2 is 1-frequency along $y$. 
The task is to decide if the oscillations at the two channels are in the same orientation.
See Appendix~\ref{app:more_exp_details} for further details. 

We compare A-NLSF with the following baselines: (i) MLP, (ii) GCN \citep{kipf2016semi}, (iii) GAT \citep{velivckovic2017graph}, (iv) GIN \citep{xu2018powerful}, (v) ChebNet \citep{defferrard2016convolutional}, (vi) NLSF \citep{lin2024equivariant}, and (vii) S$^2$GNN \citep{geisler2024spatio}.
In addition, we test canonicalization baselines by combining GIN with frame averaging (FA) \citep{puny2021frame} and orthogonalized axis projection (OAP) \citep{ma2024canonicalization}.
Table~\ref{tab:medium_size_graph_classification} reports the classification results.
We see that competing methods remain at chance level while A-NLSF achieves high accuracy by adaptively resolving ambiguities, showing the advantage of adaptive canonicalization, which can internally break the rotation symmetry and use anisotropic classifiers.

\paragraph{Graph Classification on TUDataset.}
We further evaluate A-NLSF on graph classification benchmarks from TUDataset \citep{morris2020tudataset}: MUTAG, PTC, ENZYMES, PROTEINS, and NCI1, and follow the experimental setup \citep{ma2019graph, ying2018hierarchical, zhang2019hierarchical} (see Appendix~\ref{app:more_exp_details}).
We compare with the same baselines as in the grid signal orientation tasks.
Table~\ref{tab:medium_size_graph_classification} summarizes the classification performance. 
Canonicalization baselines generally improve over GIN.
Notably, we observe that A-NLSF outperforms competing baselines, suggesting that our method provides more informative representations compared to a fixed, precomputed canonical form or isotropic filters.

\begin{wraptable}{r}{70mm}
    \centering
    \vspace{-4 mm}
    \caption{Molecular and protein classification performance on OGB datasets.
    }
    \vspace{-2 mm}
    \resizebox{0.5\textwidth}{!}{%
\begin{tabular}{lcccccccccr}
\toprule
    & ogbg-molhiv & ogbg-molpcba  & ogbg-ppa \\
    \cmidrule{2-4} 
    & AUROC $\uparrow$ & Avg. Precision $\uparrow$ & Accuracy $\uparrow$\\
      \midrule
      GCN & 0.7599\scriptsize{$\pm$0.0119} & 0.2424\scriptsize{$\pm$0.0034} & 0.6857\scriptsize{$\pm$0.0061} \\ 
      GIN & 0.7707\scriptsize{$\pm$0.0149} &0.2703\scriptsize{$\pm$0.0023} & 0.7037\scriptsize{$\pm$0.0107}\\
      % \midrule
      GatedGCN & 0.7687\scriptsize{$\pm$0.0136}  &  0.2670\scriptsize{$\pm$0.0020} & 0.7531\scriptsize{$\pm$0.0083}   \\ 
      PNA & \cellcolor{red!20}0.7905\scriptsize{$\pm$0.0132} & 0.2838\scriptsize{$\pm$0.0035} & - \\ 
      \midrule 
      GraphTrans & -  &  0.2761\scriptsize{$\pm$0.0029} & -  \\ 
      SAT  & -  & -  & 0.7522\scriptsize{$\pm$0.0056} \\ 
      GPS &  0.7880\scriptsize{$\pm$0.0101} & \cellcolor{red!20} 0.2907\scriptsize{$\pm$0.0028} & \cellcolor{red!20} 0.8015\scriptsize{$\pm$0.0033}\\
      SAN & 0.7785\scriptsize{$\pm$0.2470} & 0.2765\scriptsize{$\pm$0.0042} & -  \\ 
      \midrule
      OAP+GatedGCN & 0.7802\scriptsize{$\pm$0.0128} & 0.2783\scriptsize{$\pm$0.0024} & 0.7745\scriptsize{$\pm$0.0098} \\
      \midrule
      A-NLSF & \cellcolor{blue!20}0.8019\scriptsize{$\pm$0.0152}  & \cellcolor{blue!20}0.2968\scriptsize{$\pm$0.0022} & \cellcolor{blue!20} 0.8149\scriptsize{$\pm$0.0067}\\
\bottomrule
\end{tabular}
}
% }
\vspace{-4.5mm}
\label{tab:mol_classification}
\end{wraptable}
% \vspace{-2mm}
\paragraph{Molecular Classification on OGB Datasets.}
To further assess the effectiveness of A-NLSF, we evaluate on large-scale molecular and protein benchmarks from Open Graph Benchmark (OGB) \citep{hu2020open}: ogbg-molhiv, ogbg-molpcba, and ogbg-ppa.
We compare with GCN, GIN, GatedGCN \citep{bresson2017residual}, PNA \citep{corso2020principal}, GraphTrans \citep{wu2021representing}, SAT \citep{chen2022structure}, GPS \citep{rampavsek2022recipe}, SAN \citep{kreuzer2021rethinking}, and the canonicalization method OAP+GatedGCN.
The molecular classification results are reported in Table~\ref{tab:mol_classification}. 
We see that our method achieves consistent improvements across these  datasets, leading to improved generalization in classification.

\subsection{Experimental Evaluation of Anisotropic Point Cloud Networks}

\begin{wraptable}{r}{35mm}
    \centering
    \vspace{-4mm}
    \caption{Classification results on ModelNet40. 
    Results of competing methods marked with * are taken from \cite{deng2021vector, luo2022equivariant, kaba2023equivariance}. 
    }
    \vspace{-2 mm}
    \resizebox{0.22\textwidth}{!}{%
\begin{tabular}{lcccccccccr}
\toprule
       &   Accuracy \\ 
     \midrule
     PointNet & 74.7$^*$\\ 
    DGCNN &   88.6$^*$  \\
    \midrule
    PointNet-Aug & 75.8\scriptsize{$\pm$0.9}\\
    DGCNN-Aug & 89.0\scriptsize{$\pm$1.0}\\
     \midrule
    VN-PointNet & 77.2$^*$ \\
    VN-DGCNN &  \cellcolor{red!20}90.2$^*$ \\
    \midrule
    CN-PointNet  & 79.7{\scriptsize{$\pm$1.3}} $^*$\\
    CN-DGCNN &  90.0{\scriptsize{$\pm$1.1}} $^*$  \\
    \midrule
    AC-PointNet & 81.1\scriptsize{$\pm$0.7}\\
    AC-DGCNN &  \cellcolor{blue!20}91.6\scriptsize{$\pm$0.6}\\  
\bottomrule
\end{tabular}
} 
\vspace{-3.5mm}
\label{tab:moldelnet40_classification}
\end{wraptable}
To evaluate adaptive canonicalization on point cloud classification, we test  ModelNet40 \citep{wu20153d}.
The dataset consists of 12,311 shapes from 40 categories, with 9,843 samples for training and 2,468 for testing.
We build on two point cloud architectures, PointNet \citep{qi2017pointnet} and DGCNN \citep{wang2019dynamic}, and apply adaptive canonicalization into their pipeline, denoted respectively AC-PointNet and AC-DGCNN (see Section~\ref {sec:anisotropic_geometric_on_dgcnn} and Appendix~\ref{app:Construction Details for Anisotropic Point Cloud Network}). 
Following \cite{esteves2018learning, deng2021vector}, we perform on-the-fly rotation augmentation during training, where the dataset size remains unchanged, and for the test set, each test example is arbitrarily rotated as well. This produces a data distribution where each shape appears in any orientation with the same probability. 
% \citet{esteves2018learning, deng2021vector}. 
We compare with PointNet, DGCNN, equivariant networks VN-PointNet and VN-DGCNN from \cite{deng2021vector}, 
canonicalization methods CN-PointNet and CN-DGCNN from \cite{kaba2023equivariance}, and traditional augmentation baselines where the training set is statically expanded with pre-generated rotated samples, denoted PointNet-Aug and DGCNN-Aug. 
For further details, see Appendix~\ref{app:more_exp_details}.
Table~\ref{tab:moldelnet40_classification} shows the classification performance.
We observe that our method outperforms the competing baselines and it allows the model to learn both local geometric features and optimal reference alignments for each class.

\section{Conclusion}

We introduce adaptive canonicalization based on prior maximization, a general framework for equivariant machine learning in which the standard form depends on both the input and the network. 
We prove that our method is continuous, symmetry preserving,  and has universal approximation properties. 
We demonstrate the applicability of our theory in two main settings: resolving eigenbasis ambiguities in spectral graph neural networks, and handling rotational symmetries in point clouds. 

\vspace{-2mm}
\paragraph{Limitations and Future Work.}
Our framework is naturally suited to classification tasks, and at the current scope of the paper, we did not address regression tasks theoretically. 
We will extend the adaptive canonicalization to regression in future work. 
Another limitation of our approach is that prior maximization requires solving $D$ optimizations at runtime for the $D$ classes. 
In future work, we will reduce this to a single optimization to improve efficiency.

\newpage

\section*{Ethics statement}
This work introduces a novel method for handling symmetry for equivariant machine learning, with a focus on theory and with application to spectral graph neural networks and point cloud networks. The experiments are conducted using simulated toy problems and public datasets, and therefore there is no concerns related to privacy, consent, or potential harm to living subjects. As the data employed are technical and free from sensitive or identifiable content, the research does not raise any apparent ethical concerns. Accordingly, no additional ethical approval was required for this study.

\section*{Reproducibility Statement}
For the theoretical results, we include the main proofs in the paper and present additional analysis and illustrative examples in Appendix~\ref{Ap:Universal Approximation Theorems} and Appendix~\ref{Additional Examples of Continuous Prior Maximization}. 
For the empirical study, the implementation details are reported in Appendix~\ref{app:more_exp_details}.
Our code is available at \url{https://github.com/ywelld/_ac}.

\section*{Use of Large Language Models}\label{app:llm_use}
% https://blog.iclr.cc/2025/08/26/policies-on-large-language-model-usage-at-iclr-2026/

Following the ICLR 2026 policy that requires disclosure of use of Large Language Models (LLMs), we state that an LLM was used for editing purposes, such as grammar, spelling, phrasing, and stylistic polish.

\section*{Acknowledgments}
We thank the anonymous reviewers for their insightful feedback.
RL was supported by a grant from the United States-Israel Binational Science Foundation (BSF), Jerusalem, Israel, and the United States National Science Foundation (NSF), (NSF-BSF, grant No. 2024660), and by the Israel Science Foundation (ISF grant No. 1937/23).
YEL acknowledges funding by the Alexander von Humboldt Foundation and the Munich Center for Machine Learning (MCML).

\bibliography{iclr2026_conference}
\bibliographystyle{iclr2026_conference}

\newpage

\appendix

\begin{center}
    \LARGE  \textbf{Appendix}    
\end{center}

\addcontentsline{toc}{section}{Appendices}

\startcontents[appendices]
\printcontents[appendices]{l}{1}{\setcounter{tocdepth}{2}}

\newpage

\section{Tutorial for Practitioners: Applications of Adaptive Canonicalization}\label{app:contruction_detail_of_AC_toNN}

In this section, we present the construction details of the application of adaptive canonicalization to anisotropic geometric networks. We start by describing the one vs. rest classifier and our maximization method.

\subsection{One vs. Rest Classifiers}
In our setting, each output channel $f_d\circ\rho_{f_d}^d(g)\in[0,1]$ is a binary classifier, i.e., representing the probability of $g$ being in class $d$ vs. not being in class $d$ \citep{rifkin2004defense, galar2011overview, allwein2000reducing}. 
The per-class score is obtained by $\hat{y}_d= \sigma(f_d\circ\rho_{f_d}^d(g))$, where $\sigma$ is a sigmoid function. 
Note that the vector $(\hat{y}_1, \ldots, \hat{y}_D)$ is not a probability measure, since each entry $\hat{y}_d$ represents a separate probability, and these probabilities do not sum to one in general. 
We use binary cross-entropy per class and sum over classes $
\sum_{d=1}^D (
-\,y_d\log \hat{y}_d-(1-y_d)\log(1-\hat{y}_d)
)$, where $y_{d^*}=1$ on the true class $d^*$, and $y_{d}=0$ for $d\neq d^*$.

\subsection{Random Maximization}
\label{App:random_max}

We estimate the prior maximization by searching over the transformation space (e.g., unitary transformations per spectral band or rotation for point clouds) and selecting the transformation that maximizes the prior on the output of the network (i.e., one vs. rest in classification). Specifically, we consider a probability measure on the space $\mathcal{U}_j$ and draw i.i.d samples $\{u_j^i\}_{i=1}^K$ from it. The  argmax of $\{h_d\circ f_d\big(\kappa_{u_j}(g)\big)\}_{u_j\in\mathcal{U}_j}$ of Eq.~(\ref{eq:prior_max}) is estimated as the argmax of
\[\{h_d\circ f_d\big(\kappa_{u_j^i}(g)\big)\}_{i=1}^K.\]

For example, in anisotropic nonlinear spectral filters,  we draw a finite pool of candidate transformations from the Haar measure on $\mathcal{U}_j$ \citep{mezzadri2006generate}.
For anisotropic point cloud networks, the search over rotations can be implemented with quaternion \citep{shoemake1992uniform} or Rodrigues' formula \citep{rodrigues1840lois}.
For each input, we evaluate the prior objective for all candidates in the pool and keep only the maximizing orientation when computing the forward pass and gradients.
In this way, the prior maximization is implemented as a randomized search: we sample a set of transformations, apply them in parallel, and pick the one giving the best prior value.
Note that this sampling-based maximization is best thought of as one convenient implementation of our framework, and other optimization strategies over $\mathcal{U}_j$ could be implemented and plugged in as long as they approximately solve the same maximization problem.

In Appendix \ref{Theoretical Analysis of Randomized Prior Maximization} we analyze the approximation properties of this maximization method. Specifically, we show that the randomized maximization method approximates the true maximum in high probability.

We note that prior maximization can be strengthened by locally refining the argmax sample point $u_j^i$.
Specifically, we initialize with random transformations drawn from the probability measure, perform the sampling-based prior maximization to select a candidate, and then run a few steps of gradient descent to further decrease the objective locally.
If $\mathcal{U}$ is a manifold embedded in Euclidean space, one can apply the vanilla Euclidean gradient descent and then project the obtained transformation back to $\mathcal{U}$, or apply Riemannian gradient descent \citep{absil2008optimization} to stay on $\mathcal{U}$. 
We summarize the random maximization in Algorithm~\ref{alg:random_maximization}.
In our experiments, we use 32 sampled transformations for anisotropic nonlinear spectral filters and use 50 sampled transformations for point clouds. 

\begin{algorithm}[t]
\caption{Random maximization}
 \label{alg:random_maximization}
\begin{algorithmic}
\vspace{-1 mm}
\STATE {\bfseries Input:} Input $g$, backbone network $f$, scalar prior $h(x)$,  sampler $\mathtt{Sample}\_\mathtt{U}()$ for  $u\sim P$ sampled from a probability measure over $\mathcal{U}$ , number of random samples $K$, gradient descent step $\mathtt{GD}\_\mathtt{Step}$
\vspace{2 mm}
\STATE \textbf{function} $\mathtt{Random}\_\mathtt{Maximization}(g, f, h, \mathtt{Sample}\_\mathtt{U}, K, \mathtt{GD}\_\mathtt{Step})$
\STATE \quad $\{u^1, \ldots, u^K\} \leftarrow \mathtt{Sample}\_\mathtt{U}(K)$
\STATE \quad $u^* \leftarrow \underset{\{u_1, \ldots, u_K\} }{\arg\max} \; h\circ f(\kappa_{u^i}(g))$
\STATE \quad $u^* \leftarrow \mathtt{GD}\_\mathtt{Step}(u^*) $
\vspace{2 mm}
\STATE \quad \textbf{return} $u^*$
\end{algorithmic}
\end{algorithm}

\subsection{Construction Details for Anisotropic Nonlinear Spectral Filters}\label{app:Construction Details for Anisotropic Nonlinear Spectral Filters}

In spectral methods for graphs, we use eigenvectors as a core component for graph representation learning. However, these eigenvectors are not uniquely defined. For each eigenvector we can flip its sign and obtain another eigenvector of the same eigenvalue. Moreover, when an eigenvalue has multiplicity larger than one, any orthogonal basis of its eigenspace gives a valid set of eigenvectors. 
This means that the same graph can lead to different eigenvectors depending on the eigensolver or numerical details. Hence, a spectral graph neural network that takes these eigenvectors as an input may give different outputs for the same graph, which affects stability and can reduce performance.
Since the model should depend only on the graph and node features, and not on arbitrary choices of eigenvector bases, i.e.,  it should be invariant to the choice of the eigenbasis, it is important to remove such ambiguities. 
In this work, we focus on graph classification tasks with $D$ classes and study how to resolve eigenbasis ambiguities using prior maximization adaptive canonicalization.

\paragraph{Setup and Eigendecomposition.}
To study this, we work in the following setup. 
Consider a graph $G = ([N], \bA, \bS)$, where $[N]$ is the set of $N$ vertices, $\bA \in \mathbb{R}^{N \times N}$ is the adjacency matrix, and $\bS \in \mathbb{R}^{N \times T}$ is an array of node features (row $n$ is the $T$-dimensional feature at node $n$). We consider the normalized graph Laplacian $\mathcal{L} = \mathbf{I} - \mathbf{D}^{-\frac{1}{2}}\mathbf{A}\mathbf{D}^{-\frac{1}{2}}$ as the graph shift operator (GSO) in our experiments,\footnote{Note that any other self-adjoint GSO can be chosen in our method.} where $\mathbf{D}$ is the diagonal degree matrix with diagonal entries $(d_i)_i \in \mathbb{R}^N$, where $d_i$ is the degree of node $i$. 
The eigendecomposition of $\mathcal{L}$ is given by $\mathcal{L} = \bV^{(G)}\mathbf{\Lambda}^{(G)} \bV^{(G)\top}$, where $\bV^{(G)} = (\bv_i)_i\in\RR^{N \times N}$ is an orthogonal matrix of eigenvectors as the columns (i.e., an eigenbasis) and  $\mathbf{\Lambda}^{(G)} = \operatorname{diag}(\lambda_1, \ldots, \lambda_N)$ is the diagonal matrix of eigenvalues, where $0 \leq \lambda_1\leq \ldots \leq \lambda_N \leq 2$.

\paragraph{Band Partition and Band-Preserving Eigenbasis  Ambiguity.}
We then group the spectrum into predefined bands with boundaries $b_0 < b_1 < \cdots < b_B$ contained in $[0,2]$, where $B\in\mathbb{N}$ is the total number of bands. The total band $[b_0,b_B]$ is chosen as a subset of $[0,2]$ since the spectrum of the normalized Laplacian is guaranteed to lie in this interval\footnote{For other GSOs, different bands can be chosen to match their spectral range, or to cover a subset of the spectrum.}. 
In our implementation, we use a dyadic partitioning scheme.
Given a decay rate $0 < r < 1$, we set 
\begin{equation*}
    b_0 = 0, \; b_k = 2r^{B-k} \text{ for } k=1, \ldots, B-1, \; b_B =2,
\end{equation*}
and define the $k$-th band as $[b_{k-1}, b_k)$. A larger $B$ and smaller $r$ yields narrower bands and hence finer spectral resolution. For example, taking $B=5$ and $r=0.5$ gives five bands: $[0, 0.125)$, $[0.125, 0.25)$, $[0.25, 0.5)$, $[0.5, 1)$ and $[1, 2)$.

We now make explicit the symmetry we want the model to respect. 
Recall that the eigendecomposition of the normalized Laplacian is
$\mathcal{L} = \bV^{(G)} \mathbf{\Lambda}^{(G)} \bV^{(G)\top}$, where
$\bV^{(G)}= (\bv_i)_i\in \mathbb{R}^{N \times N}$ collects the eigenvectors
and $\mathbf{\Lambda}^{(G)} = \mathrm{diag}(\lambda_1,\ldots,\lambda_N)$ the eigenvalues.
Given the band boundaries $b_0 < \cdots < b_B$, we define for each band $k$
the index set $I_k(G) := \{i \in [N] : \lambda_i \in [b_{k-1}, b_k)\}$, and let $\bV_k^{(G)}\in \mathbb{R}^{N \times M_k(G)}$ be the submatrix of $\bV^{(G)}$
whose columns are the eigenvectors $(\mathbf{v}_i)_{i \in I_k(G)}$.
Therefore, we can write $\bV^{(G)} = [\, \bV_1^{(G)} \,|\, \cdots \, |\, \bV_B^{(G)}]$. 
For each band we denote the associated Paley-Wiener space by $\mathcal{X}_k(G) := \operatorname{span}\{\bv_i : i \in I_k(G)\}
=\operatorname{Im}(\bV_k^{(G)})$. 
The ambiguity we want to handle comes from changing the orthonormal basis
inside each band. Any other basis of the $k$-th band  has the form $\widetilde{\bV}_k^{(G)} = \bV_k^{(G)}\,\bU_k$, where $\bU_k\in\RR^{M_k(G) \times M_k(G)}$ is a unitary matrix.
Equivalently, we can write a full orthonormal basis of $\mathbb{R}^N$ whose
columns are partitioned into subsequences corresponding to the bands, and
each subsequence spans the associated \emph{Paley-Wiener space} $\mathcal{X}_k(G)$.
We can write this condition in matrix form as:  $\mathbb{R}^{N \times N}\ni\widetilde{\bV}^{(G)} = \bV^{(G)}\bU^{(G)}$, where $\bU^{(G)} := \operatorname{diag}(\bU_1, \ldots, \bU_B)\in \RR^{N \times N}$, i.e., the a block matrix with diagonal blocks $\bU_k$. 
The set of block-diagonal unitary matrices $\mathcal{U}^{(G)} = \{ \bU^{(G)} = \operatorname{diag}(\bU_1,\ldots,\bU_B)
\}$ constitutes the band-preserving transformations, i.e., the unitary matrices that keep the Paley-Wiener spaces (or bands) invariant. 
The space of these matrices constitutes the 
symmetry space in our setting.

\paragraph{Spectral Coefficients and Fixed-Dimensional Standardization. }

The above construction is defined for a single graph, but in our setting we work with a collection of graphs (i.e. a data distribution). 
The spectral support of the bands, that is, the intervals $[b_{k-1}, b_k)$, is fixed across all graphs. 
However, for a given graph $G$, the $k$-th band, i.e, the linear span of the eigenvectors whose eigenvalues lie in $[b_{k-1}, b_k)$, is a subspace that depends on $G$. 
Moreover, different graphs can have different numbers of eigenvalues in the same interval, so the dimension of the $k$-th band is not constant across graphs. 
We denote by $M_k(G)$ the dimension of the $k$-th band for the graph $G$.

Given a graph signal $\bS \in \RR^{N \times T}$  the band-wise spectral coefficients are defined to be
\begin{equation*}
    C_k(\mathbf{V}_k^{(G)},\bS) :=  \bV_k^{(G)\top}\bS \in \RR^{M_k(G)\times T} 
\end{equation*}
for $k=1,\ldots,B$.
Note that applying a unitary transformation $\bU_k\in\RR^{M_k(G)\times M_k(G)}$ on the Paley-Wiener basis and then computing coefficients is equivalent to computing coefficients in the original basis and multiplying by $\bU_k^\top$, i.e., 
\begin{equation*}
      C_k(\mathbf{V}_k^{(G)}\bU_k,\bS) 
    =
    (\bV_k^{(G)}\bU_k)^\top \bS
    =
    \bU_k^\top(\bV_k^{(G)\top}\bS)
    =
     \bU_k^{\top} C_k(\mathbf{V}_k^{(G)},\bS).
\end{equation*}

In order to apply a task network given by a standard neural network architecture, such as an MLP, we require that all inputs lie in a common vector space of a fixed dimension, independent of the particular graph $G$.  
However, the band dimensions $M_k(G)$ vary across graphs.
We therefore introduce a band-wise padding-or-truncating operator. 
For each band $k$, we predefine a target size $J_k$, which is a hyperparameter.
For a general coefficient matrix $\bC\in\RR^{M\times T}$, we define $\bar{M}_k = \min \{M, J_k\}$  and let $\bC_k = \bC(1:\bar{M}_k,:) \in \RR^{\bar{M}\times T}$  denote the  truncation to the first $\bar{M}_k$  rows of $\bC$.
Then, the padding-or-truncating operator is defined as $P_k(\mathbf{C}):=\mathbf{C}_k$ if $\bar{M}_k=J_k$, and otherwise 
\begin{equation*}
    P_k(\bC) := 
    \begin{bmatrix}
    \bC_k \\
    \mathbf{0}_{(J_k-\bar{M}_k)\times T} 
    \end{bmatrix} \in \RR^{J_k\times T}.
\end{equation*}
Thus, $P_k$ pads with zeros when $M<J_k$ and truncates when $M>J_k$.
Applying this to each band and stacking, we obtain the fixed-dimensional spectral representation 
\begin{equation*}
    P(\bV_1^{(G)} , \cdots , \bV_B^{(G)},\bS)  := 
    \begin{bmatrix}
    P_1(C_1(\bV_1^{(G)},\bS)) \\ \vdots \\ P_B(C_B(\bV_B^{(G)},\bS))
    \end{bmatrix}  \in\RR^{J\times T},
\end{equation*}
where $J = \sum_{k=1}^B J_k$.
Then, it can be fed to a task network $\Psi:\RR^{J\times T}\to\RR^D$ across all graphs.

\paragraph{Anisotropic Nonlinear Spectral Filters via Adaptive Canonicalization.}
We now resolve the band-wise eigenbasis ambiguity by prior maximization adaptive canonicalization. Consider the priors $h_d(x)=x$.
Let $\Psi:\RR^{J\times T}\to\RR^D$ be a task network and we write $\Psi(\cdot)=(\Psi_d(\cdot))_{d=1}^D$.
For each class $d\in\{1,\ldots,D\}$ and each graph-signal pair $(G,\bS)$, we select band-wise unitary transforms $\bU_1^{(d)},\ldots,\bU_B^{(d)}$ (with $\bU_k^{(d)}\in\RR^{M_k(G)\times M_k(G)}$) by prior maximization:
\begin{equation*}
    \{\mathbf{U}^{\square(d)}_1,\ldots,\mathbf{U}^{\square(d)}_B\}= \underset{\mathbf{U}^{(d)}_1,\ldots,\mathbf{U}^{(d)}_B}{\arg\max}\; 
    h_d
    \left(\Psi_d
    \left(
    \begin{bmatrix}
    P_1( C_1(\bV_1^{(G)}\bU_1^{(d)},\bS))\\ 
    \vdots\\
    P_B(C_B(\bV_B^{(G)}\bU_B^{(d)} ,\bS))
    \end{bmatrix}
    \right)
    \right).
\end{equation*}
This is implemented more efficiently by
\begin{equation*}
    \{\mathbf{U}^{\square(d)}_1,\ldots,\mathbf{U}^{\square(d)}_B\}= \underset{\mathbf{U}^{(d)}_1,\ldots,\mathbf{U}^{(d)}_B}{\arg\max}\; 
    h_d
    \left(\Psi_d
    \left(
    \begin{bmatrix}
    P_1( C_1(\bV_1^{(G)},\bS)\bU_1^{(d)})\\ 
    \vdots\\
    P_B(C_B(\bV_B^{(G)} ,\bS)\bU_B^{(d)})
    \end{bmatrix}
    \right)
    \right).
\end{equation*}
The second implementation is more efficient, as typically $N\gg J_k$, and the projections $C_k(\bV_k^{(G)},\bS))$ are only computed once before applying the optimization algorithm. 
Once the maximizing transforms are obtained, the class-$d$ logit is 
\begin{equation*}
    s_d = \Psi_d \left(
\begin{bmatrix}
P_1( C_1(\bV_1^{(G)},\bS)\bU_1^{\square(d)})\\
\vdots\\
P_B( C_B(\bV_B^{(G)},\bS)\bU_B^{\square(d)})
\end{bmatrix}
\right),
\end{equation*}
and we apply a sigmoid to obtain a per-class probability.
The training loss is the sum of $D$ binary cross-entropies (one-vs-rest).

This construction is seen as anisotropic  because  $\Psi_d$ is symmetryless, and can operate differently along different directions within the same eigenspace. %

\subsection{Construction Details for Anisotropic Point Cloud Networks}\label{app:Construction Details for Anisotropic Point Cloud Network}

We next apply adaptive canonicalization in the spatial domain, where inputs are point clouds in $\RR^3$ represented by their $(x,y,z)$ coordinates.
We focus on point-cloud classification with multiset neural networks.

\paragraph{Multiset Neural Networks and Orientation Sensitivity.}
A point cloud is represented as $\bX\in\RR^{N\times 3}$ whose $N$ rows are points.
By a multiset neural network we mean a mapping $\Psi:\RR^{N\times 3}\to\RR^D$ that is invariant to permutations of the $N$ points, i.e., 
\begin{equation*}
    \Psi(\bP\bX)=\Psi(\bX) \quad \text{ for all permutation matrices } \bP\in\{0,1\}^{N\times N}.
\end{equation*}
Architectures such as DeepSet \citep{zaheer2017deep} and PointNet \citep{qi2017pointnet}  satisfy this permutation invariance by construction.
When these models use the 3D coordinates as input features, they are not invariant to 3D rotations \citep{wiersma2022deltaconv, qian2021assanet}: for a rotation $\bR\in\SO(3)$, the logits produced for $\bX$ and for its rotated version $\bX\bR^\top$ need not coincide.
In this work, we use such multiset architectures out-of-the-box and address this orientation sensitivity by canonicalizing the 3D rotation of the input.

\paragraph{Adaptive Canonicalization over $\SO(3)$.}
Let $\Psi:\RR^{N\times 3}\rightarrow \RR^D$  be a multiset neural network producing $D$ class logits. In classification, $D$ is the number of classes. We denote $\Psi(\bX)=(\Psi_d(\bX))_{d=1}^D$ with $\Psi_d:\RR^{N\times 3}\rightarrow \RR$. Here, $\Psi$ is invariant to permutations of the $N$ points, but not invariant to 3D rotations of the point cloud. 
The prior maximization for each class $d \in \{1,\dots,D\}$ is performed by
\[
\bR_d^\square = \underset{\bR \in \mathcal{SO}(3)}{\arg\max} \; h_d(\Psi_d(\bX\bR^\top)),
\]
where class-$d$ prior is $h_d(x) = x$. 
Once we found the canonical form $\bR_d^\square$, the class scores in our anisotropic point cloud network are computed as 
$s_d = \Psi_d(\bX\bR_d^{\square\top})$
and are then passed through a sigmoid nonlinearity to obtain class probabilities.
The training loss is the sum of $D$ binary cross-entropies. 
Importantly, our adaptive canonicalization does not modify the underlying architecture of $\Psi$,  it only evaluates $\Psi$ on rotated inputs and selects an orientation.
In other words, the canonicalization setting is independent of the specific multiset neural network architecture. Below, we present three widely used multiset neural networks equipped with adaptive canonicalization.

\paragraph{Adaptive Canonicalization Applied to DeepSet (AC-DeepSet).} 
DeepSet \citep{zaheer2017deep} defines permutation-invariant mappings on arrays via pointwise encoding and summation aggregation. 
It has a universal approximation property \citep{wagstaff2022universal}.
Given $\bX\in\RR^{N\times 3}$ with rows $\mathbf{x}_1,\dots,\mathbf{x}_N$, the DeepSet has the form
\begin{equation*}
    \Psi(\bX) = \xi\left(\sum_{i=1}^N \phi (\mathbf{x}_i)\right) \in\RR^D,
\end{equation*}
where $\phi$ and $\xi$ are MLPs and $\xi=(\xi_1,\ldots,\xi_D)$ outputs $D$ logits.
Applying adaptive canonicalization, for each class $d$ we select 
\begin{equation*}
    \bR_d^\square =  \underset{\bR \in \mathcal{SO}(3)}{\arg\max} \; \xi_d\left(\sum_{i=1}^N \phi(\mathbf{x}_i \bR^\top)\right)
\end{equation*}
and then compute the  class scores as $s_d = \xi_d\left(\sum_{i=1}^N \phi(\mathbf{x}_i \bR_d^{\square\top})\right)$.

\paragraph{Adaptive Canonicalization Applied to PointNet (AC-PointNet).}
PointNet \citep{qi2017pointnet} applies a shared point encoder with a symmetric pooling operator: 
\begin{equation*}
    \Psi(\bX)  = \xi \left( \operatorname{Pool}(\phi(\mathbf{x}_1), \ldots, \phi(\mathbf{x}_N)) \right) \in \RR^D,
\end{equation*}
where $\operatorname{Pool}$ is typically max-pooling and $\xi=(\xi_1,\ldots,\xi_D)$ outputs $D$ logits.
Adaptive canonicalization selects, for each class $d$, 
\begin{equation*}
    \bR_d^\square =  \underset{\bR \in \mathcal{SO}(3)}{\arg\max} \; \xi_d \left( \operatorname{Pool}(\phi(\mathbf{x}_1\bR^\top), \ldots, \phi(\mathbf{x}_N\bR^\top))  \right), 
\end{equation*}
and the class score $s_d = \xi_d \left( \operatorname{Pool}(\phi(\mathbf{x}_1\bR^{\square\top}), \ldots, \phi(\mathbf{x}_N\bR^{\square\top}))  \right)$.

\paragraph{Adaptive Canonicalization Applied to DGCNN (AC-DGCNN).} 
DGCNN  \citep{wang2019dynamic} builds a dynamic $k$NN graph in feature space and applies edge convolution (EdgeConv) before global pooling and a final classifier head. 
Let $\bX\in\RR^{N\times 3}$ be the input shape with rows $\bx_i\in\RR^3$.
Given point features $\bh^{(\ell)}_i \in \RR^{F_\ell} $ at layer $\ell$ (with $\bh^{(0)}_i  = \bx_i$), DGCNN constructs a dynamic  $k$NN graph as $\mathcal{N}_k^{(\ell)}(i) = \operatorname{kNN}(\bh^{(\ell)}_i, \{\bh^{(\ell)}_j\}_{j=1}^N;k)$, and computes the edge features for neighbors $j\in\mathcal{N}_k^{(\ell)}(i)$ via a shared MLP $\varphi_\ell$: 
\begin{equation*}
    \be^{(\ell)}_{ij} = \varphi_\ell  ([\bh_i^{(\ell)}  \,\|\, (\bh_j^{(\ell)} - \bh_i^{(\ell)})])\in \RR^{F_{\ell+1}}.
\end{equation*}
The updated node feature is updated by symmetric aggregation $\bh_i^{(\ell+1)} = \operatorname{Pool}_{j\in\mathcal{N}_k^{(\ell)}(i)} \be^{(\ell)}_{ij}$.
Stacking $L$ such EdgeConv block yields features $\{\bh_i^{(\ell)}\}_{\ell=1}^L$.
DGCNN then form a global shape representation by pooling across points, given by
\begin{equation*}
    \bg(\bX) = \operatorname{Pool}_{i\in[N]} \left( \mathrm{concat} \left( \bh_i^{(1)}, \ldots, \bh_i^{(L)}\right)\right) \in \RR^{F_{\mathrm{glob}}},
\end{equation*}
followed by an MLP classifier head $\rho:\RR^{F_{\mathrm{glob}}}\to\RR^D$:
\begin{equation*}
    \Psi(\bX)=\rho(\bg(\bX))\in\RR^D,
\end{equation*}
and $\Psi(\bX)=(\Psi_d(\bX))_{d=1}^D$.
We apply adaptive canonicalization by rotating the input coordinates. 
For each class $d\in\{1,\ldots,D\}$ we select 
\begin{equation*}
    \bR_d^\square =  \underset{\bR \in \mathcal{SO}(3)}{\arg\max} \; \Psi_d(\bX\bR^\top),
\end{equation*}
and compute the class score as $s_d = \Psi_d(\bX\bR_d^{\square\top})$.
As above, we apply sigmoids to $(s_d)_{d=1}^D$ and train with the sum of $D$ binary cross-entropies (one-vs-rest).

\paragraph{Remark:  Canonicalization on Inputs vs.\ Vector-Valued Representations.}
    For PointNet and DGCNN, adaptive canonicalization can also be applied to vector-valued internal representations, e.g., using the Vector Neuron (VN) representation \citep{deng2021vector}. 
    In VN representations, features are organized as collections of 3D vectors (rather than scalar channels). 
    This perspective provides additional flexibility and motivation for our approach: 
    adaptive canonicalization can be applied either 
    (i) at the input level, by rotating $\bX$ as in the main text, or
    (ii) at the representation level, by rotating a VN feature tensor (or a designated VN layer output) before the downstream classifiers. 
    In the latter case, one canonicalizes the orientation of a learned geometric representation, which can be advantageous when early layers extract more stable directions than the input itself.
    In both settings, the core mechanism of adaptive canonicalization is the same: we search over $\SO(3)$ to select the rotation that maximizes the chosen prior score.

\subsection{Truncation Canonicalization}\label{app:truncation_Canonicalization}

We introduce truncation-based prior maximization on images as an additional application of our adaptive canonicalization framework. 
Intuitively, many image classes are invariant under removing uninformative regions: as long as the object of interest remains in the field of view, the class label should not change.
We consider, in this case, the truncation as our ``symmetry'' and apply prior maximization over this family of transformations to select a canonical truncation. We note that this transformation family is not based on a group action.

\paragraph{Theory.}

We model images as elements of a set of $\mathcal{G}$ in the following way. 
Given $N\in \mathbb{N}$, consider the regular grid on the unit square $[0,1]^2$ given by the pixels $Q_{i,j} = [\frac{i}{N}, \frac{i+1}{N}) \times [\frac{j}{N}, \frac{j+1}{N}) $ for $0 \leq i, j < N$. 
We define  $\mathcal{G}$ to be the set of all functions $g\in L_2(\RR^2)$ such that $\operatorname{supp}(g) \subset [0,1]^2$ and $g$ is piecewise constant on the grid, i.e., there exist coefficients $v_{i,j} \in [0,1]$ such that $g(x) = v_{i,j}$ for all $x\in \{Q_{i,j}\}$. 
Since each image $g\in\mathcal{G}$ is uniquely determined by its collection of pixel values $(v_{i,j})_{0\leq i,j < N} \in [0,1]^{N^2}$, we can identify (via an isometric isomorphism) $\mathcal{G}$ with with the box $[0,1]^{N^2} \subset \RR^{N^2}$. 
We consider the standard Euclidean metric and topology on $[0,1]^{N^2}$, which is hence a compact space.
Because $\mathcal{G}$ is isometrically isomorphic to $[0,1]^{N^2}$, the set $\mathcal{G}$ is also compact with respect to the $L^2(\mathbb{R}^2)$ metric (and therefore also locally compact).

The truncation is parameterized by four coordinates collected in a vector $u = (x_{\mathrm{L}}, y_{\mathrm{T}}, x_{\mathrm{R}}, y_{\mathrm{B}})\in \mathcal{U} \subset [0,1]^4$,  where we impose the constraints $0 \leq x_{\mathrm{L}} \leq x_{\mathrm{R}} \leq 1$ and $0 \leq y_{\mathrm{T}} \leq y_{\mathrm{B}} \leq 1$. We consider the standard Euclidean metric in $\mathcal{U}$.
Since $\mathcal{U}$  is a closed and bounded subset of $[0,1]^4$, it is  compact.

We now define the truncation transformation.
For a parameter $u = (x_{\mathrm{L}}, y_{\mathrm{T}}, x_{\mathrm{R}}, y_{\mathrm{B}})\in \mathcal{U}$, we view $u$ as encoding the top-left and bottom-right corners of an axis-aligned rectangle: $R(u) = [x_{\mathrm{L}}, x_{\mathrm{R}}] \times [y_{\mathrm{T}}, y_{\mathrm{B}}] \subset [0,1]^2$. 
As $u$ varies in the compact space $\mathcal{U}$, this truncation window moves continuously inside the domain containing all images.
Given an image $g\in\mathcal{G}$, we define the truncation by 
\begin{equation*}
    x\mapsto 
    \begin{cases}
        g(x), & x\in R(u), \\
         0, & x \not\in R(u). \\
    \end{cases}
\end{equation*}
After truncation, we include a dilation step that maps the truncated window back to the full image domain. 
Let $\tau_u:[0,1]^2\to R(u)$ be an affine map defined by
\begin{equation*}
    \tau_u(x_1, x_2) = (x_{\mathrm{L}}+(x_{\mathrm{R}}-x_{\mathrm{L}})x_1,\;
     y_{\mathrm{T}}+(y_{\mathrm{B}}-y_{\mathrm{T}})x_2).
\end{equation*}

The function lies in $L_2(\RR^2)$ but not in $\mathcal{G}$ in general. We then project it back onto the finite-dimensional subspace $\mathcal{G}$ of piecewise constant functions by the orthogonal projection $Q:L_2(\RR^2) \rightarrow \mathcal{G}$ 
defined by 
\begin{equation*}
    (Qg)(x) =  \sum_{i,j=1}^N \bar g_{ij}\, \mathbbm{1}_{Q_{ij}}(x), \qquad
\bar g_{ij}
= N^2 \int_{Q_{ij}} g(y)dy. 
\end{equation*}
We then define $\kappa_u: \mathcal{G} \rightarrow \mathcal{G}$ by 
\begin{equation*}
    (\kappa_u(g)) = Q((g\,\mathbbm{1}_{R(u)}) \circ \tau_u).
\end{equation*}

We now verify that this truncation family satisfies our assumptions. 
By construction, $\mathcal{U}$ is compact, and the image space $\mathcal{G}$ is compact. 
In our setting we have $\mathcal{K} = \mathcal{G}$, so $\mathcal{K}$ is also compact. 
The map 
\[\kappa_{(\cdot)}(\cdot\cdot): \mathcal{U}\times \mathcal{G}\rightarrow \mathcal{K}, \quad  (u,g)\mapsto \kappa_u(g)\in\mathcal{K}\]
is continuous, and hence, by compactness of the product $\mathcal{U}\times \mathcal{G}$, it is uniformly continuous.
This implies that $\{g\mapsto f(\kappa_u(g))\}_{u\in \mathcal{U}}$ is equicontinuous.
Thus, the truncation transformation family satisfies the conditions for continuous canonicalization.

\paragraph{Implementation.}
We work with array images  $\bX\in\RR^{N\times N}$ and use a finite set of admissible truncations.
Instead of allowing truncation at fractions of pixels, in practice we only allow truncation at the boundaries of the pixels.
We also exclude very small windows, since they are likely to contain only a part of the object to be classified.
Formally, this means that we consider a finite transformation set $\mathcal{T}$ and endow it with the trivial metric. That is, the distance is $1$  between distinct truncations.
Hence, $\mathcal{T}$ is compact and the truncation map is trivially continuous under this metric,  so the attainment arguments for prior maximization adaptive canonicalization apply verbatim.

Concretely, we parameterize pixel-aligned windows by a side-length $s\in [s_{\text{min}}, s_{\text{max}}] \subset [0.5, 1.0]$ and a discrete top-left corner.
For a given $s$, we set the truncation side length to $M(s) = \lfloor s N \rfloor$, and require the window to lie inside the image domain, i.e., $1 \le i_0 \le N - M(s) + 1$ and $1 \le j_0 \le N - M(s) + 1$. 
The corresponding axis-aligned window is $\{(i,j) \in \{1,\ldots,N\}^2 \,\big|\,
i_0 \le i \le i_0 + M(s) - 1,\;
j_0 \le j \le j_0 + M(s) - 1\}$. 
The truncation then keeps only the pixels inside the window and zeros out the rest, which yields a truncated patch $C_{(i_0, j_0, s)} (\bX) \in \RR ^{M(s) \times M(s)}$.
To keep the input dimension fixed, we then rescale this cropped patch back to the original resolution $N \times N$ using a standard interpolation scheme. We denote this resizing operator by $R_s: \mathbb{R}^{M(s) \times M(s)} \to \mathbb{R}^{N \times N}$. 
The truncation operator is therefore defined as the composition 
\begin{equation*}
    T_{(i_0, j_0, s)} (\bX) = R_{s} (C_{(i_0, j_0, s)} (\bX)) \in \RR^{N \times N}.
\end{equation*}
The transformation family used in our method is
\[
\mathcal{T}
=
\{
T_{(i_0,j_0,s)}
\;|\;
s \in [s_{\min}, s_{\max}],\;
(i_0,j_0) \text{ admissible as above}
\},
\]
and random truncations are obtained by sampling $s$ and $(i_0, j_0)$ from a suitable distribution under these constraints.

\paragraph{Truncation Canonicalization by Prior Maximization.}
We consider image classification with $D$ classes. 
Let $\Psi:\RR^{N\times N}\to\RR^D$ be a symmetryless neural network and we write $\Psi(\bX)=(\Psi_d(\bX))_{d=1}^D$.
For each class $d\in\{1,\dots,D\}$ we select the canonical truncation by 
\begin{equation*}
    T_{(i_0,j_0,s)}^{\square}=\underset{T_{(i_0,j_0,s)} \in \mathcal{T}}{\arg\max}\;
h_d(
\Psi_d(T_{(i_0,j_0,s)}(\bX)
)
),
\end{equation*}
and compute the class-$d$ score as $s_d = \Psi_d( T_{(i_0,j_0,s)}^{\square}(\bX))$.
The scores $(s_d)_{d=1}^D$ are passed through sigmoids to obtain per-class probabilities, and training uses the sum of $D$ binary cross-entropies (one-vs-rest).

\section{Extended Related Work}\label{app:related_work}
We provide an extended discussion of related work for equivariant machine learning.

\subsection{Canonicalization}

Canonicalization \citep{babai1983canonical, palais1987general, ma2023laplacian, ma2024canonicalization} is a classical strategy for handling symmetry in data, especially for tasks where invariance or equivariance to group actions is desirable \citep{gerken2023geometric}. 
It preprocesses each input by mapping it to a standard form prior to downstream learning and inference, so that all symmetry-equivalent inputs are treated identically by the subsequent model.
There are two common ways for canonicalization: fixed or learned approaches.  

\begin{itemize}

    \item \textbf{Fixed Canonicalization.} Fixed canonicalization uses deterministic and often analytic procedures to assign a unique representative to each symmetry orbit. 
    For example, principal component analysis (PCA) \citep{jolliffe2016principal} alignment canonically orients an object (e.g., a point cloud or molecule) by rotating it so its principal components align with the coordinate axes \citep{kazhdan2003rotation}.
    Procrustes analysis \citep{gower1975generalized} canonically orients sets of points by finding the optimal rotation, translation, and scale that minimizes squared point-to-point distances to a reference. 
    For sets and graphs, canonicalization can be achieved by reordering nodes, atoms, or features so that isomorphic inputs share a single labeling.
    In spectral methods and spectral graph neural networks where eigenvectors are fundamental \citep{kipf2016semi, defferrard2016convolutional, von2007tutorial, belkin2003laplacian, dwivedi2023benchmarking, maskey2022generalized, bracha2024wormhole, velich2025learning}, canonicalization of spectral decomposition \citep{lim2022sign, lim2023expressive, ma2023laplacian, ma2024canonicalization} addresses eigenbasis ambiguity \citep{chung1997spectral, spielman2012spectral} by processing each eigenspace independently and selecting representative eigenvectors or directions by applying orthogonal or axis-based projections, typically as a graph preprocessing step. 
    An alternative approach is eigenbasis canonicalization via the input signal, where the signal itself is used to define a canonical spectral representation, making the spectral transformation independent of the arbitrary choice of eigenvectors \citep{lin2024equivariant, geisler2024spatio}.

    \item \textbf{Learned Canonicalization.} %
    Learned canonicalization \citep{zhang2018learning, kaba2023equivariance, luo2022equivariant} seeks to overcome the rigidity and inflexibility of fixed rules with a trainable mapping that selects a representative for each symmetry orbit.
    The canonicalizer is parameterized (typically as a neural network and trained to produce canonical forms. 
    For example, \cite{kaba2023equivariance} developed a neural network that learns the canonicalization transformation, which enables plug-and-play equivariance, e.g., orthogonalizing
    learned features via the Gram-Schmidt process \citep{trefethen2022numerical}. 
    Their results show that the learned canonicalizers outperform fixed canonicalizers. 

\end{itemize}

However, \cite{dym2024equivariant} pointed out that regardless of whether the canonicalizations are learned or not, a continuous canonicalization does not exist for many common groups (e.g. S$_n$, SO(d), O(d) on point clouds $n\geq d$). 
Therefore, while learned canonicalization improves empirical performance, it remains generally discontinuous and can induce instability, hinder generalization, limit model reliability on boundary cases, or out-of-distribution data. 
In contrast, our adaptive canonicalization framework learns the optimal transformation for each input by maximizing the predictive confidence of the network, resulting in a continuous and symmetry preserving mapping. 
We include a detailed comparison of most related canonicalization work with our method below.

\subsubsection{Equivariance with Learned Canonicalization Functions}\label{app:Comparison to Equivariance with Learned Canonicalization Functions}
Recently, \cite{kaba2023equivariance} introduced energy-based canonicalization. The central idea is to learn an energy function over samples and group elements, and define the canonicalization as minimizing this energy with respect to the group, given a fixed datapoint. Specifically, their energy minimization is related to our prior maximization adaptive canonicalization method. 
However, there are several important conceptual and technical differences between their approach and our adaptive canonicalization.

First, their energy $s$ is not the task neural network like in our analysis, but rather some other trainable neural network. Similarly to our approach, training $s$ end-to-end with the task neural network can be seen as canonicalization that depends on the task network, if one considers the full end-to-end architecture consisting both of the energy minimization and the task network.  However, the approach in \cite{kaba2023equivariance} does not give continuity guarantees as opposed to our approach.
Second, in their work, they consider symmetries based on group actions, while we consider a more general setting of canonicalization transformations (or augmentations) that need not be based on groups. This makes our approach much more applicable across different domains.
Moreover, in their framework, there is one canonical form for each datapoint and network, while in our approach, each output channel of the network defines a different canonical form of the datapoint. This allows our approach to preserve continuity. Notably, \cite{kaba2023equivariance} do not attempt to study the continuity of the end-to-end predictor. 

Another difference in \cite{kaba2023equivariance} is that when training is initialized, the canonicalizing energy $s$ is random. This leads each datapoint to be randomly transformed, so the task neural network initially has to perform well at all orientations of the data. This can lead the task network to ultimately learn an ``average behavior,'' not specializing in any special orientation but rather performing reasonably well on all orientations of the data. In other words, the limited set of trainable parameters has to simultaneously specialize in many orientations, which reduces the network’s expressive power. In contrast, in our prior maximization approach, from the beginning of training,  the network only pays a price for not performing well on the single best orientation per datapoint (on which the network performs the best). This encourages the network to specialize on one canonical orientation per datapoint, and not learn an average behavior. Hence, in our approach, all trainable parameters of the task network can focus on performing well only on the sole canonical orientation of each datapoint.

\subsubsection{Canonicalization and Data Re-Alignments}

As discussed in Appendix~\ref{app:Comparison to Equivariance with Learned Canonicalization Functions}, in the energy-based canonicalization framework  of \citep{kaba2023equivariance}, the canonicalizing energy $s$ is random at initialization. 
As a result, it leads each datapoint to be randomly transformed and the task neural network initially has to perform well at all orientations. 
This can lead the task network to ultimately learn an ``average behavior,'' not specializing in any special orientation but rather performing reasonably well on all orientations of the data. 
To address this effect, \cite{mondal2023equivariant}  biases the canonical transformation of each datapoint to be the identity, assuming that the datapoints in the training set already have a small orientation variance. On top of that, \cite{schmidt2025robust} iteratively reduces the orientation variance of the training set by iteratively reorienting datapoints that lead to a large loss. We note that these approaches are rather different from our prior maximization method, and they do not try to address the continuity problem in canonicalization.

\subsubsection{Weighted Canonicalization}

The energy-based canonicalization \citep{kaba2023equivariance} was further explored and extended in \citep{shumaylov2024lie} on symmetries defined by general Lie group actions. Similarly to our work, \cite{shumaylov2024lie} also discusses continuity preservation, but their approach is different from ours.
In the work of \cite{shumaylov2024lie}, they define the notion of weighted canonicalization, which is a similar concept to the weighted frame introduced by \cite{dym2024equivariant}. Here, to each datapoint there is an assigned probability measure over the orbit of the datapoint. Namely, the distribution is over the space of data instead of over the group like in the weighted frames of \cite{dym2024equivariant}.
With respect to energy minimization, this approach is not very different from \cite{kaba2023equivariance}. The main difference is in the minimization algorithm, which minimizes over the Lie algebra instead of the Lie group.
It is important to note that their work does not train the energy end-to-end with the neural network. Hence, in their work, the canonicalization depends on the whole training set, but not on the task network, which is quite different from our approach.
Moreover, in their setup, one has to learn an approximation of the data distribution, which is invariant to the group action. This is a highly nontrivial approach to implement. In contrast, our prior maximization is simple and direct.

\subsubsection{Test-Time Canonicalization
}

Another recent extension of canonicalization,   \citep{singhal2025test} , explores a set of transformations at test time and uses the scoring functions of large pre-trained foundation models like CLIP \citep{radford2021learning} or SAM \citep{kirillov2023segment} to select the most ``canonical'' representation upon which downstream inference is performed. 
Note that the work in \cite{singhal2025test} does not involve network retraining, uses the foundation models as is, and performs canonicalization entirely at inference by optimizing over transformations. 
While it achieves strong empirical performance, its canonicalization mapping is not guaranteed to be continuous, and in fact, continuity is not discussed.  Therefore, small input changes may cause abrupt switches in the selected canonical view.

A related line of work is inverse transformation search \citep{schmidt2024tilt}, which also performs test-time optimization over transformations to exploit invariances. 
Their method focuses on the standard action of the special linear group on images (rotations, scalings, and shear transformations), i.e., purely group-based symmetries, and similar to \cite{singhal2025test}, does not address continuity.
% not discuss and guarantee continuity.
In addition, their method does not train the model simultaneously with the canonicalization.
While their energy-induced confidence is similar to our prior-maximization formulation in the classification setting, it does not lead to a continuity guarantee.
In contrast, our work rigorously develops sufficient conditions on the canonicalizer for the canonicalized network to be continuous and have a universal approximation property. 
Specifically, our one-vs.-all setting is a different type of energy that does lead to continuous end-to-end classifiers.

\subsection{Frame Averaging}

Frame averaging \citep{puny2021frame} achieves equivariance to group symmetries by averaging a network's output over a set of group transformations (known as a ``frame''). 
It is built on the classical group averaging operator, which guarantees symmetries by summing a function over all group elements.
Frame averaging has two main advantages: 1) it allows adaptation of standard non-equivalent network architectures to handle symmetry, similar to canonicalization methods, and 2) it avoids computational intractability of full group averaging, especially for large or continuous groups.  
Recent work \citep{lin2024equivariance} proposes minimal frame averaging that attains strong symmetry coverage with small frames. 
Domain-specific frame averaging methods \citep{duval2023faenet, atzmon2022frame} show that it can be deployed in material modeling and geometric shape analysis.
However, it requires a careful selection of a suitable frame.
In addition, frame averaging uses a fixed set of transformations independent of the input or task, potentially leading to sub-optimal or less discriminative feature representations. 
In contrast, our adaptive canonicalization learns the optimal transformation for each input in a data- and network-dependent way, yielding symmetry preserving continuous functions that can improve representation quality and empirical task performance.

A related work that addresses continuity is the weighted frame averaging proposed by \cite{dym2024equivariant}. 
They first prove that in many well-known cases, continuous canonicalization is impossible. 
This does not contradict our work, as in \cite{dym2024equivariant} the canonicalization is a function solely of the datapoint, and not the task network. Then, they define a variant of frame averaging, called weighted frame averaging, in which to each datapoint there is an associated probability distribution over the group, and the frame averaging is performed with respect to this measure. 
This construction yields continuity guarantees. 
However, their focus is fundamentally different from ours: they study frame averaging while we focus on canonicalization.
Moreover, the weighted frame averaging is a function only of the data, not the neural network, as opposed to our method.  
We next compare our framework to weighted frame averaging in more detail.
\begin{itemize}
\item In our method, data need not come from a vector space of a fixed dimension (see e.g., the application for graphs). In contrast, weighted frame averaging requires working with data that comes from a vector space of a fixed dimension.
\item In our work, symmetries need not be based on group representations. Our notion of symmetry is called a transformation family, and our ``symmetries'' are not even required to be invertible or based on a group action. See, for example, the image truncation transformation in Appendix~\ref{app:truncation_Canonicalization}. On the other hand, the symmetries in weighted frame averaging are required to be representations of compact groups.
\item In the framework of weighted frame averaging, the requirement that the weighted frame is robust is sufficient for continuity of the canonicalized function.
However, this requirement is quite strong, and constructing robust frames could be challenging. 
In contrast, we achieve continuity of the canonicalized function even though the maximizer in prior maximization need not be continuous in any sense (as a function of the datapoint). The maximizer need not even be uniquely defined.
Hence, practitioners can use our method out of the box on new domains with new symmetries without having to prove any nontrivial mathematical results. 
In weighted frame averaging, employing the method in a new setting typically requires carefully constructing a problem-specific weighted frame and proving that it is robust. It often involves nontrivial mathematical work and there is no general recipe for doing so.
In our case, once an architecture and a symmetry setting are chosen, prior-maximization adaptive canonicalization is straightforward to implement and does not require additional sophisticated proofs from the practitioner.
The only assumption that needs to be checked is continuity of the chosen transformations, which is usually easy to verify.
\item Robust weighted frames often require averaging the predicting network over many transformations of the input. 
In practice, our prior maximization approach works with fewer random argmax candidates. For comparison, frame averaging for rotations of $n$ points in a 3D point cloud requires an order of $n^2$ transformations (where $n=1024$ in the ModelNet40  dataset for example), while, in practice, prior maximization performs well with a total of 50 random transformations.
\end{itemize}
These differences imply that our framework is more flexible in terms of the data types and symmetry structures it can accommodate, while also imposing a lighter mathematical burden on practitioners who wish to apply it.
This gives much more freedom for future practitioners, and may lead to a wider adaptation of the method.

We note that the earlier work \citep{basu2023efficient} proposed a similar idea to weighted frame averaging, but did not study continuity preservation.

\subsection{Equivariant Architectures}

Equivariant architectures are a class of models explicitly designed to respect symmetry groups acting on the data.
Formally, a network is equivariant to a group of transformations if, when the input is transformed by a symmetry group action, the output transforms via the same group action. That is, for a group $G$ and a function $f$, equivariance guarantees $f(g\cdot x) = g \cdot f(x)$ for all $g\in G$ and input $x$. 
It has been developed across images \citep{cohen2016group, cohen2016steerable, kondor2018generalization, worrall2017harmonic, weiler2018learning}, graphs \citep{bronstein2017geometric, zaheer2017deep, gilmer2017neural, maron2018invariant, kofinas2024graph, thiede2020general, vignac2020building, keriven2019universal}, molecules \citep{thomas2018tensor, brandstetter2021geometric, anderson2019cormorant, fuchs2020se, satorras2021n, duval2023hitchhiker, schutt2017quantum, liao2023equiformer, hordan2025spectral, du2022se, passaro2023reducing}, and manifolds \citep{masci2015geodesic, monti2017geometric, cohen2019gauge, weiler2021coordinate}. 
Notably, \cite{maron2019universality} studies the universal approximation property for equivariant architectures.
Notably, equivariant networks often demand group-specific designs that rely on group theory, representation theory, and tensor algebra. This can reduce flexibility and raise compute and memory costs \citep{pertigkiozoglou2024improving, wang2023discovering, liao2023equiformer}. 
Making nonlinearities, pooling, and attention strictly equivariant further constrains layer choices and can increase parameter count and runtime. 
Moreover, imposing symmetry throughout the stack may limit the expressivity when data only approximately respect the assumed symmetry or contain symmetry-breaking noise \citep{wang2022approximately, lawrence2025improving}.
In contrast, our approach handles symmetry by learning an input- and task-dependent canonical form through prior maximization and applying a standard backbone. 
By construction, this mapping is continuous and symmetry preserving. 
Our adaptive canonicalization removes heavy group-specific layers, reduces per-layer equivariant cost, and keeps ordinary nonlinearities and pooling.

\subsection{Data Augmentation}
Data augmentation \citep{shorten2019survey, mumuni2022data, yang2022image} incorporates symmetry by explicitly applying symmetry transformations to training inputs, thereby encouraging the model to make predictions that are stable (or transform predictably) under those transformations. 
In this way, augmentation promotes invariance or equivariance by exposing the model to multiple transformed views of the same input (i.e., broader coverage of the transformation orbit).
In a sense,  augmentation changes the data distribution rather than the model.
For example, if the original data consists of images of axis-aligned rectangles, then applying rotation augmentation induces a data distribution containing rectangles at general orientations.
Recent work has analyzed augmentation through multiple lenses, including training dynamics, implicit regularization, group-theoretic foundations, and learnable augmentation strategies \citep{gens2014deep, chen2020group, yang2023sample, hernandez2018data, hoffer2019augment, shen2022data, dao2019kernel, santos2025learning, zhang2018mixup, zhong2020random, devries2017improved, hendrycks2020augmix, tahmasebi2025data}.
While both data augmentation and our adaptive canonicalization use transformations,  they do so in fundamentally different ways.
Augmentation adds transformed samples and implicitly  trains a task network to  behave consistently across them, so symmetry is encouraged indirectly through the distributional shift entailed by augmentation.
Adaptive canonicalization instead selects (for each input) a canonical transformation from the same family by optimizing a prior score, and then evaluates the task network on the resulting canonical representative.
Thus, augmentation spreads supervision across many transformed versions, whereas canonicalization explicitly reduces transformation ambiguity by mapping each example to a preferred representative.
Moreover, canonicalization can be anisotropic, selecting different canonical transformations for different decision heads, which is not captured by standard augmentation strategies.

\section{Functional Calculus and Spectral Filters}\label{app:functional_calculus}

In this section, we recall the theory of plugging self-adjoint operators inside functions. 

\paragraph{Spectral Theorem.}
Let $\mathcal{L}$ be a self-adjoint operator on a finite-dimensional Hilbert space (e.g., $\mathcal{L}\in\mathbb{C}^{N\times N}$ with $\mathcal{L}=\mathcal{L}^*$). 
There exists a unitary matrix $V$ and a real diagonal matrix $\Lambda = \operatorname{diag} (\lambda_1, \ldots, \lambda_N)$ such that $\mathcal{L} = V \Lambda V^*$. 
The columns $v_i$ of $V$ form an orthonormal eigenbasis with $\mathcal{L}v_i = \lambda_i v_i$.

\paragraph{Functional Calculus.}
For any function $f:\mathbb{R}\to\mathbb{C}$ defined on the spectrum $\sigma(\mathcal{L})=\{\lambda_1,\dots,\lambda_N\}$:
\[
\quad f(\mathcal{L}) \;:=\; Vf(\Lambda)V^*, 
\qquad
f(\Lambda)=\mathrm{diag}\big(f(\lambda_1),\dots,f(\lambda_N)\big). \quad
\]
Equivalently, we can write the spectral projections $P_i:=v_i v_i^*$,
\[
f(\mathcal{L}) \;=\; \sum_{i=1}^N f(\lambda_i)\, P_i.
\]
We can plug a self-adjoint matrix into a function by: 1) diagonalize $\mathcal{L}$, 2) apply $f$ to the eigenvalues, and 3) conjugate back.

Take $f = \mathbbm{1}_I$ for a Borel set $I\subset\RR$. 
The indicator function $\mathbbm{1}_I(\mathcal{L})$ is an orthogonal projection, 
since $\mathbbm{1}_I(\mathcal{L})^2=\mathbbm{1}_I(\mathcal{L})$ and 
$\mathbbm{1}_I(\mathcal{L})^*=\mathbbm{1}_I(\mathcal{L})$. 

\paragraph{Spectral Graph Filters.}
A \emph{graph shift operator (GSO)} is a self-adjoint matrix that reflects the graph’s connectivity, such as a (normalized) graph Laplacian or a symmetrized adjacency.
Let $\mathcal{L}$ be such a GSO with eigenpairs $\{(\lambda_i,v_i)\}_{i=1}^N$ and $V=[v_1,\ldots,v_N]$.
For a $T$-channel node signal $\mathbf{X}\in\mathbb{R}^{N\times T}$ and a matrix-valued frequency response $g:\mathbb{R}\to\mathbb{R}^{d'\times T}$, the spectral filter 
\begin{equation*}
g(\mathcal{L})\mathbf{X}:=
\sum_{i=1}^N v_i v_i^\top\, \mathbf{X}\, g(\lambda_i)^\top
\end{equation*}
applies the graph convolution theorem \citep{bracewell1966fourier} with $d'$ output channel: each Fourier mode $v_i$ is preserved in space, while channels are mixed by $g(\lambda_i)$ in the spectral domain.
In the scalar case ($T=d'=1$) with $f:\mathbb{R}\to\mathbb{R}$, the spectral filter simply reduces to the functional-calculus operator acting on $\mathbf{X}$:
\[
f(\mathcal{L})\mathbf{X}
=\sum_{i=1}^N f(\lambda_i)\,v_i v_i^\top \mathbf{X}
\;=\; V f(\Lambda) V^\top \mathbf{X}.
\]
Spectral graph neural networks \citep{defferrard2017convolutional,kipf2016semi,levie2018cayleynets} compose such filters with pointwise nonlinearities, using trainable $g$ at each layer.

\section{Universal Approximation Theorems}
\label{Ap:Universal Approximation Theorems}

\subsection{Universal Approximation of 
Euclidean Fucntions}

Here, we cite a classical result stating that multilayer perceptrons (MLPs) are universal approximators of functions over compact sets in Euclidean spaces.

\begin{theorem}[Universal Approximation Theorem \citep{LESHNO1993861}]
    \label{thm:universal_classical}
    Let $\sigma:\RR\to\RR$ be a continuous, non-polynomial function. 
    Then, for every $M,L \in \sN$, compact $\gK \subseteq \sR^M$, continuous function $f:\gK\to\sR^L$ , and $\varepsilon > 0$, there exist $D \in \sN, \mW_1\in\sR^{D\times M}, \vb_1\in\sR^D$ and $\mW_2\in\sR^{L\times D}$ s.t. 
    % such that
    \begin{equation}
        \label{eq:Pinkus}
        \sup_{\vx \in \mathcal{K}}\; \abs{ f(\vx) - \mW_2\sigma\left(\mW_1\vx+\vb_1\right) } \leq \varepsilon.
    \end{equation}
\end{theorem}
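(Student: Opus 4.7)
The plan is a two-step reduction, following the classical argument of Leshno--Lin--Pinkus--Schocken. First I would reduce to the scalar-output case. Given continuous $f=(f_1,\ldots,f_L):\gK\to\sR^L$, if each coordinate $f_\ell$ is uniformly approximated to error $\varepsilon/L$ by a scalar network $\vx\mapsto \vc_\ell^\top \sigma(\mW_\ell\vx+\vb_\ell)$, I would stack the hidden layers (by concatenating the matrices $\mW_\ell$ as block rows of a single $\mW_1$ and the biases into $\vb_1$) and assemble the $\vc_\ell^\top$ into a block-diagonal $\mW_2\in\sR^{L\times D}$, producing a single MLP approximating $f$ in the sup-norm to error $\varepsilon$ on $\gK$. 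Thus everything reduces to showing that the ridge span $\mathcal{R}(\sigma):=\mathrm{span}\{\sigma(\vw\cdot\vx+b)\ |\ \vw\in\RR^M,\,b\in\RR\}$ is dense in $C(\gK)$.

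The heart of the proof addresses this density. Suppose first $\sigma\in C^\infty(\RR)$. Since $\sigma$ is non-polynomial, I would invoke the key real-analytic lemma that there exists $b_0\in\RR$ with $\sigma^{(k)}(b_0)\neq 0$ for every $k\in\sN$; this follows from a Baire category argument applied to the closed sets $A_k=\{b\in\RR: \sigma^{(k)}(b)=0\}$, combined with the fact that if some $A_k$ contains an open interval then $\sigma$ coincides with a polynomial of degree $<k$ on that interval, which propagates globally to contradict the hypothesis. Fixing $\vw$ and taking iterated finite difference quotients of $\sigma(\vw\cdot\vx+b)$ in $b$ at $b=b_0$ shows that $\sigma^{(k)}(\vw\cdot\vx+b_0)\in\overline{\mathcal{R}(\sigma)}$ for every $k$. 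Substituting $\vw=t\vv$ and taking iterated difference quotients in $t$ at $t=0$ then extracts the monomials $\sigma^{(k)}(b_0)(\vv\cdot\vx)^k$, so $(\vv\cdot\vx)^k\in\overline{\mathcal{R}(\sigma)}$ for every $\vv\in\RR^M$ and $k\geq 0$. Varying $\vv$ these span all polynomials in $\vx$, and Stone--Weierstrass on the compact $\gK$ gives $\overline{\mathcal{R}(\sigma)}=C(\gK)$.

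To handle the case where $\sigma$ is merely continuous, I would mollify: let $\sigma_\epsilon:=\sigma*\varphi_\epsilon$ for a standard smooth compactly supported mollifier $\varphi_\epsilon$. Then $\sigma_\epsilon\in C^\infty$, and for each fixed $\vw,b$, the map $\vx\mapsto\sigma_\epsilon(\vw\cdot\vx+b)$ is a uniform limit on $\gK$ of Riemann sums of $\sigma(\vw\cdot\vx+b-y)\varphi_\epsilon(y)\Delta y$, each term of which lies in $\mathcal{R}(\sigma)$; hence $\mathcal{R}(\sigma_\epsilon)\subseteq\overline{\mathcal{R}(\sigma)}$. A short argument shows $\sigma_\epsilon$ is non-polynomial for all sufficiently small $\epsilon$ (else $\sigma$, as a local uniform limit of polynomials of bounded degree, would itself be polynomial). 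Applying the smooth case to $\sigma_\epsilon$ yields $C(\gK)=\overline{\mathcal{R}(\sigma_\epsilon)}\subseteq\overline{\mathcal{R}(\sigma)}$, completing the proof.

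The main obstacle is the ``some point, all non-vanishing derivatives'' lemma used in the smooth case: ``non-polynomial'' is a global statement, while the proof requires a single base point $b_0$ at which every derivative is nonzero, so the Baire step must be set up carefully. Once that lemma is available, the rest is routine bookkeeping with difference quotients, Riemann-sum approximation of convolutions, and Stone--Weierstrass on $\gK$.
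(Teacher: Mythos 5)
The paper does not prove this statement; it is quoted from Leshno--Lin--Pinkus--Schocken and labelled as such, so there is no in-paper argument to compare against. Your sketch follows the standard route of that reference and is correct in outline, but the step you describe as ``propagates globally'' in the Baire lemma is a genuine gap as written. Knowing that $\sigma\in C^\infty(\RR)$ agrees with a polynomial on some open interval does not by itself force $\sigma$ to be polynomial on all of $\RR$: take $\sigma(b)=0$ for $b\le 0$ and $\sigma(b)=e^{-1/b}$ for $b>0$, which is smooth, vanishes (hence is polynomial of degree $0$) on $(-\infty,0)$, yet is not globally polynomial. The contradiction in the lemma must instead exploit the full hypothesis that some derivative vanishes at \emph{every} point: one shows the open set $E$ on which $\sigma$ is locally a polynomial is dense (Baire applied on every subinterval), that its closed complement $F$ has no isolated points, applies Baire a second time inside the perfect set $F$ to the sets $A_k\cap F$ to force $F=\emptyset$, and finally glues the local polynomials on the connected components of $E=\RR$ into a single global polynomial. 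A parallel remark applies to your mollification step: the claim that $\sigma$ is a local uniform limit of polynomials ``of bounded degree'' is not automatic and itself requires another Baire argument on the Fr\'echet space $C_c^\infty(\RR)$, decomposing it into the closed linear subspaces $\{\varphi:\ \sigma*\varphi \text{ is a polynomial of degree} \le n\}$ to extract the uniform degree bound. With those two lemmas supplied, the remainder of your argument --- reduction to scalar output, iterated difference quotients to extract $(\vv\cdot\vx)^k$, and Stone--Weierstrass on the compact $\gK$ --- is standard and correct.
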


\subsection{Universal Approximation of Multi-Sets Functions}
\label{Universal Approximation of Multi-Sets Functions}

Multisets are sets where repetitions of elements are allowed. Formally, a multiset of $N$ elements in $\RR^J$ can be defined as a set of pairs $(x,i)$ where $x$ denotes a point and $i$ the number of times the point $x$ appears in the multiset. 

Standard universal approximation analysis of multi-set functions goes along the following lines. First, we represent multisets as 2D arrays $\bX\in\RR^{N\times J}$, where each row $\bX_{n,:}\in\RR^J$ represents one point in the multiset. Note that the same multi-set can be represented by many arrays. In fact, two arrays $\bX$ and $\bX'$ represent the same multi-set if and only if one is a permutation of the other.

To formulate this property, let $\mathcal{S}_N$ be the symmetric group of $N$ elements, i.e., the group of permutations of $N$ elements. Given $s\in\mathcal{S}$ and $\bX\in\RR^{N\times J}$, let $\rho(s)\bX$ denote the permutation of $\bX$ via $s$. By convention, permutations change the order of the $N$ rows of $\bX$, and keeps each row intact. Now, $\bX$ and $\bX'$ represent the same multi-set if and only if there is a permutation $s\in\mathcal{S}_N$ such that $\bX=\rho(s)\bX'$.

Now, for a function $y:\RR^{N\times J}\rightarrow\RR^D$ to represent a multi-set function, it should be invariant to permutations, i.e., for every $s\in\mathcal{S}_N$ we have  $y(\rho(s)\bX)=y(\bX)$. Hence,  standard UATs of multi-set functions are formulated based on the following notion of universal approximation.
\begin{definition}
\label{def:UapproxSymm}
    Let $\mathcal{K}\subset\RR^{N\times J}$ be an invariant compact domain,  i.e., $\rho(s)\bX\in\mathcal{K}$ for every $\bX\in\mathcal{K}$ and $s\in\mathcal{S}_N$. A set of invariant functions $\mathcal{N}(\mathcal{K},\RR^D)\subset C_0(\mathcal{K},\RR^D)$ is called an \emph{invariant universal approximator of $C_0(\mathcal{K},\RR^D)$ equivariant functions} if for every invariant function $y\in C_0(\mathcal{K},\RR^D)$ and $\epsilon>0$ where is $\theta\in \mathcal{N}(\mathcal{K},\RR^D)$ such that for every $\bX\in\mathcal{K}$
    \[\abs{\theta(\bX)-y(\bX)}<\epsilon.\]
\end{definition}

For example, DeeptSets are universal approximators of invariant $C_0(\mathcal{K},\RR^D)$  functions \citep{wagstaff2022universal}.

In this section, we describe an alternative, but equivalent, approach to model multisets of size $N$ and their universal approximation theorems  using the notion of quotient.
The motivation is that our main UAT, Theorem \ref{thm:priorUAT}, is based on the standard symmetryless notion of universal approximation, Definition \ref{def:Uapprox}. While it is possible to develop our adaptive canonicalization theory for functions that preserve symmetries, and obtain an analogous theorem to Theorem \ref{thm:priorUAT} based on invariant universal approximators, there is no need for such complications. Instead, we can use the standard definition of universal approximation (Definition \ref{def:Uapprox}), and directly encode the symmetries in the domain using quotient spaces, as we develop next.

\paragraph{Quotient Spaces.}

Let $\mathcal{X}$ be a topological space, and $x\sim y$ an equivalence relation between pairs of points. The equivalence class $[x]$ of $x\in\mathcal{X}$ is defined to be the set
\[[x]:=\{y\in X\ x\sim y\}.\]

\begin{definition}[Quotient topology]
Let $\mathcal{X}$ be a topological space, and $\sim$  an equivalence relation on $\mathcal{X}$. The \emph{quotient set} is defined to be
\[\mathcal{X}/\sim:= \{[x]\ |\ x\in \mathcal{X}\}.\]
The quotient set is endowed with the \emph{quotient topology}.
The quotient topology is the finest (largest) topology making the mapping $\nu:x\mapsto [x]$ continuous. In other words, the open sets $B\subset (\mathcal{X}/\sim)$ are those sets such that $\cup_{[x]\in B} [x]$
is open in $\mathcal{X}$.
\end{definition}

The mapping $\nu:\mathcal{X}\rightarrow (\mathcal{X}/\sim)$, defined by $\nu(x)=[x]$, is called the \emph{canonical projection}.

\paragraph{Multi-Sets as Equivalence Classes and UATs.}

Define the equivalence relation: $\bX\sim \bY$ if there exists  $s\in\mathcal{S}_N$ such that $\bX=\rho(s)\bY$. 
Now, a multi-set of $N$ elements can be defined as $\RR^{N\times J}/\sim$. As opposed to the definition of multisets as sets of pairs $(x,i)$, the quotient definition automatically gives a topology to the sets of multisets, namely, the quotient topology. In fact, it can be shown that the quotient topology is induced by the following metric.

\begin{definition}[Multi-Set Metric]
 Given $[\bX],[\bY]\in(\mathbb{R}^{N\times J}/\sim)$, their distance is defined to be  
 \[d([\bX],[\bY]):=\min_{\bX'\in[\bX],\bY'\in[\bY]}\norm{\bX'-\bY'}.\]
\end{definition}
It is easy to see that
\[d([\bX],[\bY]):=\min_{s\in\mathcal{S}_N}\norm{\bX-\rho(s)\bY}.\]
One can show that the above distance is indeed a metric, and that the topology induced by this metric is exactly the quotient topology, i.e., $d$ \emph{metrizes} $(\RR^{N\times J}/\sim)$. 
\begin{theorem}
    The metric $d([\bX],[\bY])$ metrizes the quotient topology $\RR^{N\times J}/\sim$.
\end{theorem}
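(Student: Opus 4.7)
The plan is to verify two things in turn: first, that $d$ defined by $d([\bX],[\bY])=\min_{s\in\mathcal{S}_N}\norm{\bX-\rho(s)\bY}$ is a genuine metric on $\RR^{N\times J}/\sim$, and second, that the topology induced by $d$ coincides with the quotient topology. Both parts hinge on the fact that each permutation $\rho(s)$ is an isometry of $\RR^{N\times J}$ (with the Frobenius norm) and that $\mathcal{S}_N$ is a finite group, so the minimum defining $d$ is always attained.

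For the metric axioms: non-negativity is immediate, and $d([\bX],[\bY])=0$ iff $\bX=\rho(s)\bY$ for some $s$, which is exactly $[\bX]=[\bY]$. Symmetry follows from $\norm{\bX-\rho(s)\bY}=\norm{\rho(s^{-1})\bX-\bY}$, so the min over $s\in\mathcal{S}_N$ equals the min over $s^{-1}$. For the triangle inequality, pick minimizers $s_1,s_2$ with $d([\bX],[\bY])=\norm{\bX-\rho(s_1)\bY}$ and $d([\bY],[\bZ])=\norm{\bY-\rho(s_2)\bZ}$; then using that $\rho(s_1)$ is an isometry,
\[
d([\bX],[\bZ])\le\norm{\bX-\rho(s_1 s_2)\bZ}\le\norm{\bX-\rho(s_1)\bY}+\norm{\rho(s_1)\bY-\rho(s_1)\rho(s_2)\bZ}=d([\bX],[\bY])+d([\bY],[\bZ]).
\]

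To identify the $d$-topology $\tau_d$ with the quotient topology $\tau_q$, let $\nu:\RR^{N\times J}\to\RR^{N\times J}/\sim$ denote the canonical projection. For the inclusion $\tau_d\subseteq\tau_q$, fix a $d$-open set $B$; given $\bX\in\nu^{-1}(B)$ with $d$-ball $B_d([\bX],\varepsilon)\subseteq B$, the trivial bound $d([\bX],[\bY])\le\norm{\bX-\bY}$ shows $B_{\RR^{N\times J}}(\bX,\varepsilon)\subseteq\nu^{-1}(B)$, so $\nu^{-1}(B)$ is open and hence $B\in\tau_q$ by definition of the quotient topology. For the reverse inclusion $\tau_q\subseteq\tau_d$, let $B\in\tau_q$, so $U:=\nu^{-1}(B)$ is open and $\mathcal{S}_N$-invariant. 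For $[\bX]\in B$ pick $\varepsilon>0$ with $B_{\RR^{N\times J}}(\bX,\varepsilon)\subseteq U$. If $d([\bY],[\bX])<\varepsilon$, the attained minimum gives $s^*$ with $\norm{\bX-\rho(s^*)\bY}<\varepsilon$, so $\rho(s^*)\bY\in U$, and by $\mathcal{S}_N$-invariance of $U$ we conclude $\bY\in U$, i.e., $[\bY]\in B$. Hence $B_d([\bX],\varepsilon)\subseteq B$, so $B\in\tau_d$.

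The main conceptual point, rather than a technical obstacle, is the second inclusion: one must exploit both the $\mathcal{S}_N$-invariance of $\nu^{-1}(B)$ and the attainment of the minimum in the definition of $d$ (guaranteed by finiteness of $\mathcal{S}_N$) to lift a Euclidean ball into a $d$-ball. All remaining steps are routine once these two ingredients are in place.
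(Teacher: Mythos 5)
Your proof is correct. The paper actually states this theorem without providing a proof (it is preceded only by the remark that ``one can show'' the distance is a metric metrizing the quotient topology), so there is no paper argument to compare against; your write-up supplies the missing details in what is essentially the standard way. The metric axioms are verified cleanly, with the triangle inequality correctly exploiting that $\rho$ is a group homomorphism into isometries of $\RR^{N\times J}$. For the topological equivalence, both directions are handled correctly: the inclusion $\tau_d\subseteq\tau_q$ follows from the nonexpansiveness $d([\bX],[\bY])\le\norm{\bX-\bY}$ of the quotient map, and the converse inclusion correctly uses the $\mathcal{S}_N$-invariance of $\nu^{-1}(B)$ together with the attainment of the minimum (finiteness of $\mathcal{S}_N$) to lift a Euclidean ball in $\nu^{-1}(B)$ to a $d$-ball in $B$. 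Your closing diagnosis of which two ingredients carry the second inclusion is exactly right. The only stylistic nit: in that step, once you have $\rho(s^*)\bY\in\nu^{-1}(B)$ you can conclude $[\bY]=\nu(\rho(s^*)\bY)\in B$ directly, without invoking invariance a second time, though the invariance route is equally valid.
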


A set $\mathcal{K}\subset\RR^{N\times J}$ is called invariant if for every $\bX\in\mathcal{K}$ and $s\in\mathcal{S_N}$ we have $\rho(s)\bX\in\mathcal{K}$. Consider the quotient space
\[\mathcal{K}/\sim=\{[\bX]\ |\ \bX\in\mathcal{K}\}\subset \RR^{N\times J}/\sim.\]

We now have the following proposition about the continuity of symmetric functions.
\begin{proposition}
\label{prop:QuotientCont}
    Let $\mathcal{K}\subset\RR^{N\times J}$ be an invariant compact domain.
        For every continuous invariant mapping $y:\mathcal{K}\rightarrow\RR^D$ there exists a unique continuous mapping $\overline{y}:(\mathcal{K}/\sim)\rightarrow\RR^D$ such that
    \[y=\overline{y}\circ\nu,\]
    where $\nu$ is the canonical projection. On the other hand, for every function $z\in C_0(\mathcal{K}/\sim,\RR^D)$, we have that $z\circ \nu$ is a continuous invariant function in $C_0(\mathcal{K},\RR^D)$. 
\end{proposition}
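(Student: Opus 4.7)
The plan is to invoke the standard universal property of the quotient topology: a map out of $\mathcal{K}/\sim$ is continuous exactly when its precomposition with the canonical projection $\nu$ is continuous. I will verify both directions by constructing the candidate map, checking it is well defined, and then transferring continuity through $\nu$.

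\textbf{Existence direction.} Given a continuous invariant $y\in C_0(\mathcal{K},\RR^D)$, first I would define $\overline{y}:(\mathcal{K}/\sim)\to\RR^D$ by $\overline{y}([\bX]):=y(\bX)$. This is well defined because the invariance of $y$ under $\mathcal{S}_N$ implies that $y$ is constant on each equivalence class: if $[\bX]=[\bY]$ then $\bY=\rho(s)\bX$ for some $s$, so $y(\bY)=y(\rho(s)\bX)=y(\bX)$. By construction $\overline{y}\circ \nu = y$. Uniqueness is immediate, since $\nu$ is surjective and any two factorizations must agree on every class $[\bX]=\nu(\bX)$.

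\textbf{Continuity of $\overline{y}$.} Here I would use the universal property of the quotient topology on $\mathcal{K}/\sim$: for any topological space $Y$, a function $h:(\mathcal{K}/\sim)\to Y$ is continuous if and only if $h\circ\nu:\mathcal{K}\to Y$ is continuous. Applied to $h=\overline{y}$, continuity follows from the continuity of $\overline{y}\circ\nu = y$. Equivalently, one can argue directly via the preimage characterization: for every open $U\subset\RR^D$, the set $\nu^{-1}(\overline{y}^{-1}(U))=y^{-1}(U)$ is open in $\mathcal{K}$, and by definition of the quotient topology this makes $\overline{y}^{-1}(U)$ open in $\mathcal{K}/\sim$. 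Since $\mathcal{K}$ is compact, $\mathcal{K}/\sim$ is compact as the continuous image of $\mathcal{K}$ under $\nu$, so $\overline{y}$ is automatically in $C_0(\mathcal{K}/\sim,\RR^D)$ (the ``vanishing at infinity'' condition is vacuous on a compact space).

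\textbf{Converse direction.} Given $z\in C_0(\mathcal{K}/\sim,\RR^D)$, I would note that the canonical projection $\nu:\mathcal{K}\to\mathcal{K}/\sim$ is continuous by the definition of the quotient topology, so $z\circ\nu$ is continuous as a composition of continuous maps. Invariance is immediate: for every $s\in\mathcal{S}_N$ and $\bX\in\mathcal{K}$ we have $\nu(\rho(s)\bX)=[\rho(s)\bX]=[\bX]=\nu(\bX)$, whence $(z\circ\nu)(\rho(s)\bX)=(z\circ\nu)(\bX)$. Again compactness of $\mathcal{K}$ makes the $C_0$ condition automatic, completing the proof.

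\textbf{Main obstacle.} None of the steps involves a hard calculation; the only subtlety is making sure the quotient topology is correctly identified with the topology induced by the multi-set metric $d([\bX],[\bY])=\min_{s\in\mathcal{S}_N}\norm{\bX-\rho(s)\bY}$. That identification is exactly the content of the preceding theorem in the paper, so here it can simply be cited. The only conceptual point worth spelling out is the universal property of quotient maps, which drives the continuity of $\overline{y}$.
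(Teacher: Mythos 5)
Your proposal is correct. The paper states Proposition~\ref{prop:QuotientCont} without an explicit proof, treating it as a standard fact about quotient topologies; your argument — well-definedness of $\overline{y}$ from invariance, continuity via the universal property of $\nu$ (preimages of opens under $\overline{y}$ pull back to preimages under $y$), uniqueness from surjectivity of $\nu$, and compactness of $\mathcal{K}/\sim$ to make the $C_0$ condition vacuous — is exactly the expected one, with the metrizability of the quotient correctly deferred to the preceding theorem.
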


    Let $\mathcal{K}\subset\RR^{N\times j}$ be an invariant domain.
    For a set of continuous invariant functions $\mathcal{N}\subset C_0(\mathcal{K},\RR^D)$, we denote
    \[(\mathcal{N}/\sim):=\{\overline{y}\ |\ y\in\mathcal{N}\}\subset C_0(\mathcal{K}/\sim,\RR^D).\]
Note that by Proposition \ref{prop:QuotientCont}
\[\big(C_0(\mathcal{K},\RR^D)/\sim\big) = C_0\big(\mathcal{K}/\sim,\RR^D\big).\]

This immediately leads to a UAT theorem for multi-set continuous functions in which every continuous multi-set function can be approximated by a neural network.
\begin{theorem}
\label{thm:multisetUAT}
    Let $\mathcal{K}\subset\RR^N$ be an invariant compact domain, and let $\mathcal{N}(\mathcal{K},\RR^D)\subset C_0(\mathcal{K},\RR^D)$ be  an invariant universal approximator of $C_0(\mathcal{K},\RR^D)$ equivariant functions. Then $\mathcal{N}(\mathcal{K},\RR^D)/\sim$ is a universal approximator $C_0(\mathcal{K}/\sim,\RR^D)$.
\end{theorem}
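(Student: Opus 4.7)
The plan is to transfer approximation from the invariant setting on $\mathcal{K}$ to the quotient setting on $\mathcal{K}/\sim$ by composing with the canonical projection $\nu$ in one direction and descending to the quotient in the other. The bijection $y \leftrightarrow \overline{y}$ between continuous invariant functions on $\mathcal{K}$ and continuous functions on $\mathcal{K}/\sim$ provided by Proposition \ref{prop:QuotientCont} will do all the heavy lifting.

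First I would fix $z \in C_0(\mathcal{K}/\sim, \RR^D)$ and $\epsilon > 0$. Applying Proposition \ref{prop:QuotientCont} in its second half, the pullback $y := z \circ \nu : \mathcal{K} \to \RR^D$ is a continuous invariant function in $C_0(\mathcal{K}, \RR^D)$. Since $\mathcal{N}(\mathcal{K}, \RR^D)$ is an invariant universal approximator in the sense of Definition \ref{def:UapproxSymm}, there exists $\theta \in \mathcal{N}(\mathcal{K}, \RR^D)$ such that
\[
    \sup_{\bX \in \mathcal{K}} \abs{\theta(\bX) - y(\bX)} < \epsilon.
\]
Now applying the first half of Proposition \ref{prop:QuotientCont} to the invariant function $\theta$, there is a unique $\overline{\theta} \in C_0(\mathcal{K}/\sim, \RR^D)$ with $\overline{\theta} \circ \nu = \theta$, and by definition $\overline{\theta} \in \mathcal{N}(\mathcal{K}, \RR^D)/\sim$.

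To finish I would compare $\overline{\theta}$ with $z$ pointwise on $\mathcal{K}/\sim$. For any class $[\bX] \in \mathcal{K}/\sim$, pick any representative $\bX \in \mathcal{K}$, so $[\bX] = \nu(\bX)$. Then
\[
    \abs{\overline{\theta}([\bX]) - z([\bX])} = \abs{\overline{\theta}(\nu(\bX)) - z(\nu(\bX))} = \abs{\theta(\bX) - y(\bX)} < \epsilon.
\]
Since $\nu(\mathcal{K}) = \mathcal{K}/\sim$ by the surjectivity of the canonical projection, taking the supremum over $[\bX] \in \mathcal{K}/\sim$ gives $\sup_{[\bX]} \abs{\overline{\theta}([\bX]) - z([\bX])} < \epsilon$, which is exactly the uniform approximation required by Definition \ref{def:Uapprox}.

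There is no real obstacle here beyond bookkeeping; the only subtlety is making sure that the passage $y \mapsto \overline{y}$ is an isometry with respect to supremum norms, which is immediate from surjectivity of $\nu$ together with the identity $\overline{y} \circ \nu = y$. Once that is acknowledged, the statement reduces to chasing definitions through Proposition \ref{prop:QuotientCont}.
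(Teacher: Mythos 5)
Your proof is correct and fills in exactly the argument the paper leaves implicit (the paper presents the theorem as an immediate consequence of Proposition~\ref{prop:QuotientCont} without writing out a proof). You use the same ingredients in the natural way: pull back $z$ to an invariant $y=z\circ\nu$, approximate $y$ by $\theta\in\mathcal{N}$, and descend $\theta$ to $\overline{\theta}\in\mathcal{N}/\sim$, exploiting that surjectivity of $\nu$ together with $\overline{y}\circ\nu=y$ makes the correspondence a sup-norm isometry.
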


Note that in the above UAT the symmetries are directly encoded in the quotient spaces, and, hence, there is no need to encode any symmetry in the spaces of functions. Hence, Theorem \ref{thm:multisetUAT} is based on the standard symmetryless definition of universal approximation -- Definition \ref{def:Uapprox} -- rather than the symmetry driven construction of Definition \ref{def:UapproxSymm}. As a result, we can directly use our theory of adaptive canonicalization on multi-set functions. Specifically, we can use Theorem \ref{thm:priorUAT}, where the space $\mathcal{K}$ in the theorem is taken as $\mathcal{K}/\sim$ in our above analysis.

Now, we immediately obtain that the set of neural networks $\overline{\theta}$ where $\theta$ is a DeepSet is a universal approximator of the space of continuous multi-set functions.

\section{Theoretical Analysis of Randomized Prior Maximization}
\label{Theoretical Analysis of Randomized Prior Maximization}

Consider the randomized maximization method described in Appendix \ref{App:random_max}. 
To understand how well this random maximization approximates the ideal maximization over the full transformation space, we can adopt the analysis from \cite{cordonnier2024convergence}.
Their results provide a tail bound for approximating a maximum over a probability space by the maximum over random i.i.d. samples.

We first recall the notion of the volume retaining probability space introduced by \cite{cordonnier2024convergence}.
\begin{definition}[Volume retaining property \citep{cordonnier2024convergence}]
    Let $X\subset \RR^d$ and let $P$ be a probability measure on $X$. We say that the probability space $(X, P)$ has the $(r_0, \kappa)$-volume retaining Lebesgue measure if there exist constants $r_0>0$ and $\kappa>0$ such that for any $r\leq r_0$  and any $x\in X$ 
    \begin{equation*}
        P(B(x,r)\cap X) \geq \kappa \lambda_d (B(x,r)),
    \end{equation*}
    where $\lambda_d$ is the $d$-dimensional Lebesgue measure and $B(x,r)$ is the ball center at $x$ with radius $r$. 
\end{definition}
In our case, points are randomly sampled from some canonical measure over $\mathcal{U}_j$ (i.e., the Haar measure), and in all of our example applications $\mathcal{U}_j$ has the volume retaining property.

On a volume retaining space, \cite{cordonnier2024convergence} prove the following concentration inequality for maxima. 
\begin{lemma}[Concentration inequality for volume retaining space \citep{cordonnier2024convergence}]
    Let $(X, P)$ be a probability space with the $(r_0, \kappa)$-volume retaining property and let $g:X^2\rightarrow \RR^q$ be $K_g$-Lipschitz.
    For any $\rho \geq \exp(-n\kappa r_0^d 2^d)$, for any random variables $X_1, \ldots, X_n \overset{\text{i.i.d.}}{\sim}P$, with probability at least $1-\rho$, it holds
    \begin{equation*}
        \norm{\underset{1\leq i \leq n}{\max} g(x, X_i) - \sup g(x, \cdot)}_\infty \leq \frac{K_g}{2}\left(\frac{\ln(q/\rho)}{n\kappa}\right)^{1/d}.
    \end{equation*}
\end{lemma}
Applying this lemma to our setting, $g$ is the output of the task neural network on some class. Since typical neural networks are Lipschitz continuous (e.g., any multilayer perceptron based on a Lipschitz activation function), this immediately gives a guarantee that our random maximization method approximates the true maximum.
We plan to extend this analysis for future work.

Finally, we note that when the prior maximization has error $e$ (which is a random variable), this leads to an additive term $e$ in universal approximation. Namely, for any $\epsilon>0$, any continuous function can be approximated by 
$\theta\circ \rho_{\theta}$ up to error $\epsilon + e$ instead of $\epsilon$ in Theorem \ref{thm:adaptiveUAT}.

\section{Additional Examples of Continuous Prior Maximization}
\label{Additional Examples of Continuous Prior Maximization}

We first note that when $\mathcal{K}$ is a locally compact metric space, functions in $C_0(\mathcal{K},\RR^D)$  must be uniformly continuous.

\subsection{Unbounded Point Clouds and Rotations}

Let $\mathcal{U}=\mathcal{SO}(3)$ be the space of 3D rotations, and $\mathcal{G}=\mathcal{K}=\RR^{N\times 3}$ the set of sequences of $N$ points in $\RR^3$, i.e. the space of point clouds. Consider the $\mathcal{L}_2$ metric in $\mathcal{G}$. Consider the rotation $g\mapsto \kappa_u(g)$ of the point cloud $g$ by $u\in\mathcal{U}$. 

Let $f\in C_0(\mathcal{G},\RR^D)$.
Next we show that $f$ must be uniformly continuous. Let $\epsilon>0$.
By the fact that $f$ vanishes at infinity, there exists a compact domain $\mathcal{K}\subset\RR^{N\times 3}$ such that for every $x\notin\mathcal{K}$ we have $\abs{f(x)}<\epsilon/2$. By the fact that $\mathcal{U}\times \mathcal{K}$ is compact and $\kappa$ continuous, $\kappa$ is uniformly continuous on $\mathcal{U}\times \mathcal{K}$. Hence, there exists $\delta_{\epsilon}>0$ such that  every $g,g'\in \mathcal{K}$ with $d(g,g')<\delta_{\epsilon}$ satisfy $d(f(g),f(g'))<\epsilon$. Let $\kappa'$ be the compact space consisting of all point of distance less or equal to $\delta_{\epsilon}$ from $\mathcal{K}$. There exists $0<\delta'_{\epsilon}<\delta_{\epsilon}$ such that  every $g,g'\in \mathcal{K}'$ with $d(g,g')<\delta'_{\epsilon}$ satisfy $d(f(g),f(g'))<\epsilon$.

Now, let $g,g'\in\mathcal{G}$ satisfy $d(g,g')<\delta'_{\epsilon}$. If one of the point $g$ or $g'$ lies outside $\mathcal{K}'$, then both of them lie outside $\mathcal{K}$, so 
\[d(f(g),f(g'))\leq \norm{f(g)}+\norm{f(g')}<\epsilon.\]
Otherwise, both lie in $\mathcal{K}'$, so $d(f(g),f(g'))<\epsilon$. Both cases together mean that $f$ is uniformly continuous.

As a result of uniform continuity, $\{g\mapsto f(\kappa_u(g))\}){u\in\mathcal{U}}$  is equicontinuous, and this is a setting of continuous prior maximization.

In fact, this analysis shows that whenever $\mathcal{G}=\mathcal{K}$ and $\kappa$ is continuous in $(u,g)$, then it's corresponding $\rho$ is continuous prior maximization.

\subsection{Continuous to Discrete Images with Rotations and Other Image Transformations}

Consider the ``continuous'' space of images $\mathcal{G}=\mathcal{L}_2(\RR^2)$ and the discrete space  $\mathcal{K}=\RR^{N\times N}$ of images of $N\times N$ pixels. 

Let $\mathcal{U}$ be the unit circle. For $g\in\mathcal{G}$ and $u\in\mathcal{U}$ let $\pi(u)g$ be the rotation of the image $g$ by angle $u$. To define the discretizing mapping $P:\mathcal{L}_2(\RR^2)\rightarrow\RR^{N\times N}$, consider the partition of $[-1,1]$ into the $N$ intervals 
\[I_n=[-1+2n/N,-1+2(n+1)/N), \quad n=0,\ldots N-1.\]
Consider the closed linear subspace $\mathcal{D}(\RR^2)\subset\mathcal{L}_2(\RR^2)$ consisting of images that are zero outside $[-1,1]^2$ and piecewise constant on the squares $\{I_n\times I_m\}_{n,m=0}^{N-1}$. Now, $P$ is the operator that takes $g\in \mathcal{L}_2(\RR^2)$ first orthogonally projects it upon $\mathcal{D}(\RR^2)$ to get $g'$, and then returns 
\[P(g)= \{g'(-1+2n/N,-1+2m/N)\}_{n,m=0}^{N-1}\in\RR^{N\times N}.\]

Define the mapping $\kappa_u$ as follows: $\kappa_u(g) = P(\pi(u)g)$. By the fact that $\pi_u$ is an isometry for every $u\in\mathcal{U}$ and $P$ is non-expansive (as an orthogonal projection), $\kappa_{u}:\mathcal{G}\rightarrow\RR^{N\times N}$ is Lipschitz 1 for every $u$. Hence,  $\{g\mapsto f(\kappa_u(g))\}_{u\in \mathcal{U}}$ is equicontinuous.

As a result, the corresponding prior minimization is a continuous prior minimization.

This setting can be extended to other image deformation based on diffeomorphisms of the domain $\RR^2$, parameterized by compact spaces $\mathcal{U}$, For example, one can take dilations up to some uniformly bounded scale. More generally, one can consider a compact set $\mathcal{U}$ of matrices in $\RR^{2\times 2}$, and define $\pi(u)g(\mathbf{x})=g(\mathcal{x}\mathbf{U})$ for $\mathbf{U}\in\mathcal{U}$, $g\in L_2(\RR^2)$ and $\mathbf{x}\in\RR^2$.

\subsection{Discrete to Discrete Images with Rotations and Other Image Transformations}

One can replace $\mathcal{G}$ by $\mathcal{D}(\RR^2)$ in the above analysis. Since now $\mathcal{D}(\RR^2)$ is compact, $\rho$ must be a continuous prior maximization.

The above examples can be naturally extended to additional image transformations, like translations and dilations. Notably, translations and dilations do not form a compact group, but still they satisfy the conditions of our theory, which requires no compactness assumptions.

\section{Experimental Details}\label{app:more_exp_details}
In this section, we describe the experimental setups and implementation details used in Section~\ref{sec:exp}.

\subsection{Illustrative Toy Problems:  Grid Signal Orientation Tasks}

\paragraph{Toy Problem and Experimental Setup.}
We consider a square grid on the torus with a 2-channel signal. 
The first channel contains a sinusoidal signal aligned with the $x$-axis, given by $\sin(2\pi x/T)$. The second channel depends on the class label: in Class 0 it is aligned with the $x$-axis, while in Class 1 it encodes a sinusoidal signal along the $y$-axis, $\sin(2\pi y/T)$. 
Independent Gaussian noise with variance $\sigma^2$ is added to each channel.
In addition, it introduces an additional challenge by spatially restricting the support of the channels. The grid is vertically partitioned into two disjoint halves.
The first channel is supported only on the left half.
The second channel is supported only on the right half.
The task is to decide if the frequency at the two channels is in the same orientation.
The grid size is fixed at $N=40^2$, the sinusoidal period is set to $T=20$, and the noise level is chosen as $\sigma=0.1$. We generate 1000 samples. Evaluation is carried out using 10-fold cross-validation.

\paragraph{Competing Methods.} 
The competing methods include: MLP, GCN, GAT, GIN, ChebNet, NLSF, S$^2$GNN, FA+GIN, and OAP+GIN. 

\paragraph{Hyperparameters.} 
We use a three-layer network with a hidden feature dimension chosen from $\{32, 64, 128\}$ and ReLU activation functions. The learning rate is selected from $\{10^{-3}, 10^{-4}, 10^{-5}\}$. Batch size 100.  All models are implemented in PyTorch and optimized with the Adam optimizer \citep{kingma2014adam}. Experiments are conducted on an Nvidia DGX A100. The output of the GNN is then passed to an MLP.

\subsection{Graph Classification on TUDataset}

\paragraph{Datasets and Experimental Setup.}
We consider five graph classification benchmarks from TUDataset \citep{morris2020tudataset}: MUTAG, PTC, ENZYMES, PROTEINS, and NCI1. 
The dataset statistic is reported in Table~\ref{tab:statistics}.
Following the random split protocol \citep{ma2019graph, ying2018hierarchical, zhang2019hierarchical}, we partition the dataset into 80\% training, 10\% validation, and 10\% testing.
Results are averaged over 10 random splits, with mean accuracy and standard deviation reported.

\paragraph{Competing Baselines.}
We evaluate on medium-scale graph classification benchmarks from TUDataset, using the same set of competing methods as in grid signal orientation tasks. The baselines include MLP, GCN, GAT, GIN, ChebNet, NLSF, S$^2$GNN, FA+GIN, and OAP+GIN.

\paragraph{Hyperparameters.}
The hidden dimension is set to be 128. 
The models are implemented using PyTorch, optimized with the Adam optimizer \citep{kingma2014adam}.
An early stopping strategy is applied, where training halts if the validation loss does not improve for 100 consecutive epochs. The hyperparameters are selected through a grid search, conducted via Optuna \citep{akiba2019optuna}, with with the learning rate and weight decay explored in the set $\{1e^{-2}, 1e^{-3}, 1e^{-4}\}$,  the pooling ratio varying within $[0.1, 0.9]$ with step $0.1$, and the number of layers ranging from 2 to 9 in a step size of 1.
The output representations are then passed into an MLP, and predictions are obtained by optimizing a cross-entropy loss function.
Experiments are conducted on an Nvidia DGX A100.

\begin{table*}[t]
\caption{Datasets statistics. 
} 
\vspace{-2mm}
\begin{center}
\begin{small}
\begin{tabular}{lcccccccr}
\toprule
 Dataset & $\#$ Graphs    & $\#$ Classes  & Avg.$\#$ Nodes &  Avg.$\#$ Edges \\
\midrule
 MUTAG & 188 & 2  & 17.93 & 19.79 \\
 PTC & 344 & 2  & 14.29 & 14.69 \\
 ENZYMES & 600 &  6  & 32.63 & 64.14 \\
 PROTEINS & 1113 & 2  & 39.06 & 72.82 \\
 NCI1 & 4110 & 2  & 29.87 & 32.30 & \\
\midrule
ogbg-molhiv & 41127 & 2 & 25.5 & 27.5\\
ogbg-molpcba & 437929 & 128 & 26.0 & 28.1\\
ogbg-ppa & 158100 & 37 & 243.4 & 2266.1 \\
\bottomrule
\vspace{-5 mm }
\label{tab:statistics}
\end{tabular}
\end{small}
\end{center}
\end{table*}

\subsection{Molecular Classification on OGB Datasets}

\paragraph{Datasets and Experimental Setup.}
We evaluate on larger-scale benchmarks from the Open Graph Benchmark (OGB) dataset \citep{hu2020open} for classification tasks, including ogbg-molhiv, ogbg-molpcba, and ogbg-ppa. Dataset statistics are summarized in Table~\ref{tab:statistics}
The evaluation settings are followed by the OGB protocol \citep{hu2020open}.

\paragraph{Competing Baselines.}
For large-scale graph classification, we include GCN, GIN, GatedGCN, PNA, GraphTrans, SAT, GPS, SAN, and the canonicalization-based variant OAP+GatedGCN as the competing methods.
These approaches have previously demonstrated strong performance on OGB benchmarks, and their reported results are taken from prior work\footnote{\url{https://ogb.stanford.edu/docs/leader_graphprop/}}.

\paragraph{Hyperparameters.}
The models are implemented in PyTorch and optimized with the Adam optimizer, with training capped at a maximum of 1000 epochs and controlled by an early stopping criterion. The hidden dimension is selected from the set $\{128, 256, 512\}$, while the number of layers varies from 2 to 10 in steps of 1. Dropout rates are explored within the range $[0, 0.1, \dots, 0.5]$, the learning rate is tuned within the interval $[0.0001, 0.001]$, and the warmup is set as 5 or 10. Additionally, the batch size is chosen from $\{32, 64, 128, 256\}$ and the weight decay is chosen from $\{10^{-4}, 10^{-5}, 10^{-6}\}$. All hyperparameters are tuned using Optuna \citep{akiba2019optuna}.
The experiments are conducted on an NVIDIA A100 GPU.

\subsection{ModelNet40 Point Cloud Classification}

\paragraph{Datasets and Experimental Setup.}
Our evaluation for point cloud classification was carried out on the ModelNet40 dataset \citep{wu20153d}, which consists of 40 object categories and a total of 12,311 3D models. Following prior studies \citep{wang2019dynamic, deng2021vector}, we allocated 9,843 models for training and 2,468 models for testing in the classification task. For each model, 1,024 points were uniformly sampled from its mesh surface, using only the $xyz$ coordinates of the sampled points.
We apply on-the-fly rotation augmentation during training, following \cite{esteves2018learning, deng2021vector}, such that the dataset size remains unchanged. 
At test time, each example is rotated by an arbitrary rotation. 
Note that on-the-fly augmentation essentially changes the training data distribution during the learning process. 
The purpose of comparing under rotation protocols is to assess a model's invariance to rotational changes.

\paragraph{Competing Baselines.}
For point cloud classification tasks, we compare our anisotropic geometric method with point cloud approaches, including PointNet and DGCNN architectures, as well as equivariant models based on the vector neuron framework, i.e., VN-PointNet and VN-DGCNN. 
We further include canonicalization baselines, CN-PointNet and CN-DGCNN, and traditional augmentation baselines in which the training set is expanded with pre-generated rotations (PointNet-Aug, DGCNN-Aug) with a factor of five ($\times5$).
The experimental results of PointNet, DGCNN, VN-PointNet, and VN-DGCNN are taken from \cite{wang2019dynamic, deng2021vector}.

\paragraph{Hyperparameters.} 
We follow the published hyperparameters and training protocol of PointNet and DGCNN. 
For PointNet, we uses identical channel widths to PointNet (64, 64, 64, 128, 1024). We use Adam optimizer with learning rate 0.001 and batch size 32 with a weight decay $1\times 10^{-4}$ and dropout 0.3. 
For DGCNN, each input comprises 1,024 uniformly sampled points, and the k-NN graph uses neighborhood size $k=20$. DGCNN uses four EdgeConv layers (with per-layer MLPs of sizes 64, 64, 128, 256).  We train with stochastic gradient descent (initial learning rate 0.1) and apply a cosine annealing schedule of 0.001. Training runs for 250 epochs with a batch size of 32, and we use a dropout rate of 0.5 in the fully connected layers.

\section{Additional Experimental Results}\label{app:additional_exp_results}
In this section, we present additional results of our adaptive canonicalization, including experimental trade-offs of sampling-based and optimization-based construction, anisotropic nonlinear spectral filters for node-level representation, and out-of-sample rotation generalization for point clouds.

\subsection{Sampling-based VS Optimization-based Implementation}

To evaluate the trade-offs between sampling-based and our sample-and-refine (optimization-based) implementation, we conduct experiments on the TUDataset graph classification benchmarks. 
Table~\ref{tab:sampling_or_optimization} reports the classification performance of TUDaset under sampling-based and optimization-based adaptive caninocalization. 
We see that the optimization-based implementation consistently performs better than the sampling-based one. While increasing the sampling candidates (from $1\times$ to $5\times$ or $10\times$) improves the performance, the sample-and-refine strategy is more memory-efficient than massive sampling approaches.
Rather than storing and evaluating hundreds of rotation matrices simultaneously, it processes a smaller working set through iterative refinement, reducing memory pressure \citep{li2022rago}.
In terms of computation time, the sampling-based method grows linearly with the number of candidates, while the optimization-based method add a small overhead to the inner steps. 
We see that in practice, a modest refinement (a few steps) surpasses the accuracy of large sampling budgets at a lower time, offering a better accuracy-time trade-off.

\begin{table}[h]
    \caption{
    Comparison of sampling-based vs. optimization-based adaptive canonicalization. 
    Classification accuracy across TUDataset. Sampling methods use different numbers of random candidates, while the optimization approach combines sampling with local refinement via gradient descent.
    }
    \vspace{-2 mm}
    \begin{center}
\resizebox{0.7\textwidth}{!}{%
\begin{tabular}{lccccccccccr}
\toprule
  & MUTAG &  PTC & ENZYMES &  PROTEINS & NCI1 \\
  \midrule
  A-NLSF (sampling) &  84.23\scriptsize{$\pm$1.4} & 69.05\scriptsize{$\pm$1.8} & 70.10\scriptsize{$\pm$1.5} & 82.94\scriptsize{$\pm$1.6} & 80.64\scriptsize{$\pm$1.2}\\ 
  A-NLSF (sampling $\times 5$) &  85.17\scriptsize{$\pm$1.3} & 72.21\scriptsize{$\pm$1.3} & 71.59\scriptsize{$\pm$1.0} & 83.57\scriptsize{$\pm$1.8} & 80.92\scriptsize{$\pm$1.3} \\ 
  A-NLSF (sampling $\times 10$) & 85.54\scriptsize{$\pm$1.3} & 72.78\scriptsize{$\pm$1.5}& 72.42\scriptsize{$\pm$1.2} &  85.03\scriptsize{$\pm$1.2} & 80.94\scriptsize{$\pm$0.8}\\ 
  \midrule
  A-NLSF (optimization) &  87.94\scriptsize{$\pm$0.9} &73.16\scriptsize{$\pm$1.2} & 73.01\scriptsize{$\pm$0.8} & 85.47\scriptsize{$\pm$0.6} &  82.01\scriptsize{$\pm$0.9} \\ 
\bottomrule
\vspace{-0.8 cm}
\label{tab:sampling_or_optimization}
\end{tabular}
}
\end{center}
\end{table}

\subsection{Node-Level Anisotropic Nonlinear Filters}\label{app:node_level_representaiton}

We introduce the adaptive canonicalization applied to spectral graph neural networks for learning graph-level representation in Section~\ref{sec:anisotropic_nonlinear_spectral_filters} and Appendix~\ref{app:Construction Details for Anisotropic Nonlinear Spectral Filters}.
The adaptive canonicalization can also be applied to node-level representation, where the node-level representation proceeds by mapping the input signal to the spectral domain \citep{mallat2002theory} in a band-wise manner with an oriented basis within each band’s eigenspace, and performing a synthesis step that transforms the learned coefficients back to the node domain.

On a square grid, each $x$ Fourier mode has a corresponding $y$ Fourier mode of the same response.
Therefore, standard spectral methods are inherently isotropic as they cannot distinguish between horizontal and vertical directional information. 
On the other hand,  adaptive canonicalization is anisotropic and our method can learn distinct orientations. 
The resulting spatial operator can therefore implement any directional filter that a convolutional neural network can achieve \citep{shannon2006communication, lecun1998convolutional, freeman1991design, dages2024metric}. 

In graph-level tasks, the canonicalized node-level embeddings can serve as the intermediate representation from which graph-level features are derived. Specifically, the resulting node embeddings can be aggregated through standard pooling operations to have a graph-level representation. 
We evaluate the node-to-graph construction on TUDataset for graph classification tasks. 
The results are summarized in Table~\ref{tab:graph_classification_by_node_representation}. 
We see that the node-to-graph construction achieves performance closely aligned with, and in some cases approaching, that of the direct graph-level canonicalization. 
We attribute the slightly worse performance to the potential pooling loss.

\begin{table}[h]
    \caption{
    Graph classification performance on TUDataset using adaptive canonicalization. Comparison between direct graph-level representations (Graph) and node-to-graph constructions (Node-to-graph).  
    }
    \vspace{-2 mm}
    \begin{center}
\resizebox{0.65\textwidth}{!}{%
\begin{tabular}{lccccccccccr}
\toprule
  & MUTAG &  PTC & ENZYMES &  PROTEINS & NCI1 \\
  \midrule
  Node-to-graph & 87.02\scriptsize{$\pm$1.1} & 72.14\scriptsize{$\pm$1.5} & 71.26\scriptsize{$\pm$1.2}  & 84.87\scriptsize{$\pm$0.8}  & 81.64\scriptsize{$\pm$1.2} \\
  % \midrule
  Graph &  87.94\scriptsize{$\pm$0.9} &73.16\scriptsize{$\pm$1.2} & 73.01\scriptsize{$\pm$0.8} & 85.47\scriptsize{$\pm$0.6} &  82.01\scriptsize{$\pm$0.9} \\ 
\bottomrule
\vspace{-0.8 cm}
\label{tab:graph_classification_by_node_representation}
\end{tabular}
}
\end{center}
\end{table}

\subsection{Out-of-Sample Rotation Generalization in Point Clouds}

We adopt the $z/\mathcal{SO}(3)$ protocol \citep{esteves2018learning, deng2021vector}: training with on-the-fly azimuthal rotations ($z$-axis) augmentation, and evaluation applies under arbitrary rotations to each test shape. 
In this setting, we assess out-of-sample rotation generalization by constraining training data rotations while testing on the full rotation group.
The classification performance on ModelNet40 under $z/\mathcal{SO}(3)$ protocol is reported in Table~\ref{tab:out_of_sample_rotation_generalization}.
Standard PointNet and DGCNN drop sharply under this shift. 
Equivariant vector-neuron variants recover much of the loss, and canonicalization baselines are comparable. 
Our adaptive canonicalization outperforms both equivariant architecture and canonicalization baselines in both backbones.

\begin{table}[h]
    \caption{
    Classification accuracy on ModelNet40 under $z/\mathcal{SO}(3)$ protocol.
    }
    \vspace{-2 mm}
    \begin{center}
\resizebox{1\textwidth}{!}{%
\begin{tabular}{lcccccccc}
\toprule
& PointNet & DGCNN & VN-PointNet & VN-DGCNN & CN-PointNet & CN-DGCNN & AC-PointNet & AC-DGCNN \\
\midrule
Accuracy    & 19.6     & 33.8  & 77.5        & 89.5     & 79.6        & 88.8     & 81.4        & 91.8      \\
\bottomrule
\end{tabular}
\label{tab:out_of_sample_rotation_generalization}
}
\end{center}
\end{table}

\subsection{Ablation Studies}\label{app:ablation_study}

We conduct ablation studies on the spectral band partitioning and the choice of GSO for A-NLSF, as well as on the impact of different point cloud backbones in our anisotropic point cloud networks.

\subsubsection{Spectral Band Partition}

In our experiment, we adopt a dyadic partitioning scheme (see Appendix~\ref{app:Construction Details for Anisotropic Nonlinear Spectral Filters}).
In this section, we conduct an ablation using a uniform partitioning of the eigenvalues with the same number of bands and report the graph classification performance in Table~\ref{tab:graph_classification_different_spectral_band}.
We see that using the dyadic partitions performs better than using the uniform partition.  
tion provided by dyadic bands, which could more effectively isolate band-wise unitary actions that commute with the chosen GSO.
We also note that spectral band design can be realized in more flexible and expressive ways, for example, through attention as in SpecFormer \citep{bo2023specformer}. Investigating such learned or adaptive band-selection strategies is an important direction for future work and may further strengthen our adaptive canonicalization framework.

\begin{table}[h]
    \caption{
    Graph classification performance using uniform and dyadic spectral band partitioning. 
    }
    \vspace{-2 mm}
    \begin{center}
\resizebox{0.65\textwidth}{!}{%
\begin{tabular}{lccccccccccr}
\toprule
  & MUTAG &  PTC & ENZYMES &  PROTEINS & NCI1 \\
  \midrule
  Uniform & 81.36\scriptsize{$\pm$1.2} & 66.20\scriptsize{$\pm$0.8} & 62.84\scriptsize{$\pm$1.4} & 80.01\scriptsize{$\pm$1.3} & 79.62\scriptsize{$\pm$1.0}\\
  Dyadic &  87.94\scriptsize{$\pm$0.9} &73.16\scriptsize{$\pm$1.2} & 73.01\scriptsize{$\pm$0.8} & 85.47\scriptsize{$\pm$0.6} &  82.01\scriptsize{$\pm$0.9} \\ 
\bottomrule
\vspace{-0.8 cm}
\label{tab:graph_classification_different_spectral_band}
\end{tabular}
}
\end{center}
\end{table}

\subsubsection{Graph Shift Operator}

We evaluate the graph Laplacian as an alternative GSO.
Table~\ref{tab:graph_classification_different_gso} reports the graph classification performance of A-NLSF when instantiated with the graph Laplacian versus the normalized graph Laplacian.
We observe that using the normalized graph Laplacian in our method yields better performance than the graph Laplacian.
We attribute this to the properties of the normalized Laplacian: (i) the normalized Laplacian removes degree-related scaling effects, leading to a comparable spectral domain across graphs with different degree distributions, and (ii) its eigenvalues lie in the fixed interval $[0,2]$, providing a controlled and interpretable frequency range, and making dyadic partitioning better aligned across different graphs.

\begin{table}[h]
    \caption{
    Graph classification performance of A-NLSF with different GSO. 
    }
    \vspace{-2 mm}
    \begin{center}
\resizebox{0.8\textwidth}{!}{%
\begin{tabular}{lccccccccccr}
\toprule
  & MUTAG &  PTC & ENZYMES &  PROTEINS & NCI1 \\
  \midrule
  Graph Laplacian &  83.76\scriptsize{$\pm$1.0} & 67.23\scriptsize{$\pm$1.4} & 62.60\scriptsize{$\pm$1.2} &  82.64\scriptsize{$\pm$1.6} & 78.59\scriptsize{$\pm$0.8}\\ 
  Normalized graph Laplacian &  87.94\scriptsize{$\pm$0.9} &73.16\scriptsize{$\pm$1.2} & 73.01\scriptsize{$\pm$0.8} & 85.47\scriptsize{$\pm$0.6} &  82.01\scriptsize{$\pm$0.9} \\ 
\bottomrule
\vspace{-0.8 cm}
\label{tab:graph_classification_different_gso}
\end{tabular}
}
\end{center}
\end{table}

\subsection{Sensitivity Analysis}\label{app:sensitivity_analysis}

To examine the effect of different hyperparameters, we conduct a hyperparameter sensitivity study covering grid size, sinusoidal period, noise level, and hidden dimension.
For each hyperparameter, we swept over a range of values while keeping all other settings fixed.
The results of the sensitivity analysis are summarized in Figure~\ref{fig:sensitivity}.
Overall, we observe that our method is reasonably robust. 
For grid size and sinusoidal period, performance remains stable across the tested ranges.
For the noise level, small to moderate noise leads to similar performance, with a degradation only when the noise becomes large enough that it effectively corrupts the underlying structure of the data. For the hidden dimension, small dimensions impact the performance, but performance stabilizes once we enter a standard regime of model capacity.

\begin{figure*}[h]
 \centering
     \includegraphics[width=0.75\textwidth]{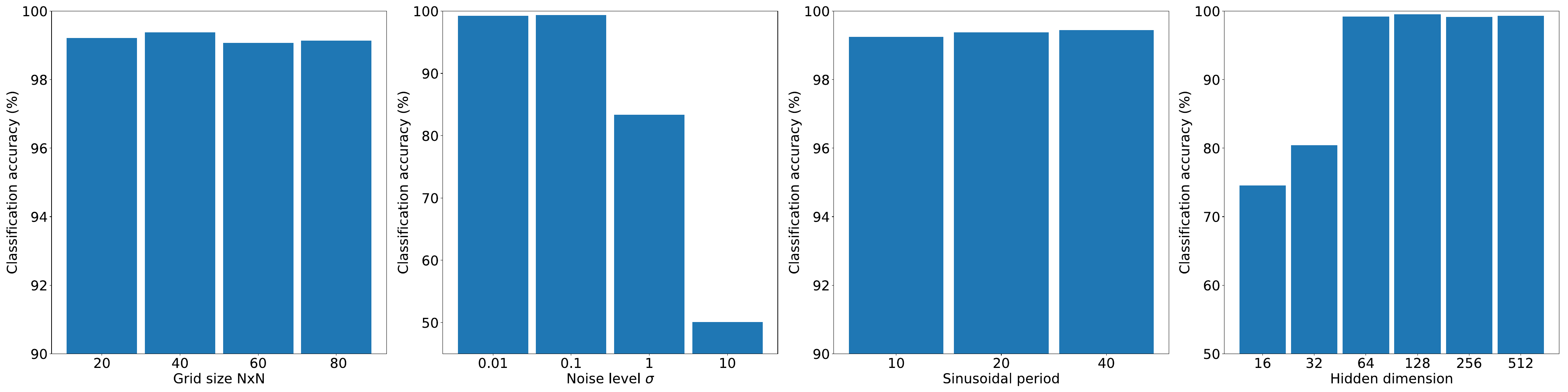}
     \caption{Hyperparameter sensitivity with respect to grid size, noise level, and hidden dimension.}
     \label{fig:sensitivity}
\end{figure*}

\subsubsection{Point cloud backbones}

\begin{wraptable}{r}{40mm}
    \vspace{-4.5 mm}
    \centering
    \caption{Classification results on ModelNet40 for different point cloud backbones. 
    Results of competing methods marked with * are taken from \cite{deng2021vector, luo2022equivariant}.
    }
    \vspace{-2 mm}
    \resizebox{0.22\textwidth}{!}{%
\begin{tabular}{lcccccccccr}
\toprule
        &   Accuracy \\ 
     \midrule
     PointNet & 74.7$^*$\\ 
     AC-PointNet & 81.1\scriptsize{$\pm$0.7}\\
    \midrule
    PointNet++ & 85.0$^*$\\
    AC-PointNet++ & 87.4\scriptsize{$\pm$0.4}\\
    \midrule
    RSCNN & 82.6$^*$\\
    AC-RSCNN & 87.6\scriptsize{$\pm$0.3} \\
    \midrule
    DGCNN &   88.6$^*$  \\
    AC-DGCNN &  91.6\scriptsize{$\pm$0.6}\\  
\bottomrule
\end{tabular}
} 
\vspace{-3.5mm}
\label{tab:pointcloud_backbone}
\end{wraptable}
In order to assess the impact of the backbone choice on the performance of our anisotropic point cloud networks, we extended our experiments to include two additional and widely used point cloud backbones, PointNet++ \citep{qi2017pointnet++} and RSCNN  \citep{liu2019relation}, in addition to PointNet and DGCNN reported in Table~\ref{tab:moldelnet40_classification}.
We denote the corresponding variants by AC-PointNet++ and AC-RSCNN.
The ablation results are reported in Table~\ref{tab:pointcloud_backbone}.
We see that the choice of backbone does influence the overall point cloud classification performance.
However, we observe that our adaptive canonicalization framework consistently improves the classification performance across these backbones. 
Moreover, when comparing methods built on the same backbone (e.g., PointNet or DGCNN), our approach outperforms equivariant models, data augmentation, and standard canonicalization (see Table~\ref{tab:moldelnet40_classification}).
This indicates that our method is robust across different point cloud backbones and can further benefit from stronger backbones when they are available.

\subsection{Truncation Canonicalization with a Pretrained Classifier
}\label{app:Truncation Canonicalization with a pretrained classifier}

We introduce in Appendix~\ref{app:truncation_Canonicalization} an application of our adaptive canonicalization on truncation prior maximization. 
We now illustrate the applicability of this setup with a pretrained image classifier.
Specifically, we take a ResNet-18 \citep{he2016deep} pretrained on ImageNet \citep{deng2009imagenet}.
We freeze the backbone, and train only the classifier using the CIFAR-10 or CIFAR-100 \citep{krizhevsky2009learning} training set. 
The experiment is conducted with ten independent runs, and the resulting image classification performance is reported in the Table~\ref{tab:truncation_image}.
We see that truncation-based prior maximization improves classification performance over the standard vanilla baseline.
This implies that our method enables the model to adaptively select a canonical truncation that enhances downstream performance. 
In addition, we observe that the selected canonical crops tend to tightly focus on the main object while discarding background and irrelevant context.
It matches the intuition behind our prior maximization: by optimizing over the truncation family, the model chooses a representative transformed image that best aligns with its prior over the class.
This experiment demonstrates that our adaptive canonicalization framework can be instantiated with a truncation symmetry and benefit from off-the-shelf pretrained models.
It also highlights the potential of transformation families as a practical way to improve pretrained models via adaptive canonicalization.

\begin{table}[h]
    \caption{
    Image classification accuracy on CIFAR-10 and CIFAR-100 using a ResNet18 pretrained on ImageNet, with and without truncation canonicalization. 
    }
    \vspace{-2 mm}
    \begin{center}
\resizebox{0.5\textwidth}{!}{%
\begin{tabular}{lccccccccccr}
\toprule
  & CIFAR-10 &  CIFAR-100 \\
  \midrule
  Vanilla & 72.09\scriptsize{$\pm$1.0} & 56.94\scriptsize{$\pm$0.8}\\
  Truncation canonicalization & 74.92\scriptsize{$\pm$0.6} & 60.38\scriptsize{$\pm$0.5}\\
\bottomrule
\vspace{-0.8 cm}
\label{tab:truncation_image}
\end{tabular}
}
\end{center}
\end{table}

\subsection{Computational Requirement Comparison}\label{app:runtime_analysis}

Table~\ref{tab:compute_budget} the training time per epoch with the number of parameters. We see that A-NLSF uses a similar number of parameters as the other methods and fewer than the spectral method. Its computational requirements are comparable to the other methods and does not rely on a significantly larger training budget than the competing methods.

\begin{table}[h]
    \caption{
    Running time per epoch(s)/number of parameters.
    }
    \vspace{-2 mm}
    \begin{center}
\resizebox{0.5\textwidth}{!}{%
\begin{tabular}{lccccccccccr}
\toprule
  
     &  MUTAG &  PTC & ENZYMES &  PROTEINS & NCI1 \\
\midrule
MLP & 0.07/105K  &  0.10/114K& 0.13/125K & 0.37/129K & 1.01/134K\\
  GCN &   0.40/116K &  0.66/120K& 0.81/137K & 1.92/142K & 5.84/149K\\
  GAT & 0.62/138K & 0.87/149K & 0.96/154K & 2.34/159K & 4.93/167K\\
  GIN &  0.14/105K & 0.37/106K & 0.52/107K & 0.94/106K & 1.97/121K\\
  ChebNet & 0.79/185K &  1.25/189K& 1.72/191K & 3.64/217K &  11.52/245K\\
   FA+GIN   & 0.57/120K & 1.04/123K & 1.35/126K & 2.55/130K & 4.31/142K\\
    OAP+GIN & 0.22/105K & 0.39/104K & 0.57/109K &  1.21/110K & 2.36/124K\\
  A-NLSF & 0.44/132K & 0.89/140K & 1.29/145K & 2.20/148K & 4.37/151K\\
\bottomrule
\vspace{-0.8 cm}
\label{tab:compute_budget}
\end{tabular}
}
\end{center}
\end{table}

\subsection{Training Stability}

\begin{wrapfigure}[12]{r}{0.6\textwidth}  
\vspace{-3 mm}
    \centering
     \includegraphics[width=0.45\textwidth]{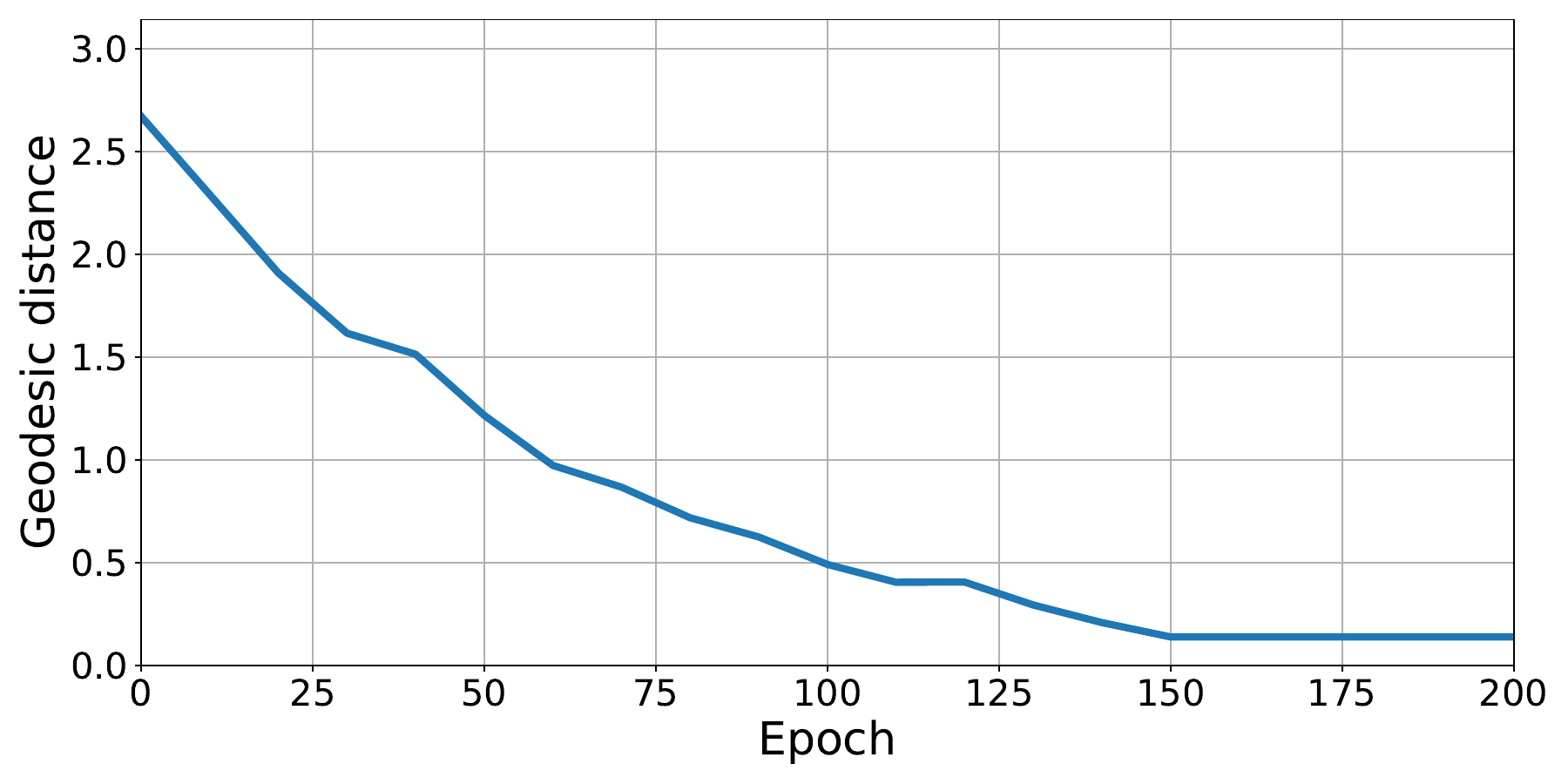}
    \caption{Mean geodesic distance on $\mathcal{SO}(3)$ between the canonicalizations between consecutive epochs. 
    }
    \label{fig:geodesic_distance}
\end{wrapfigure} 

To quantify the training stability of our method, we track the canonical rotations of a subset of 1500 randomly chosen training examples in the point cloud classification experiment.
At each epoch, we measure the mean geodesic distance on $\mathcal{SO}(3)$ between the canonicalizations between consecutive epochs. 
Figure~\ref{fig:geodesic_distance} reports the mean geodesic distance between epochs. We observe that this distance decreases during the training and then remains stable, indicating that the canonical representatives stabilize with no rapid switching.

\subsection{
Canonicalized Point Clouds 
}

Figure~\ref{fig:chair} shows the canonicalized point clouds for the chair class in the point cloud classification experiment.
We randomly select 20 examples from this class and visualize them  after applying the optimal transformations. 
We observe that the examples in this class share a similar orientation after canonicalization.

\begin{figure*}[h]
 \centering
     \includegraphics[width=0.75\textwidth]{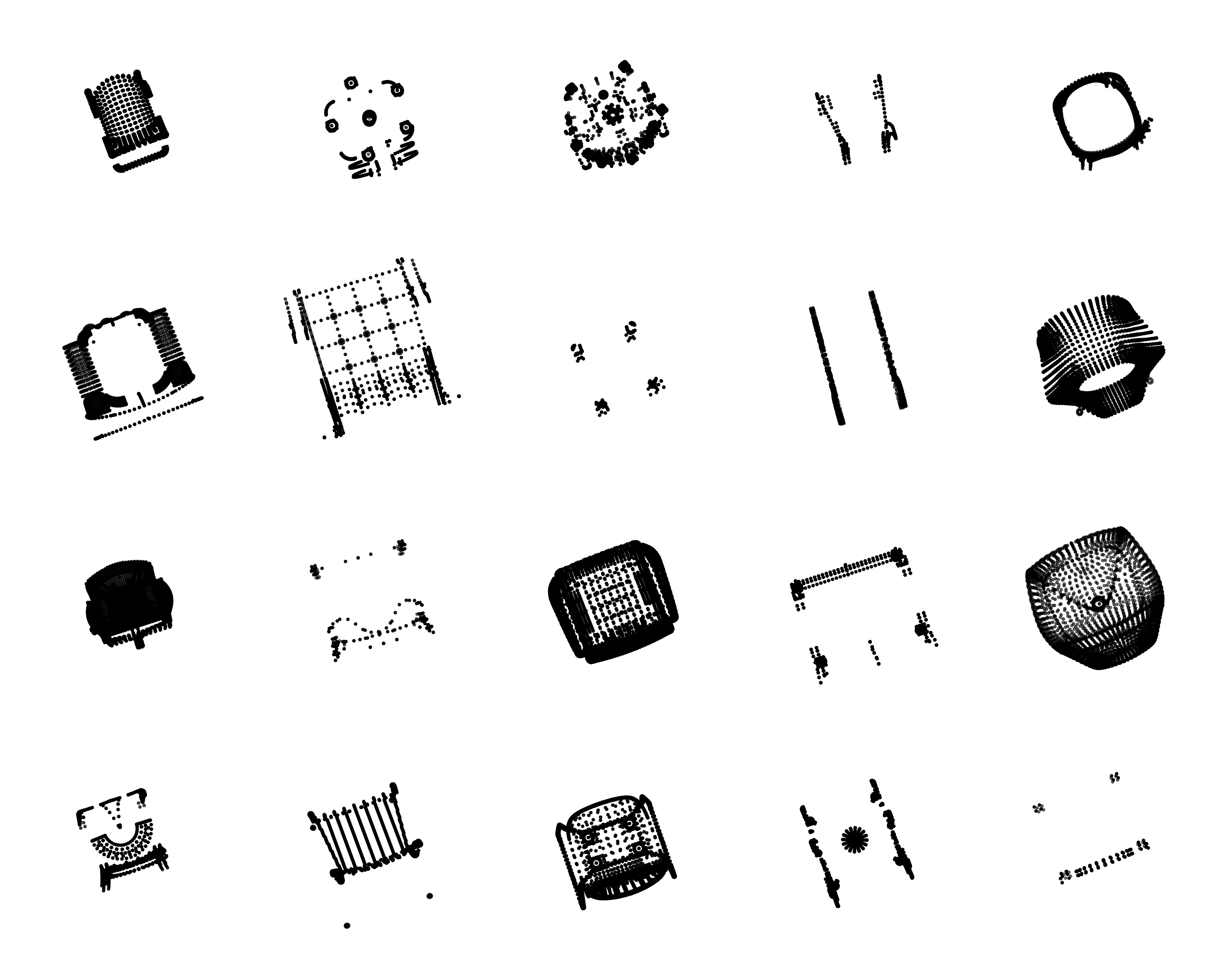}
     \caption{The canonicalized point clouds for the chair class.}
     \label{fig:chair}
\end{figure*}

\subsection{ShapeNet Part Segmentation}\label{app:shapenet_segmentation}

\begin{wraptable}{r}{43mm}
    \centering
    \vspace{-4.2 mm}
    \caption{
    Part segmentation performance on the ShapeNet part dataset. 
    The metric is reported with the average category mean IoU 
    Results of competing methods marked with * are taken from \cite{deng2021vector, kaba2023equivariance}.
    }
    \vspace{-1 mm}
    \resizebox{0.22\textwidth}{!}{
\begin{tabular}{lcccccccccr}
\toprule
      PointNet & 62.3$^*$\\ 
    DGCNN & 78.6$^*$   \\
     \midrule
    VN-PointNet & 72.8$^*$ \\
    VN-DGCNN &  81.4$^*$\\
    \midrule
    CN-PointNet  & 73.6{\scriptsize{$\pm$1.1}} $^*$\\
    CN-DGCNN &  78.5{\scriptsize{$\pm$0.9}} $^*$\\
    \midrule
    AC-PointNet & 76.0\scriptsize{$\pm$0.6} \\
    AC-DGCNN & 80.9\scriptsize{$\pm$0.7}\\ 
\bottomrule
\end{tabular}
} 
\vspace{-3mm}
\label{tab:segmentation}
\end{wraptable}
To expand our experimental study on point cloud data, we further conduct experiments on the ShapeNet part segmentation benchmark \citep{yi2016scalable}.
The dataset consists of 16 shape categories annotated with a total of 50 parts, where each category is labeled with between two and six parts.  Note that our prior maximization adaptive canonicalization method was naturally suited to classification tasks. Extending it to the segmentation task is not trivial, as the segmentation task requires predicting  a label for each point in the point cloud,  and applying prior maximization to each point would be computationally inefficient.
Therefore,  in order to adapt our adaptive canonicalization to the segmentation task, we consider the adaptive canonicalization with the minimal entropy prior summing over nodes. This modification preserves the spirit of the adaptive canonicalization while making it compatible with per-point prediction.
Table~\ref{tab:segmentation} reports the segmentation performance. For the PointNet backbone, we see that, similar to the point cloud classification task, the entropy-based adaptive canonicalization yields advantageous segmentation performance compared to equivariant architectures and standard canonicalization baselines. For the DGCNN backbone, our method attains performance comparable to equivariant architectures while outperforming existing canonicalization methods.
These results demonstrate that our approach has potential beyond classification. Note that one of the main contributions of our work is to construct continuous and symmetry-respecting models. In the entropy prior adaptive canonicalization, the continuity property is not straightforward. We plan to investigate the continuity properties of this adapted approach in future work.

\end{document}